\newcommand{\brown}{\raisebox{2pt}{\tikz{\draw[custom_brown, solid, line width=2.3pt](0,0) -- (5mm,0);}}}
\newcommand{\purple}{\raisebox{2pt}{\tikz{\draw[custom_purple, solid, line width=2.3pt](0,0) -- (5mm,0);}}}
\newcommand{\blue}{\raisebox{2pt}{\tikz{\draw[custom_blue, solid, line width=2.3pt](0,0) -- (5mm,0);}}}
\newcommand{\green}{\raisebox{2pt}{\tikz{\draw[custom_green, dashed, line width=2.3pt](0,0) -- (5mm,0);}}}
\newcommand{\red}{\raisebox{2pt}{\tikz{\draw[custom_red, dashed, line width=2.3pt](0,0) -- (5mm,0);}}}
\newcommand{\black}{\raisebox{2pt}{\tikz{\draw[black, dashed, line width=2.3pt](0,0) -- (5mm,0);}}}
\theoremstyle{plain}
\newtheorem{theorem}{Theorem}[section]
\newtheorem{prop}[theorem]{Proposition}
\newtheorem{lemma}[theorem]{Lemma}
\theoremstyle{definition}
\newtheorem{definition}[theorem]{Definition}
\newtheorem{assumption}[theorem]{Assumption}
\theoremstyle{remark}
\newtheorem{remark}[theorem]{Remark}
\newtheorem*{problem}{Problem}
\renewcommand{\hat}{\widehat}
\renewcommand{\tilde}{\widetilde}
\renewcommand{\bar}{\overline}
\newcommand{\R}{{\mathbb R}}
\newcommand{\E}{{\mathbb E}}
\renewcommand{\L}{{\mathcal L}}
\newcommand\norm[1]{\left\lVert#1\right\rVert}
\newcommand{\mc}{\mathcal}
\newcommand{\mbb}{\mathbb}
\definecolor{custom_blue}{HTML}{1F77B4}
\definecolor{black}{HTML}{000000}
\definecolor{custom_orange}{HTML}{FF7F0E}
\definecolor{custom_green}{HTML}{2CA02C}
\definecolor{custom_green}{HTML}{2CA02C}
\definecolor{custom_brown}{HTML}{8C564B}
\definecolor{custom_purple}{HTML}{9467BD}
\definecolor{custom_red}{HTML}{D62728}
\newtcbox{\highlight}[1][customblue]{
    on line,
    arc=0pt, 
    colback=#1!35!white,
    colframe=#1!35!white,
    boxsep=0pt,
    left=1pt,
    right=1pt,
    top=2pt,
    bottom=2pt,
    boxrule=0pt,
    nobeforeafter
}
\newtheoremstyle{explanationstyle} 
{3pt} 
{3pt} 
{\itshape} 
{} 
{\bfseries} 
{.} 
{.5em} 
{\thmname{#1}\thmnumber{ #2}\thmnote{ \textnormal{#3}}} 
\theoremstyle{explanationstyle}
\theoremstyle{plain}
\theoremstyle{definition}
\theoremstyle{remark}
\DeclareMathOperator*{\argmax}{arg\,max}
\newcommand{\cvarb}{\mathrm{CV a R}^\beta}
\newcommand{\cmark}{\ding{51}}%
\newcommand{\xmark}{\ding{55}}%
\newcommand\numberthis{\addtocounter{equation}{1}\tag{\theequation}}
\newcommand{\inbrace}[1]{\left\{#1\right\}}
\newcommand{\inparen}[1]{\left(#1\right)}
\newcommand{\insquare}[1]{\left[#1\right]}
\newcommand{\inangle}[1]{\left\langle#1\right\rangle}
\let\norm\relax
\newcommand{\norm}[1]{\ensuremath{\left\lVert #1 \right\rVert}}
\definecolor{navy}{RGB}{0, 0, 128}
\title{Risk-Averse Constrained Reinforcement Learning\\with Optimized Certainty Equivalents}
\author{%
  Jane H. Lee\\
  Department of Computer Science\\
  Yale University\\
  \texttt{jane.h.lee@yale.edu} \\
  \And
  Baturay Saglam \\
  Department of Electrical Engineering \\
  Yale University \\
  \texttt{baturay.saglam@yale.edu} \\
  \AND
  Spyridon Pougkakiotis \\
  Department of Mathematics \\
  King's College London \\
  \texttt{spyridon.pougkakiotis@kcl.ac.uk} \\
  \And
  Amin Karbasi\thanks{Work done while affiliated with Yale University.} \\
  Foundation AI \\
  Cisco Systems Inc. \\
  \texttt{karbasi@cisco.com} \\
  \And
  Dionysis Kalogerias \\
  Department of Electrical Engineering \\
  Yale University \\
  \texttt{dionysis.kalogerias@yale.edu} \\
}
\begin{document}

\maketitle
\begin{abstract}
  Constrained optimization provides a common framework for dealing with conflicting objectives in reinforcement learning (RL). In most of these settings, the objectives (and constraints) are expressed though the expected accumulated reward. However, this formulation neglects risky or even possibly catastrophic events at the tails of the reward distribution, and is often insufficient for high-stakes applications in which the risk involved in outliers is critical. In this work, we propose a framework for risk-aware constrained RL, which exhibits per-stage robustness properties jointly in reward values and time using optimized certainty equivalents (OCEs). 
  Our framework ensures an exact equivalent to the original constrained problem within a parameterized strong Lagrangian duality framework under appropriate constraint qualifications, and yields a simple algorithmic recipe which can be wrapped around standard RL solvers, such as PPO. Lastly, we establish the convergence of the proposed algorithm under common assumptions, and verify the risk-aware properties of our approach through several numerical experiments.
\end{abstract}

\section{Introduction}

Autonomous agents are often used in settings where they must handle several conflicting requirements while maximizing a main objective, such as navigating a maze without hitting any walls. 
These conflicting requirements are often modeled with constrained reinforcement learning (RL) (see \citet{sutton_rl_textbook}) and are either solved as a multi-objective problem with conflicting requirements controlled by weights, chosen manually or through hyperparameter tuning (e.g., see \citet{borkar_actor_critic, constr_policy_opt, multi_criterion_rl, reward_constr_policy_opt, random_search_static_linear_rl}), or by using primal-dual methods which 
optimize over the weights \cite{reward_constr_policy_opt, online_actor_critic_alg_constr_rl}. Others have approached constrained RL through reward-free techniques, e.g., \cite{pmlr-v162-miryoosefi22a}, or strategic exploration \cite{pmlr-v139-yu21b}. The work of \citet{constrained_rl_zero_duality_gap} has given theoretical support to primal-dual approaches by proving that, despite their inherent nonconvexity, constrained RL models exhibit zero duality gap under mild assumptions. However, in the standard (risk-neutral) setting (including that of \citet{constrained_rl_zero_duality_gap}), the objectives and constraints are often formulated as expected values of a (discounted) sum of rewards. Nonetheless, in many practical scenarios, especially in the context of safety \cite{ai_safety_gridworlds} and other high-stakes applications, the standard  expectation may not be strict enough to describe the desired behavior. For example, instead of maximizing the on-average return (e.g., of an investment), one may want to learn policies that mitigate risk (e.g., of losing money). 

\paragraph{Stochastic Optimization with Risk Measures}
Risk measures are often used to quantify this risk in problems with uncertain (random) outcomes (see, e.g., \citet{stoch_programming_text_shapiro}). More broadly, stochastic optimization with risk measures in the objectives and/or constraints has been studied in the online setting \cite{stoch_primal_dual_cvar}, bandits \cite{pmlr-v89-cardoso19a, NIPS2012_83f25503,seq_dec_coherent_risk}, statistical learning \cite{bedi2020nonparametric,svm_convex_risk_func,JMLR:v18:15-566,Vitt2018RiskAverseC}, non-convex resource allocation problems \cite{risk_constrained_resource_alloc}, and others \cite{Jiang2018554, approx_risk_submodular}. These methods and analyses are not readily applicable in our setting either due to non-convexity/concavity in the objective and constraints or due to the fact that the risk envelope for our problem has dependence on the policy $\pi$. 

\paragraph{Risk-Averse/Risk-Aware RL} 
In reinforcement learning specifically, there is a deep literature on risk-averse methods~\cite{parallel_nonstat_direct_policy, 8247278,NEURIPS2020_fdc42b6b,geibel2005risk,mihatsch2002risk,PrashanthL2018RiskSensitiveRL,2023_wang_near_minimax_cvar,2023_xu_regret_recursive_oce,wang2025reductionsapproachrisksensitivereinforcement} with many applications in the area of robust control~\cite{10.3389/frobt.2021.617839,lam2023riskaware}. While the most conservative approach consists of learning policies robust to worst-case scenarios~\cite{abdullah2019wasserstein, risk_aware_q_learning_mdp}, we often prefer to take into account a set of events with significant probability. To do this,~\citet{huang2021convergence,NIPS2015_024d7f84} consider the policy gradient method under coherent risk measures, where the risk measure replaces the expectation over the discounted sum of rewards. The work of \citet{risk_contr_rl_percentile_risk} handles similar CVaR risk objectives with CVaR risk constraints by utilizing Lagrangian relaxation. 
\citet{2023_xu_regret_recursive_oce,wang2025reductionsapproachrisksensitivereinforcement} study risk-sensitive RL with OCEs.
Recently, \citet{bonetti_risk_averse_rl_coherent_risk} propose an alternative risk-aware framework for (unconstrained) reinforcement learning which places the risk measure in the occupancy measure to derive what the authors call the reward-based conditional value at risk (RCVaR) and mean-mean absolute deviation (Mean-RMAD), in contrast to the standard  return-based formulations (like that of \cite{risk_contr_rl_percentile_risk,huang2021convergence,NIPS2015_024d7f84}). 

\paragraph{Contributions} This work approaches risk-aware constrained RL through employing reward-based risk measures in both the objective and the constraints. Our contributions are as follows:
\begin{itemize}[noitemsep]
    \vspace{-5bp}
    \item We extend the setting of \citet{bonetti_risk_averse_rl_coherent_risk} and, to the best of our knowledge, our work is the first to handle reward-based constraints, covering a large class of risk measures (OCEs);
    \item We establish a parameterized strong duality relation resulting in a computationally tractable partial Lagrangian relaxation (which is exact under certain constraint qualifications);
    \item We propose an online algorithm which is modular, offering the user flexibility in modeling (allowing for a mix of risk-neutral or risk-averse objectives and/or constraints) and implementation (enabling the use of existing RL algorithms as a black-box). By rigorously reducing the underlying problem to a certain instance of stochastic minimax optimization, we then establish convergence under common assumptions.
    \item We demonstrate practical usefulness of our approach in extensive numerical experiments on standard benchmarks, showcasing its effectiveness in reducing risk constraint violations and improving stochastic stability through explicit risk management.\footnote{We provide the code at \url{https://github.com/baturaysaglam/risk-averse-constrained-RL}.}
    \vspace{-5bp}
\end{itemize}
The setting of \citet{bonetti_risk_averse_rl_coherent_risk} is closely related to this work but is restricted to the unconstrained case, {while our setting handles reward-based risk constraints as well}. Our algorithm differs from the methodology of \cite{bonetti_risk_averse_rl_coherent_risk}, which is based on a block-coordinate descent scheme, the analysis of which does not readily extend to the type of optimization problems arising in the constrained setting. Further, \cite{bonetti_risk_averse_rl_coherent_risk} requires exact policy solvers but we work with inexact ones (cf. Assumption \ref{assum:local_opt}). Compared to other risk-averse constrained RL methods, our partial Lagrangian relaxation is exact under a certain constraint qualification, whereas no meaningful (let alone exact) relation between the original primal problem and the employed Lagrangian relaxation has been shown in other works (e.g.,  \cite{risk_contr_rl_percentile_risk}).

\section{Background}

\paragraph{Notation} 
Given $m \in \mathbb{N}$, we let $[m] \coloneqq \{1,\ldots,m\}$. Given $x \in \mathbb{R}$, we write $(x)_+ \equiv \max\{x,0\}$. 
We denote by $\mathcal{P(S)}$ the space of probability distributions over $\mathcal{S}$, equipped with its Borel $\sigma$-algebra. 

\subsection{Risk Measures}

\par In the context of stochastic optimization, is it well-known that expectations are unable to capture ``risky" events (related to statistical variability, dispersion, or fat-tail behavior of the randomness). \emph{Risk-averse optimization} aims to minimize the risk associated with such events, which is captured by certain functionals known as \emph{risk measures}. For an extended discussion on risk-averse optimization, we refer the reader to \citet[Chapter 6]{stoch_programming_text_shapiro}.


\paragraph{Optimized Certainty Equivalents (OCEs)} Among different classes of risk measures, in this work we are interested in those which are in \emph{infimal convolution} form, i.e., $\rho(Z) = \inf_t \{ t + \mathbb{E}\left[g(Z-t)\right] \},$ assuming that $Z$ represents losses. A notable example is the so-called \emph{Conditional Value-at-Risk}
at level $\beta$ (denoted by $\cvarb(Z)$), which is retrieved by setting $g(\cdot) = \frac{1}{\beta}(\cdot)_+$, for some $\beta \in (0,1)$. 

If $Z$ represents rewards (as often in RL), we consider risk measures in the \emph{supremal convolution} form, i.e., $\tilde{\rho}(Z) = \sup_t \{t + \mathbb{E}\left[g(Z-t)\right] \}$. For example, we can consider the reflected $-\cvarb(-Z)$ (for reward maximization) by setting $g(u) = -\frac{1}{\beta}(-u)_+$. In this notation, $g$ is called a \emph{utility function}. Under certain conditions on the utility (e.g., satisfied by $g(u) = -(-u)_+$), these risk measures are known as \emph{optimized certainty equivalents} (OCEs) (see \citet[Definition 2.2]{ben-tal_teboulle_oce_07}, where the reader can find many practical examples and properties). The class of OCEs is very rich (including many risk measures of practical value) and satisfies several important properties. 


\subsection{Risk-Neutral RL}
Let $\tau \in \mathbb{N} \cup \{0\}$ denote the time instant and let $\mathcal{S} \subset \R^n$ and $\mathcal{A} \subset \R^d$ be compact sets describing the possible states and actions of an agent.
Reinforcement learning is often modeled as a Markovian dynamical system where given a state and action pair at time $\tau$, say $(s_\tau, a_\tau)$, the next state distribution only depends on the current state-action pair (and is independent of $(s_{< \tau}, a_{< \tau})$). The agent chooses actions at each time instant using a policy $\pi \in \mathcal{P}(\mathcal{S})$, and the action taken by the agent results in rewards defined by the uniformly bounded function $r: \mathcal{S} \times \mathcal{A} \rightarrow \R$.
We next describe the standard  RL objective, that is, to maximize the (discounted, infinite) expected sum of rewards.

\begin{problem}[\textbf{Primal RL Formulation}]
\begin{equation}
    \begin{aligned}
        \hspace{-2bp}V^* \hspace{-1bp}\coloneqq \hspace{-1bp}
        &\sup_{\pi \in \mathcal{P(S)}} \Bigg\{V(\pi) \hspace{-1bp}\coloneqq\hspace{-1bp} \E \left[ \sum_{\tau=0}^\infty \gamma^\tau r(s_\tau^\pi, a_\tau^\pi) \right]\hspace{-1.5bp}\Bigg\} \hspace{-0.5bp},\hspace{-2bp} \label{eq:classic_rl} 
    \end{aligned}
\end{equation}
\end{problem}
where $(s_{\tau}^{\pi},a_{\tau}^{\pi})$ denotes the state-action vector evolving under policy $\pi$.
It is well-known that the above can be reformulated in terms of the (discounted) \emph{occupancy (or occupation) measure} of a policy, defined as the discounted mixture $\mathrm{d}\nu^\pi(\cdot, \bullet) = (1-\gamma)\sum_{\tau=0}^\infty \gamma^\tau \mathrm{d} p_\pi^\tau(\cdot, \bullet)$,
where $p_\pi^\tau$ denotes the Borel probability measure induced by vector $(s_\tau^\pi,a_\tau^\pi)$.
Let $\mathcal{R}$ denote the convex and weakly compact set of all occupation measures \cite{Borkar_convex_analytic_journal} induced by the policies $\pi \in \mathcal{P(S)}$.
We can then rewrite the RL problem as follows.
\begin{problem}[\textbf{Convex Analytic Dual Form\footnote{This notion of casting dynamic optimization problems into abstract ``static'' optimization problems over a closed convex set of measures (as above) is referred to as the \textit{convex analytic} approach by \citet{Borkar2002_convex_analytic_mdp}. Thus, we will refer to these equivalent problems \eqref{eq:classic_rl} and \eqref{eq:occupation_rl} as the ``primal'' and ``convex analytic dual'' formulations (not to be confused with duality in the Lagrangian sense) throughout.}}]
\begin{equation}
    \begin{aligned}
        V^* \coloneqq &\sup_{\nu^\pi \in \mathcal{R}} \bigg\{V(\nu^\pi) \coloneqq \frac{1}{1-\gamma} \cdot \E_{\nu^\pi} [r(s,a)]\bigg\}. \label{eq:occupation_rl}
    \end{aligned}
\end{equation}
\end{problem}
It is known that the occupation measure $\nu^\pi$ is in a one-to-one relationship with a policy $\pi$, i.e., if two policies have the same occupation measure, they must be the same (\citet{sutton_rl_textbook}).


\begin{table*}[!t]
\vspace{-8bp}
\caption{Comparison of the reward-based objective and the standard RL objective for general OCEs.}
\vspace{-4bp}
\label{table:cvar_primal_conv_dual}
\vskip 0.15in
\begin{center}
\begin{small}
\begin{sc}
\begin{tabular}{@{}c lcc@{}}
\toprule
\multicolumn{1}{l}{} & Formulation & Risk-Averse (Reward-Based) & Risk-Neutral \\ 
\midrule
& {Primal} 
& $\displaystyle{\sup_{\pi, t} ~\E\left[ \sum_{\tau=0}^\infty \gamma^\tau \left( t + g(r(s_{\tau}^{\pi},a_{\tau}^{\pi}) - t) \right) \right]}$ 
& $\displaystyle{\sup_\pi \E\left[ \sum_{\tau=0}^\infty \gamma^\tau r(s_{\tau}^{\pi},a_{\tau}^{\pi}) \right]}$  \\
\midrule
& Dual 
& $\displaystyle{\sup_{\nu^\pi, t} ~\frac{1}{1-\gamma} \E_{\nu^\pi}\left[ t + g(r(s,a) - t) \right]}$
& $\displaystyle{\sup_{\nu^\pi} \frac{1}{1-\gamma} \E_{\nu^\pi}[r(s,a)]}$  \\
\bottomrule
\end{tabular}
\end{sc}
\end{small}
\end{center}
\vskip -0.1in
\end{table*}

\subsection{Risk-Averse RL Formulations}

There are several different frameworks for modeling ``risk-awareness'' and ``safety'' in RL. Existing works which explore risk-averse RL formulations have often taken the approach of replacing the expectations in the ``primal'' formulation \eqref{eq:classic_rl} with risk measures, such as in the studies of \citet{huang2021convergence,NIPS2015_024d7f84, risk_contr_rl_percentile_risk}, among others:
\begin{equation}
    \begin{aligned}
        \sup_{\pi \in \mathcal{P(S)}} -\rho \left[ -\sum_{\tau=0}^\infty \gamma^\tau r(s_{\tau}^{\pi},a_{\tau}^{\pi}) \right]. \label{eq:primal_risk_averse} 
    \end{aligned}
\end{equation} 
Alternatively, one could generalize the tower property of expectations and iteratively evaluate a risk measure at each decision stage (e.g., see \citet{ruszczynski2010risk}). 
A less common formulation --of interest herein-- \emph{applies a risk measure on the occupancy measure}, rather than the original probability space, as in \citet{bonetti_risk_averse_rl_coherent_risk}:
\begin{equation}
    \begin{aligned}
        R^* &= \sup_{\nu^\pi \in \mathcal{R}} \frac{1}{1-\gamma} \cdot -\rho_{\nu^\pi(s,a)} (-r(s,a)), \label{eq:risk_aware_gen_rl}
    \end{aligned}
\end{equation}
where $\rho: \mathcal{L}_1(\nu^\pi, \R) \rightarrow \R$ is a finite-valued risk measure.
This choice \eqref{eq:risk_aware_gen_rl} enforces robustness over both space and time, capturing \textit{per-stage risk} in contrast to \eqref{eq:primal_risk_averse}; for details see Appendix \ref{appendix:robustness}.

\paragraph{OCE Formulation} By utilizing the relationship between \eqref{eq:classic_rl} and \eqref{eq:occupation_rl} for risk measures induced by OCEs with a utility $g$, given that the reward is bounded, we obtain\footnote{Going forward, the expectation or risk without subscripts is taken over the initial state and sample path distribution induced by the state transitions of the system and the policy $\pi$.} 
\begin{align*}
    \frac{1}{1-\gamma} \sup_{\nu^\pi} \sup_t \E_{\nu^\pi}\left[ t + g(r(s,a) - t) \right] ~\iff~ \sup_t \sup_\pi \E\left[ \sum_{\tau=0}^\infty \gamma^\tau \left( t + g(r(s_{\tau}^{\pi},a_{\tau}^{\pi}) - t) \right) \right] \,.
\end{align*}

This relationship is summarized in Table \ref{table:cvar_primal_conv_dual}. As one can see, the risk-averse problem involves a maximization over an additional variable $t$ which depends on $\pi$ (in this sequence). Importantly, given a fixed $t$, maximization over $\pi$ resembles a problem of the form of \eqref{eq:classic_rl} with a modified reward function. Thus one can take advantage of several existing algorithms which solve \eqref{eq:classic_rl}. This has been explored in the unconstrained setting of \citet[Algorithm 2]{bonetti_risk_averse_rl_coherent_risk}.
%

\subsection{Constraints}
Given the discussion in the previous section
for risk-aware objectives, we can similarly describe constraints also of this form. 
It is common to address solving constrained problems through Lagrangian relaxation, which is amenable to numerical optimization, but (generally) only offers an approximate solution to the original constrained problem. We nonetheless show that the (partial) Lagrangian relaxation employed in this paper is \emph{exact} under certain constraint qualifications.

Here, we highlight that \citet{bonetti_risk_averse_rl_coherent_risk} do \textit{not} consider the {constrained RL setting} (which requires a dedicated analysis of the relation between the Lagrangian relaxation and the original constrained problem), while others using return-based risk objectives and constraints in the convex analytic ``primal'' space (e.g., \citet{risk_contr_rl_percentile_risk}) are not able to show any meaningful relation between the primal problem and their employed Lagrangian relaxation. 

\section{Main Results (Risk Constrained Learning)}\label{sec:main}
The framework we propose has favorable robustness properties, better capturing per-stage risk (see Appendix \ref{appendix:robustness}) for applications which are also time-sensitive. Now, we: (1) show that this formulation allows us to solve the original constrained problem through a partial Lagrangian relaxation which is practically implementable and (2) reduce the problem to an instance of stochastic minimax optimization which lends itself to non-asymptotic convergence analysis. 

\subsection{Problem Description and Equivalent Reformulations}
To avoid generalities and to facilitate exposition, hereafter we exclusively consider the $\cvarb$. However, all results presented 
hold for general OCEs.
We also use the same $\beta$ for all constraints and the objective, but these need not be the same. We first write a constrained version of the problem 
\eqref{eq:risk_aware_gen_rl}:
\begin{align}\label{eq:constr_rl_cvarb_implicit}
     &\sup_{\nu^\pi \in \mathcal{R}} \frac{1}{1-\gamma} \cdot -\cvarb_{\nu^\pi}(-r_0(s,a)) \ \ \ \textnormal{s.t.}  \ \ \frac{1}{1-\gamma} \cdot -\cvarb_{\nu^\pi}(-r_i(s,a)) \geq {c}_i, \ \forall i \in [m]\,,  
\end{align}
or, equivalently, as:
\begin{align}\label{eq:constr_rl_cvarb}
     &\sup_{\pi \in \mathcal{P(S)}}\sup_{t_0 \in \R} \E \left[ \sum_{\tau=0}^\infty \gamma^\tau \left(r_0'(s_{\tau}^{\pi},a_{\tau}^{\pi},t_0) \right) \right] \ \ \ \textnormal{s.t.}  \ \ \sup_{t_i \in \R} \E \left[ \sum_{\tau=0}^\infty \gamma^\tau \left(r_i'(s_{\tau}^{\pi},a_{\tau}^{\pi},t_i) \right) \right] \geq c_i, \ \forall i \in [m]\,, 
\end{align}
\noindent where $r_i'(s,a,t) \coloneqq t - \frac{1}{\beta} (t - r_i(s,a))_+$, and $r_i: \mathcal{S} \times \mathcal{A} \rightarrow \R$ are the reward functions for $i \in \{0\}\cup[m]$. Note that $r_i'$ is written specifically for the $\cvarb$ here, but for general OCEs we simply replace $-\frac{1}{\beta}(-~\cdot)_+$ with $g(\cdot)$. Recall that, by standard convention, $r_i$ is bounded for all $i$.

\begin{remark} \label{remark: bounded t}
    Consider any OCE, and assume that $r_i$ are bounded for all $i \in \{0\}\cup[m].$ Then, by \citet[Proposition 2.1]{ben-tal_teboulle_oce_07}, we have that
   \begin{align}
    &\sup_{t \in \mathbb{R}} \E\left[ \sum_{\tau=0}^\infty \gamma^\tau \left( t + g(r_i(s_{\tau}^{\pi},a_{\tau}^{\pi}) - t) \right) \right] \nonumber
    = \sup_{t \in \mathcal{T}_i} \E\left[ \sum_{\tau=0}^\infty \gamma^\tau \left( t + g(r_i(s_{\tau}^{\pi},a_{\tau}^{\pi}) - t) \right) \right] \nonumber,
\end{align}
\noindent where $\mathcal{T}_i$ is a bounded interval containing the uniformly smallest and largest values of the reward $r_i(s_\tau,a_\tau)$. Thus, we can (equivalently) cast problem \eqref{eq:constr_rl_cvarb}, by restricting $t \coloneqq [t_0,t_1,\ldots,t_m] \in \mathcal{T}$, with $\mathcal{T} \subset \mathbb{R}^{m+1}$ some convex and compact set. This will be utilized throughout.
\end{remark}

\begin{lemma}\label{lem:move_sup_obj}
Problem \eqref{eq:constr_rl_cvarb} is equivalent to 
    \begin{align}\label{eq:constr_rl_cvarb_ver2}
         \sup_{\substack{\pi \in \mathcal{P(S)} \\ t\in \mathcal{T}}} &\E \left[ \sum_{\tau=0}^\infty \gamma^\tau \left( r_0'(s_{\tau}^{\pi},a_{\tau}^{\pi},t_0) \right) \right]
        ~\textnormal{s.t. }~ \E \left[ \sum_{\tau=0}^\infty \gamma^\tau \left(r_i'(s_{\tau}^{\pi},a_{\tau}^{\pi},t_i) \right) \right] \geq c_i,\ \forall i \in [m]\,.   
    \end{align}
\end{lemma}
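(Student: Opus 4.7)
The two problems differ only in how the suprema over the auxiliary variables $t_i$ are arranged: in \eqref{eq:constr_rl_cvarb}, the sup over $t_0$ sits inside the objective and each sup over $t_i$ sits inside its own constraint, whereas in \eqref{eq:constr_rl_cvarb_ver2} all of them are pulled out into a single joint sup together with $\pi$. I would prove equivalence by a standard two-sided inequality argument on the optimal values, whose one nontrivial ingredient is attainment of each inner sup $\sup_{t_i} F_i(\pi, t_i)$, where $F_i(\pi, t_i) \coloneqq \E[\sum_{\tau=0}^\infty \gamma^\tau r_i'(s_\tau^\pi, a_\tau^\pi, t_i)]$.

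First, I would handle the easy direction, namely that the value of \eqref{eq:constr_rl_cvarb_ver2} is no larger than that of \eqref{eq:constr_rl_cvarb}. Given any feasible $(\pi, t)$ for \eqref{eq:constr_rl_cvarb_ver2}, the policy $\pi$ is feasible for \eqref{eq:constr_rl_cvarb} since $\sup_{t_i' \in \mathcal{T}_i} F_i(\pi, t_i') \geq F_i(\pi, t_i) \geq c_i$ for every $i \in [m]$, and moreover the joint objective $F_0(\pi, t_0)$ is bounded above by $\sup_{t_0' \in \mathcal{T}_0} F_0(\pi, t_0')$, which is the objective of \eqref{eq:constr_rl_cvarb} at $\pi$. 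Taking the supremum over $(\pi, t)$ yields one direction of the inequality.

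For the reverse direction, I would fix any feasible $\pi$ for \eqref{eq:constr_rl_cvarb} and construct a vector $t^* = (t_0^*, \ldots, t_m^*) \in \mathcal{T}$ making $(\pi, t^*)$ feasible for \eqref{eq:constr_rl_cvarb_ver2} with at least the same objective value. By Remark \ref{remark: bounded t}, each sup over $t_i$ can be taken over the compact interval $\mathcal{T}_i$. Next, note that $t_i \mapsto r_i'(s, a, t_i) = t_i - \frac{1}{\beta}(t_i - r_i(s,a))_+$ (and, more generally for an OCE, $t_i + g(r_i(s,a) - t_i)$) is continuous and uniformly bounded on $\mathcal{S} \times \mathcal{A} \times \mathcal{T}_i$, since $r_i$ is bounded and the utility $g$ is continuous. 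By dominated convergence (together with the geometric decay $\gamma^\tau$), $F_i(\pi, \cdot)$ is continuous on the compact set $\mathcal{T}_i$, so a maximizer $t_i^* \in \mathcal{T}_i$ exists. Hence $F_i(\pi, t_i^*) = \sup_{t_i \in \mathcal{T}_i} F_i(\pi, t_i) \geq c_i$ for each $i \in [m]$, and likewise $F_0(\pi, t_0^*) = \sup_{t_0 \in \mathcal{T}_0} F_0(\pi, t_0)$, so $(\pi, t^*)$ attains the objective value of $\pi$ in \eqref{eq:constr_rl_cvarb}. Taking the supremum over feasible $\pi$ completes the matching inequality.

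The only subtle point is the attainment step; without it, one would need an $\varepsilon$-approximation to handle the edge case where $\sup_{t_i} F_i(\pi, t_i) = c_i$ with the sup unattained. This subtlety is entirely dissolved by Remark \ref{remark: bounded t} combined with the continuity and boundedness of $r_i'$ in $t_i$, so the equivalence is exact rather than asymptotic.
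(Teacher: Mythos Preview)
Your proposal is correct and follows essentially the same two-sided feasibility argument as the paper's proof (which works in the occupancy-measure form but is otherwise identical): show that any feasible $(\pi,t)$ for \eqref{eq:constr_rl_cvarb_ver2} yields a feasible $\pi$ for \eqref{eq:constr_rl_cvarb} with at least as large objective, and conversely that any feasible $\pi$ for \eqref{eq:constr_rl_cvarb} can be paired with maximizers $t_i^*$ to produce a feasible point for \eqref{eq:constr_rl_cvarb_ver2}. You are in fact more careful than the paper about justifying the attainment of the $\arg\max$ via compactness of $\mathcal{T}_i$ and continuity of $F_i(\pi,\cdot)$, whereas the paper simply writes $\arg\max$ without comment.
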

The proof is based on \citet{risk_contr_rl_percentile_risk}, and given in Appendix \ref{appendix:proof_move_sup_obj}. Although it may seem somewhat redundant, 
\eqref{eq:constr_rl_cvarb_ver2} is a maximization problem jointly over the variables $\pi, t$, subject to functional inequality constraints.
Thus, the Lagrangian associated to \eqref{eq:constr_rl_cvarb_ver2} reads $\L(\pi, t, \lambda) \coloneqq \E[\hat{\L}(\pi, t, \lambda)]$, with
\begin{align*}
    &\hat{\L}(\pi, t, \lambda) \coloneqq  \sum_{\tau=0}^\infty \gamma^\tau \bigg[\left( r_0'(s_{\tau}^{\pi},a_{\tau}^{\pi},t_0) \right)  
    - \sum^m_{i=1}\lambda_i c_i + \sum^m_{i=1}\lambda_i \left(r_i'(s_{\tau}^{\pi},a_{\tau}^{\pi},t_i)\right)\bigg].
    \numberthis\label{eq:lagrangian__}
\end{align*}
\noindent Then, the primal problem is $P^* = \sup_{\pi, t} \inf_{\lambda \geq 0} \L(\pi, t, \lambda)$. Note that the above, for fixed $t, \lambda$, is the Lagrangian for a risk-neutral constrained problem (similar to those considered by \citet{constrained_rl_zero_duality_gap}), parameterized by $t$, which  controls risk-aversion (in harmony with the choice of $\beta$). 
\subsection{Partial Lagrangian Relaxation}

\par {Next, we derive a partial Lagrangian relaxation for \eqref{eq:constr_rl_cvarb_ver2} (which is exact under a constraint qualification; cf. Assumption \ref{assum:constraint qual}). In turn, this allows us to cast the problem in a stochastic minimax optimization framework and derive an effective online algorithm for its solutions
(cf. Section \ref{subsec:algo}).}
\paragraph{Partial Maximization Problem} Let us now consider problem \eqref{eq:constr_rl_cvarb_ver2} for fixed $t \in \mathcal{T}$, i.e., 
    \begin{align}\label{eq:constr_rl_cvarb_fixed_t}
         \sup_{\pi \in \mathcal{P(S)}} & \E \left[ \sum_{\tau=0}^\infty \gamma^\tau \left( r_0'(s_{\tau}^{\pi},a_{\tau}^{\pi},t_0) \right) \right] 
        ~\textnormal{s.t. }~  \E \left[ \sum_{\tau=0}^\infty \gamma^\tau \left( r_i'(s_{\tau}^{\pi},a_{\tau}^{\pi},t_i) \right) \right] \geq c_i,\ \forall i \in [m]\,. 
    \end{align}
This problem exhibits strong duality under usual conditions.

\begin{prop}\label{prop:dual}
    Let $r_i$ be bounded functions for all $i \in \{0\} \cup [m]$. Assume that Slater's condition\footnote{Slater's condition requires that there exists a strictly feasible policy.} holds for \eqref{eq:constr_rl_cvarb_fixed_t}. Then, \eqref{eq:constr_rl_cvarb_fixed_t} exhibits strong duality, and thus 
    \[ \sup_\pi \inf_{\lambda \geq 0} \L(\pi, t, \lambda) = \inf_{\lambda \geq 0} \sup_\pi \L(\pi, t, \lambda). \]
\end{prop}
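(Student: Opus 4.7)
The plan is to reduce, for each fixed $t \in \mathcal{T}$, the partial maximization problem \eqref{eq:constr_rl_cvarb_fixed_t} to a standard risk-neutral constrained RL problem with bounded (modified) rewards, and then invoke the zero duality gap result of \citet{constrained_rl_zero_duality_gap}.

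First, I would fix $t \in \mathcal{T}$ and define the modified rewards $\tilde{r}_i(s,a) \coloneqq r_i'(s,a,t_i) = t_i - \frac{1}{\beta}(t_i - r_i(s,a))_+$ for $i \in \{0\} \cup [m]$. Since each $r_i$ is uniformly bounded by hypothesis and $t$ lives in the bounded set $\mathcal{T}$ (cf.\ Remark \ref{remark: bounded t}), each $\tilde{r}_i$ is a uniformly bounded function of $(s,a)$. Under this substitution, \eqref{eq:constr_rl_cvarb_fixed_t} takes exactly the form of a (standard) risk-neutral constrained RL problem with bounded rewards, and the Lagrangian $\L(\pi,t,\lambda)$ from \eqref{eq:lagrangian__} collapses (at this fixed $t$) to the ordinary constrained-RL Lagrangian associated with the $\tilde{r}_i$.

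Next, I would leverage the one-to-one correspondence between policies $\pi$ and occupation measures $\nu^\pi \in \mathcal{R}$ to recast \eqref{eq:constr_rl_cvarb_fixed_t} in its convex analytic form, analogous to the transition from \eqref{eq:classic_rl} to \eqref{eq:occupation_rl}. In that form, the objective and each constraint become linear functionals of $\nu^\pi$, and the feasible set $\mathcal{R}$ is convex and weakly compact. Because the Lagrangian is linear in $\nu^\pi$ for fixed $(t,\lambda)$ and agrees with its policy-based counterpart under the bijection $\pi \leftrightarrow \nu^\pi$, the primal and dual values are preserved under the change of variables; so it suffices to establish strong duality in the occupation-measure formulation.

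At this point I would directly cite the zero duality gap theorem of \citet{constrained_rl_zero_duality_gap}, whose hypotheses (bounded rewards, Slater's condition for \eqref{eq:constr_rl_cvarb_fixed_t}) are exactly what we have. This immediately gives $\sup_\pi \inf_{\lambda \geq 0} \L(\pi,t,\lambda) = \inf_{\lambda \geq 0} \sup_\pi \L(\pi,t,\lambda)$, which is the claim. The main technical point to be careful about is that Slater's condition must hold for the \emph{modified} reward constraints at the chosen $t$; this is precisely what is being assumed in the statement of the proposition, so no additional work is required. The only subtle step, beyond invocation, is verifying that the $\sup$ and $\inf$ commute through the policy-to-occupation-measure bijection, which is straightforward since the Lagrangian is invariant under this bijection for each fixed $(t,\lambda)$.
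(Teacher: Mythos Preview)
Your proposal is correct and follows essentially the same approach as the paper: fix $t$, observe that the modified rewards $r_i'(s,a,t_i)$ are bounded (since $r_i$ is bounded and $t_i$ is fixed), recognize that \eqref{eq:constr_rl_cvarb_fixed_t} is then a standard risk-neutral constrained RL problem, and invoke \citet[Theorem~1]{constrained_rl_zero_duality_gap} under Slater's condition. The paper's proof is terser and applies the cited theorem directly in the policy formulation without the occupation-measure detour you describe; that extra step is harmless but unnecessary, since \citet{constrained_rl_zero_duality_gap} already establishes the result at the policy level.
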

\noindent The proof of Proposition \ref{prop:dual} is given in Appendix \ref{appendix:proof_dual_prop}.

\paragraph{Maximizing over $t$} From Remark \ref{remark: bounded t}, we know that $t \in \mathcal{T}$ and $\mathcal{T}$ is a convex and compact set (since we assume bounded rewards). Note that Proposition \ref{prop:dual} holds for each fixed $t \in \mathcal{T}$, assuming Slater's condition is satisfied at this point. To proceed, it suffices to introduce another constraint qualification, as follows. 


\begin{assumption}[Constraint Qualification] \label{assum:constraint qual}
   Let $t^*$ be an argument which attains the maximum over the variable $t = [t_0, t_1, \dots, t_m]$ in \eqref{eq:constr_rl_cvarb_ver2}. There is a (non-singleton) convex and compact set $\mathcal{I} \subset \mathcal{T}$, with $t^* \in \mathcal{I}$, such that Slater's condition holds for \eqref{eq:constr_rl_cvarb_fixed_t}, for every $t \in \mathcal{I}$. 
\end{assumption}

Under Assumption \ref{assum:constraint qual}, we have that
\begin{align*}
\sup_{t \in \mathcal{I}}\sup_\pi \inf_{\lambda \geq 0} \L(\pi, t, \lambda) = \sup_{t \in \mathcal{I}} \inf_{\lambda \geq 0} \sup_\pi \L(\pi, t, \lambda). \label{eq:dual_prob_lagrangian}
\end{align*}

\paragraph{Parametrized Policies and Almost-zero Duality Gap}
In order to solve the (inner) policy optimization problem, we parametrize the policy by a vector $\theta \in \Theta \subset \mathbb{R}^p$, representing the coefficients of, say, a neural network, assuming that $\Theta$ is a compact set (which is the case in practice). We have shown that under Assumption \ref{assum:constraint qual}, the proposed partial Lagrangian relaxation is exact. We argue that this remains almost true (in the sense described below) even if we utilize parametrized policies. In what follows, we assume that $\pi_{\theta}$ is an $\epsilon$-universal parametrization of measures in $\mathcal{P}(\mathcal{S})$, according to \citet[Definition 1]{constrained_rl_zero_duality_gap}.
\begin{theorem}[Almost-zero Duality Gap] \label{thm: almost-zero duality gap}
Let Assumption \textnormal{\ref{assum:constraint qual}} hold, assume that $\pi_{\theta}$ is $\epsilon$-universal for measures in $\mathcal{P}(\mathcal{S})$, and that the policy-parametrized version of \eqref{eq:constr_rl_cvarb_ver2} is feasible. Then,
\begin{align*}
    &P^*(t^*) \coloneqq  \sup_{t \in \mathcal{I}}  \sup_{\pi}\inf_{\lambda \geq 0} \mathcal{L}(\pi,t,\lambda) ~\geq~  \sup_{t \in \mathcal{I}} \inf_{\lambda \geq 0}\sup_{\theta} \mathcal{L}(\pi_{\theta},t,\lambda) ~\geq~ P^*(t^*) -\mathcal{O}\left(\frac{\epsilon}{1-\gamma} \right).
\end{align*}
\end{theorem}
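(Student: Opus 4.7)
The plan is to split the claim into its two inequalities, handling the upper bound directly through Proposition~\ref{prop:dual} and the lower bound through a uniform-in-$t$ version of the parametrization argument of \citet{constrained_rl_zero_duality_gap}. For the upper inequality $P^*(t^*) \geq \sup_{t \in \mathcal{I}} \inf_{\lambda \geq 0} \sup_\theta \mathcal{L}(\pi_\theta, t, \lambda)$, I would fix an arbitrary $t \in \mathcal{I}$; since Assumption~\ref{assum:constraint qual} makes Slater's condition hold at $t$, Proposition~\ref{prop:dual} gives $\sup_\pi \inf_{\lambda \geq 0} \mathcal{L}(\pi,t,\lambda) = \inf_{\lambda \geq 0} \sup_\pi \mathcal{L}(\pi,t,\lambda)$. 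The image $\{\pi_\theta : \theta \in \Theta\}$ lies in $\mathcal{P}(\mathcal{S})$, so $\sup_\theta \mathcal{L}(\pi_\theta,t,\lambda) \leq \sup_\pi \mathcal{L}(\pi,t,\lambda)$ for every $\lambda \geq 0$; applying $\inf_{\lambda \geq 0}$ and then $\sup_{t \in \mathcal{I}}$ yields the first inequality.

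For the lower inequality, my plan is to apply a uniform-in-$t$ variant of the parametrized zero-duality-gap argument of \citet{constrained_rl_zero_duality_gap} to the ``inner'' constrained RL problem indexed by each $t \in \mathcal{I}$, whose rewards are the $t$-modified $r_i'(\cdot,\cdot,t_i) = t_i - \tfrac{1}{\beta}(t_i - r_i(\cdot,\cdot))_+$. The required hypotheses hold at every $t \in \mathcal{I}$: the modified rewards are uniformly bounded on $\mathcal{S}\times\mathcal{A}\times\mathcal{T}$, Slater's condition is Assumption~\ref{assum:constraint qual}, and $\pi_\theta$ is $\epsilon$-universal. Their argument then yields, pointwise in $t$,
\[
    \sup_\pi \inf_{\lambda \geq 0}\mathcal{L}(\pi,t,\lambda) - \inf_{\lambda \geq 0} \sup_\theta \mathcal{L}(\pi_\theta,t,\lambda) \;\leq\; c(t)\cdot \frac{\epsilon}{1-\gamma},
\]
with $c(t)$ absorbing the reward bound and the reciprocal Slater slack at $t$. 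Rearranging and taking $\sup_{t \in \mathcal{I}}$ on both sides then delivers the claimed bound, provided $c(t)$ is uniformly bounded over $\mathcal{I}$.

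The main obstacle is exactly this last uniformity. Uniform boundedness of $r_i'$ on $\mathcal{S}\times\mathcal{A}\times\mathcal{T}$ is immediate from compactness of $\mathcal{T}$ and boundedness of $r_i$, and controls both the Lagrangian values and the subgradient-type magnitudes in the Paternain~et~al.\ argument. The Slater slack is more delicate; I would handle it by noting that $(\cdot)_+$ is $1$-Lipschitz, so the per-constraint value maps $(t_i, \nu^\pi) \mapsto \E_{\nu^\pi}[r_i'(s,a,t_i)]/(1-\gamma)$ are jointly continuous, and combining this with weak compactness of $\mathcal{R}$ through a theorem-of-the-maximum argument makes the maximal slack function $s(t) \coloneqq \sup_{\nu^\pi}\min_i \{\E_{\nu^\pi}[r_i'(s,a,t_i)]/(1-\gamma) - c_i\}$ continuous in $t$. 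Since $s(t) > 0$ pointwise on $\mathcal{I}$ by Assumption~\ref{assum:constraint qual} and $\mathcal{I}$ is compact, $s$ is bounded away from $0$, giving a uniform bound $c(t) \leq C$ that absorbs into the $\mathcal{O}(\cdot)$ and closes the argument.
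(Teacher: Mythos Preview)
Your proposal is correct, and the upper-inequality argument is identical to the paper's. The route you take for the lower inequality is sound but strictly more laborious than what the paper does. Where you establish a \emph{uniform} bound $c(t)\le C$ over all $t\in\mathcal{I}$ via a Berge-type continuity-and-compactness argument on the Slater slack, the paper simply evaluates the pointwise Paternain~et~al.\ inequality at the single point $t^*\in\arg\max_{t\in\mathcal{I}}P^*(t)$: from $\inf_{\lambda\ge 0}\sup_\theta\mathcal{L}(\pi_\theta,t^*,\lambda)\ge P^*(t^*)-G(t^*)\,\epsilon/(1-\gamma)$ one immediately gets $\sup_{t\in\mathcal{I}}\inf_{\lambda\ge 0}\sup_\theta\mathcal{L}(\pi_\theta,t,\lambda)\ge P^*(t^*)-G(t^*)\,\epsilon/(1-\gamma)$, so only finiteness of the constant at $t^*$ is needed, not uniform control. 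Your approach buys you a uniform quantitative statement (the loss constant can be taken independent of which $t$ you compare at), but at the cost of invoking weak compactness of $\mathcal{R}$ and the theorem of the maximum; the paper's approach avoids that machinery entirely by exploiting that the outer $\sup_t$ lets you pick any convenient $t$, in particular $t^*$.
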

\noindent The proof is given in Appendix \ref{appendix: almost zero duality gap}. Theorem \ref{thm: almost-zero duality gap} states that solving the parameterized (partial) dual problem incurs negligible cost (depending on the universal parametrization error $\epsilon>0$) under mild assumptions and our usual constraint qualification (see Assumption \ref{assum:constraint qual}). 

\paragraph{The Practical Parametrized Partial Dual Problem}
\par In practice, we do not have access to the set $\mathcal{I}$ (assuming that Assumption \ref{assum:constraint qual} is satisfied), and thus, we may heuristically search over all $t \in \mathcal{T}$, noting that a reasonable initial guess can maximize our chances of landing in $\mathcal{I}$ (and thus solving the original constrained problem \eqref{eq:constr_rl_cvarb_ver2} (almost) exactly). Nonetheless, the proposed algorithmic framework does not rely on Assumption \ref{assum:constraint qual}. Indeed, in the absence of this, we recover a partial Lagrangian relaxation which yields an approximate solution to problem \eqref{eq:constr_rl_cvarb_ver2}. However, we highlight that, unlike Lagrangian relaxations considered in the literature, our approach is exact under Assumption \ref{assum:constraint qual}. 
 \par Additionally, from Proposition \ref{prop:dual}, we know that the optimal Lagrange multiplier is attained, assuming that Slater's condition holds for \eqref{eq:constr_rl_cvarb_ver2}.
 Thus, there is an optimal Lagrange multiplier associated to \eqref{eq:constr_rl_cvarb_ver2} within some sufficiently large convex and compact set $\Lambda \subset \mathbb{R}^m_+$.
 In practice, the size of this set can be adjusted dynamically, if necessary. Hence, in what follows, we restrict the minimization over $\lambda \in \Lambda$. In light of the previous discussion, we focus on the following parametrized partial dual formulation:
\begin{equation} \label{eqn:param_dual_prob_lagrangian}D_{\theta}^* \coloneqq \sup_{t \in \mathcal{T}} \inf_{\lambda \in \Lambda} \underbrace{\sup_{\theta} \mathcal{L}(\pi_{\theta},t,\lambda)}_{\text{blackbox RL}}.\end{equation}
\paragraph{A Wrapper for Blackbox Unconstrained RL Algorithms} Notice that $\L(\pi_{\theta}, t, \lambda)$ is a linear combination of the expected discounted sum of adjusted reward functions which take the original $r_i$ to $r'_i(s,a,t_i) = t_i - \frac{1}{\beta}(t_i - r_i(s,a))_+$ for $i = 0, 1, \dots, m$. Thus, the innermost problem in \eqref{eqn:param_dual_prob_lagrangian} reduces to an unconstrained RL problem with a linear combination of terms 
defined by reward functions $r'_i$, given $\lambda$ and $t$.  
Hence, we have a practical algorithm for solving \eqref{eqn:param_dual_prob_lagrangian} which, if solved exactly for a starting point $t \in \mathcal{I}$, is arbitrarily close to the true primal solution $P^*$ of  \eqref{eq:constr_rl_cvarb}, under Assumption \ref{assum:constraint qual}. In practice, this problem is non-convex so we may only hope to find a locally optimal solution. Convergence to such a solution is established next.

\subsection{Algorithm and Analysis}\label{subsec:algo}
\par Next, we present the algorithmic procedure for solving problem \eqref{eqn:param_dual_prob_lagrangian},
in Algorithm \ref{alg:main}. 
Consider a policy oracle that precisely solves the problem $ \sup_{\theta \in \Theta} \L(\pi_{\theta}, t, \lambda)$, such that for every $(t, \lambda) \in \mathcal{T} \times \Lambda$, it returns a policy $\pi_{\theta^*(t, \lambda)} \in \arg \max_{\theta \in \Theta} \L(\pi_{\theta}, t, \lambda)$. Using such an oracle, problem \eqref{eqn:param_dual_prob_lagrangian} reads
\begin{align}
    D_{\theta}^* = \sup_{t \in \mathcal{T}} \inf_{\lambda \in \Lambda}  \underbrace{\L(\pi_{\theta^*(t,\lambda)},  t, \lambda)}_{\coloneqq -f(t,\lambda)}= -\inf_{t \in \mathcal{T}} \sup_{\lambda \in \Lambda}  f(t,\lambda)\,.
    \label{eq:minimax_dual}
\end{align}  
Having access to a solver which returns $\pi_{\theta^*(t,\lambda)}$ exactly is difficult. Instead, we assume the availability of an inexact oracle, as defined next. We justify the generality of this assumption in Appendix \ref{appendix:disc_local_opt} to save space in the main body.

\begin{assumption}[Local Solutions]\label{assum:local_opt}
    For any $(t, \lambda) \in \mathcal{T} \times \Lambda$, let $\theta^\star(t, \lambda)$ be a maximum of $\mathcal{L}(\pi_\theta, t, \lambda)$ and $\theta^\dagger(t, \lambda)$ be returned by a generic RL algorithm. Then, there exists a $\delta > 0$ such that $\forall (t, \lambda) \in \mathcal{T} \times \Lambda$, $\hat{\nabla}_{t,\lambda}\mathcal{\hat{L}}(\pi_{\theta^\dagger(t, \lambda)}, t, \lambda)  = \hat{\nabla}_{t,\lambda}\mathcal{\hat{L}}(\pi_{\theta^*(t, \lambda)}, t, \lambda) + b(\theta^*, \theta^\dagger,t,\lambda)$, with $\| b(\theta^*,\theta^\dagger,t,\lambda) \| \leq \delta$. 
\end{assumption}

 In what follows, we show that \eqref{eqn:param_dual_prob_lagrangian} can be reduced to a stochastic minimax optimization problem satisfying appropriate assumptions, and thus we can obtain a non-asymptotic convergence rate for Algorithm \ref{alg:main}. 
 For notational convenience, we define the function $\Phi (t) \coloneqq \sup_{\lambda \in \Lambda} f(t,\lambda)$. First, we show that $f$ is Lipschitz with respect to $t\in\mathbb{R}^{m+1}$ uniformly over $\lambda \in \Lambda$.

\begin{algorithm}[tb]
\caption{Reward-Based SGDA with Risk Constraints}\label{alg:main}
\begin{algorithmic}[1]
\State \textbf{Input:} Bounded reward functions $r_i: \mathcal{S} \times \mathcal{A} \mapsto \R$, discount factor $\gamma \in (0,1)$, 
step sizes $\eta_\lambda, \eta_t > 0$, batch size $n$.
\State \textbf{Initialize:} $t^{(0)}$, $\lambda^{(0)}$
\For{each iteration $j = 1, 2, \dots, J$}
    \State Call the inexact oracle to obtain $\pi_{\theta^{(j)}}$ satisfying Assumption \ref{assum:local_opt}.
    \State Collect a batch of trajectories $\{ x_k \}_{k=1}^n$ using $\pi_{\theta^{(j)}}$ and compute sample (sub)gradients:
    \begin{center}
        \vspace{3bp}
        $\hat{\nabla}_{\lambda}\hat{\mathcal{L}}(\pi_{\theta^{(j)}}, t^{(j)}, \lambda^{(j)}),\quad \hat{\nabla}_{t}\hat{\mathcal{L}}(\pi_{\theta^{(j)}}, t^{(j)}, \lambda^{(j)})$.
        \vspace{1.5bp}
    \end{center}
    \State Update dual ($\lambda$) and auxiliary ($t$) variables:
    \begin{center}
    \vspace{1bp}
        $\lambda^{(j+1)} \leftarrow \Pi_{\Lambda}\left(\lambda^{(j)} - \eta_\lambda \hat{\nabla}_{\lambda}\hat{\mathcal{L}}(\pi_{\theta^{(j)}}, t^{(j)}, \lambda^{(j)})\right)$, \\
        $t^{(j+1)} \leftarrow \Pi_{\mathcal{T}} \left(t^{(j)} + \eta_t \hat{\nabla}_t \hat{\mathcal{L}}(\pi_{\theta^{(j)}}, t^{(j)}, \lambda^{(j)}) \right)$.
    \end{center}
\EndFor
\State \textbf{Return:} $\pi_{\theta^{(j^*)}}$, where we sample $j^* \sim \text{Unif}([J])$.
\end{algorithmic}
\end{algorithm}


\begin{lemma}\label{lemma:lipschitz}
    There exists a constant $C>0$ such that \begin{align*}
        \big| \textstyle{ \sup_\pi \mc{L}(\pi , t_1,\lambda) -  \sup_\pi \mc{L}(\pi , t_2,\lambda)} \big|\leq C \Vert t_1 -t_2 \Vert_2
    \end{align*} for all pairs $t_1 ,t_2\in\mathbb{R}^{m+1}$ and $\lambda \in \Lambda$.
\end{lemma}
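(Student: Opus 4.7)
}
The plan is to reduce the Lipschitz estimate on the value functional to a pointwise (in $(s,a)$) Lipschitz estimate on the modified rewards $r_i'(s,a,t_i) = t_i - \tfrac{1}{\beta}(t_i - r_i(s,a))_+$, and then absorb the remaining dependence on $\lambda$ using compactness of $\Lambda$ and the geometric series in $\gamma$.

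First I would use the elementary inequality $|\sup_\pi A(\pi) - \sup_\pi B(\pi)| \leq \sup_\pi |A(\pi) - B(\pi)|$ to move the difference inside the supremum over policies, leaving a bound on $|\mathcal{L}(\pi,t_1,\lambda) - \mathcal{L}(\pi,t_2,\lambda)|$ uniformly in $\pi$. Inside, the $c_i$ terms (which do not depend on $t$) cancel, so by the triangle inequality and Tonelli's theorem the difference is controlled by
\begin{align*}
\mathbb{E}\Bigg[\sum_{\tau=0}^{\infty}\gamma^{\tau}\bigg(|r_0'(s_\tau^\pi,a_\tau^\pi,t_{1,0})-r_0'(s_\tau^\pi,a_\tau^\pi,t_{2,0})|
+ \sum_{i=1}^{m}\lambda_i\, |r_i'(s_\tau^\pi,a_\tau^\pi,t_{1,i})-r_i'(s_\tau^\pi,a_\tau^\pi,t_{2,i})|\bigg)\Bigg].
\end{align*}

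The key pointwise fact is that for each fixed $(s,a)$, the map $t\mapsto t-\tfrac{1}{\beta}(t-r_i(s,a))_+$ is piecewise linear in $t$ with slopes $1$ (when $t\leq r_i(s,a)$) and $1-\tfrac{1}{\beta}$ (when $t>r_i(s,a)$). Hence it is globally Lipschitz with constant $L_g := \max\{1, \tfrac{1}{\beta}-1\}$, uniformly in $(s,a)$. More generally, for any OCE utility $g$ one only needs that $t\mapsto t+g(r-t)$ is Lipschitz in $t$ uniformly in $r$ over the bounded reward range, which is standard and should be cited or stated as a quick lemma.

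Combining the pointwise Lipschitz estimate with the previous display, the sum in $\tau$ gives a factor $\tfrac{1}{1-\gamma}$ (using that the Lipschitz bound is deterministic, independent of the path) and yields
\begin{align*}
|\mathcal{L}(\pi,t_1,\lambda)-\mathcal{L}(\pi,t_2,\lambda)| \;\leq\; \frac{L_g}{1-\gamma}\bigg(|t_{1,0}-t_{2,0}|+\sum_{i=1}^{m}\lambda_i\,|t_{1,i}-t_{2,i}|\bigg).
\end{align*}
Since $\Lambda\subset\mathbb{R}^{m}_{+}$ is compact, $\Lambda_{\max}:=\sup_{\lambda\in\Lambda}\|\lambda\|_{\infty}<\infty$. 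Using Cauchy--Schwarz (equivalently $\|\cdot\|_{1}\leq\sqrt{m{+}1}\,\|\cdot\|_{2}$) to convert the weighted $\ell_1$ bound to an $\ell_2$ bound in $t$ gives the claim with
\begin{align*}
C \;=\; \frac{L_g\,\sqrt{m+1}\,\max\{1,\Lambda_{\max}\}}{1-\gamma},
\end{align*}
which is independent of $\pi$ and $\lambda\in\Lambda$, completing the estimate.

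I do not expect a genuine obstacle; the argument is routine once one notices that (i) the supremum over $\pi$ can be pulled outside the absolute value, and (ii) the $t$-dependence only enters through the pointwise Lipschitz utility. The only subtle point is making sure the Lipschitz constant is uniform in $(s,a)$ and in $\lambda$, which is secured by boundedness of the rewards (hence of $L_g$) and compactness of $\Lambda$.
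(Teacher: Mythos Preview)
Your proposal is correct and follows essentially the same route as the paper: pull the supremum over $\pi$ outside the absolute value, bound the per-stage difference using the Lipschitzness of $t\mapsto t-\tfrac{1}{\beta}(t-r)_+$, sum the geometric series, and use compactness of $\Lambda$ together with Cauchy--Schwarz to obtain a uniform constant. The only minor difference is that your pointwise Lipschitz constant $L_g=\max\{1,\tfrac{1}{\beta}-1\}$ is sharper than the paper's $1+\tfrac{1}{\beta}$ (the paper applies the triangle inequality to the two summands separately rather than treating the piecewise-linear map as a whole), but this does not change the argument.
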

\noindent We prove Lemma \ref{lemma:lipschitz} (which readily applies to parametrized policies as well) in Appendix \ref{appendix:proof_lemma_sup_ineq}. Following~\citet{pmlr-v119-lin20a}, we analyze our method under the assumption of Lipschitz smoothness of the function $\mathcal{L}(\pi_{\theta^*(t,\lambda)},t,\lambda)$, for $\theta^*(t,\lambda) \in \arg\max_{\theta \in \Theta} \mathcal{L}(\pi_{\theta},t,\lambda)$, which we state below.
\begin{assumption}[Lipschitz Smoothness]\label{assump:3}
The function $f(t,\lambda) \equiv -\mc{L}(\pi_{\theta^*(t,\lambda)},t ,\lambda)$ is $\ell$-smooth over $\mathcal{T} \times \Lambda$, where $\theta^*(t,\lambda) \in \arg\max_{\theta \in \Theta} \mathcal{L}(\pi_{\theta},t,\lambda)$ is an arbitrary selection. 
\end{assumption}
\begin{remark}
    We note that Assumption \ref{assump:3} is not particularly restrictive in our setting, and it holds under several general conditions, as outlined in Appendix \ref{appendix:conditions_for_Lipschitz_smoothness}. Nonetheless, if not satisfied, we can still enforce this assumption by making a slight algorithmic adjustment involving the appropriate addition of small uniform noise.
    For a comprehensive discussion we again refer the reader to Appendix \ref{appendix:conditions_for_Lipschitz_smoothness}, where we also present an explicit formula for computing the (exact) sample gradients. 
\end{remark}

\par As appears in prior work~\cite{pmlr-v119-lin20a}, we make the standard (and, in our case, mild) assumption that the exact stochastic (sub)gradient oracles are unbiased and have at most $\sigma^2$ variance.
\begin{assumption}[(Sub)gradient oracles]\label{assump:1} For all $(t,\lambda) \in \mathcal{T}\times \Lambda$, we assume that
    \begin{align*}
        \E [\hat{\nabla}_{t,\lambda} \hat{\mathcal{L}}(\pi^*(t,\lambda),t,\lambda) + \nabla f (\lambda, t)]=0, \quad
        \E [\Vert  \hat{\nabla}_{t,\lambda} \hat{\mathcal{L}}(\pi^*(t,\lambda),t,\lambda) + \nabla f (\lambda, t) \Vert^2_2 ] \leq \sigma^2. 
    \end{align*}
\end{assumption}

Given the imposed assumptions, we now present the convergence guarantees of Algorithm \ref{alg:main}.  In the context of nonsmooth weakly convex optimization, the Moreau envelope serves as a measure of closeness to stationarity. With this in mind, we introduce the definition of an $\epsilon$-stationary point.


\begin{definition}[$\epsilon$-stationary point] \label{def:Stationary_point} A point $x$ is $\epsilon$-stationary if $\Vert \nabla \Phi_{1/2\ell}(x)\Vert\leq \epsilon $, where, given some $\delta > 0$, we define the Moreau envelope as $\Phi_{\delta}(x) \coloneqq \inf_{w \in \mathbb{R}^{m+1}} \left\{\Phi(w) + \frac{1}{2\delta} \|w- x\|^2\right\}.$
\end{definition}
When $\epsilon = 0$, a stationary point of the Moreau envelope $\Phi_{\delta}(\cdot)$ is also stationary for $\Phi(\cdot)$. For $\epsilon > 0$, one can argue that an $\epsilon$-stationary point for $\Phi_{1/2\ell}$ is close to an $\epsilon$-stationary point of $\Phi(\cdot)$, as long as $\Phi(\cdot)$ is $\ell$-weakly convex (noting that, under our assumptions, $\Phi(\cdot)$ is $\ell$-weakly convex). We proceed with a bound on the iteration complexity of Algorithm \ref{alg:main}.
\begin{theorem}\label{thm:main}
Let the step-sizes $\eta_t$ and $\eta_\lambda$ in Algorithm \textnormal{\ref{alg:main}} be small enough as in Eq. \eqref{eq:eta_t} and \eqref{eq:eta_l}, respectively, and batch size $n=1$.
Define the quantities $\hat{\Delta}_\Phi \coloneqq \Phi_{1/2\ell}(t^{(0)}) - \min_t \Phi_{1/2\ell} (t) $ and $\hat{\Delta}_0\coloneqq \Phi (t^{(0)})-f(\lambda^{(0)},t^{(0)})$. 
If Assumptions \textnormal{\ref{assum:local_opt}}, \textnormal{\ref{assump:3}} and \textnormal{\ref{assump:1}} hold, the iteration complexity for recovering an $\mathcal{O}(\sqrt{\epsilon^2 + \delta \ell( \mathrm{diam}(\mathcal{T})+ \mathrm{diam}(\Lambda))})$-stationary point (Definition \textnormal{\ref{def:Stationary_point}}) is of order 
\begin{align*}
    \mathcal{O} \inparen{\inparen{\frac{\ell^3 (C^2 + \sigma^2 + \delta^2)(\mathrm{diam}(\Lambda))^2 \cdot \hat{\Delta}_\Phi}{\epsilon^6} + \frac{\ell^3(\mathrm{diam}(\Lambda))^2\cdot \hat{\Delta}_0}{\epsilon^4} } \max \inbrace{1, \frac{\sigma^2 + \delta^2}{\epsilon^2}} }\,.
\end{align*}
\end{theorem}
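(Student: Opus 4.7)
The plan is to cast the problem as a stochastic nonconvex--concave minimax optimization and apply (with appropriate modifications) the framework of~\citet{pmlr-v119-lin20a}, carefully propagating the additional bias introduced by the inexact policy oracle of Assumption~\ref{assum:local_opt}. First I would verify the structural requirements of that framework. Since $\mathcal{L}(\pi_\theta, t, \lambda)$ is affine in $\lambda$ for fixed $(\pi_\theta,t)$, the map $\lambda \mapsto \sup_\theta \mathcal{L}(\pi_\theta, t, \lambda)$ is a pointwise supremum of affine functions in $\lambda$ and hence convex; equivalently, $f(t,\lambda) \equiv -\sup_\theta \mathcal{L}(\pi_\theta, t, \lambda)$ is concave in $\lambda$. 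Together with $\ell$-smoothness of $f$ over $\mathcal{T}\times\Lambda$ (Assumption~\ref{assump:3}), compactness of $\Lambda$ and $\mathcal{T}$, and the uniform Lipschitz-in-$t$ bound of Lemma~\ref{lemma:lipschitz}, this places us squarely in the nonconvex--concave stochastic minimax regime. In this regime the Moreau envelope $\Phi_{1/2\ell}$ is a valid near-stationarity measure (cf. Definition~\ref{def:Stationary_point}) provided $\Phi$ is $\ell$-weakly convex; this last fact follows immediately because $\Phi$ is the pointwise supremum over $\lambda\in\Lambda$ of the $\ell$-smooth (hence $\ell$-weakly convex) functions $f(\cdot,\lambda)$.

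The convergence proof would then be assembled from three building blocks combined via a potential-function argument. (i) A one-step descent inequality for $\E[\Phi_{1/2\ell}(t^{(j+1)})] - \Phi_{1/2\ell}(t^{(j)})$, derived by expanding the Moreau envelope around the $t$-update, using the proximal characterization of $\nabla\Phi_{1/2\ell}$ and smoothness of $f$; this bound couples $\|\nabla\Phi_{1/2\ell}(t^{(j)})\|^2$ with the dual tracking gap $\Phi(t^{(j)})-f(t^{(j)},\lambda^{(j)})$, the variance $\sigma^2$, and the squared bias $\delta^2$. (ii) A dual tracking inequality showing that the projected stochastic ascent step in $\lambda$ contracts the expected tracking gap up to an $\mathcal{O}(\eta_t^2 C^2)$ drift caused by the movement of $t$ (which is where Lemma~\ref{lemma:lipschitz} enters) plus the same oracle terms. (iii) A summation of (i) and (ii) over $j=1,\ldots,J$ using the step-size conditions prescribed in the theorem, chosen precisely so that the $\eta_t$-drift is absorbed by the $\lambda$-contraction and the resulting inequality telescopes. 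Dividing by $J$ and invoking the uniform sampling $j^\star \sim \mathrm{Unif}([J])$ yields the bound on $\E\|\nabla\Phi_{1/2\ell}(t^{(j^\star)})\|^2$, from which the claimed iteration complexity follows by inverting the target tolerance.

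The main obstacle is the oracle bias $\delta$, which is absent from the unbiased analysis of~\citet{pmlr-v119-lin20a}. I would decompose the computed gradient as
\begin{equation*}
\hat{\nabla}_{t,\lambda}\hat{\mathcal{L}}(\pi_{\theta^\dagger},t,\lambda) \;=\; \nabla f(t,\lambda) \;+\; \xi(t,\lambda) \;+\; b(\theta^*,\theta^\dagger,t,\lambda),
\end{equation*}
where $\xi$ is zero-mean with variance at most $\sigma^2$ by Assumption~\ref{assump:1}, and $\|b\|\le\delta$ by Assumption~\ref{assum:local_opt}. The identification of the first term as $\nabla f$ rests on Danskin's theorem applied under Assumption~\ref{assump:3} (so that $\nabla_{t,\lambda}f(t,\lambda) = \nabla_{t,\lambda}\mathcal{L}(\pi_\theta,t,\lambda)\big|_{\theta=\theta^*(t,\lambda)}$). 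Care is then required---via Young's and Cauchy--Schwarz inequalities---to ensure that the squared-bias terms arising in both building blocks (i) and (ii) contribute only additively, appearing inside the target stationarity as the residual $\sqrt{\delta\ell(\mathrm{diam}(\mathcal{T})+\mathrm{diam}(\Lambda))}$ rather than compounding linearly in $J$. A secondary technical step is verifying the precise balance between $\eta_\lambda$ and $\eta_t$ that produces the announced $\epsilon^{-6}$ rate (with the $\epsilon^{-4}$ lower-order term stemming from the initial dual tracking gap $\hat{\Delta}_0$); this is a direct calculation once (i)--(iii) are in place and the $\max\{1,(\sigma^2+\delta^2)/\epsilon^2\}$ factor is seen to arise from the variance/bias-dependent bound on the required number of inner noise-averaging steps.
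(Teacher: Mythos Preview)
Your overall plan is aligned with the paper's: both follow the nonconvex--concave stochastic minimax framework of \citet{pmlr-v119-lin20a}, decompose the oracle as true gradient plus zero-mean noise plus bounded bias, and keep the bias terms additive via Young/Cauchy--Schwarz. Your building block (i) is exactly the paper's Lemma~\ref{lem:d3}, and your handling of the $\delta$-bias matches Lemma~\ref{lemma:bias_variance}.

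There is, however, a genuine gap in building block (ii). You describe the $\lambda$-ascent as ``contracting'' the tracking gap $\Phi(t^{(j)})-f(t^{(j)},\lambda^{(j)})$ up to an $\mathcal{O}(\eta_t^2 C^2)$ drift, and in (iii) you count on this contraction to absorb the drift. But $f(t,\cdot)$ is only \emph{concave}, not strongly concave, so projected stochastic ascent gives no per-step contraction factor; moreover the drift induced by the movement of $t$ is $\mathcal{O}(\eta_t)$ per step, not $\mathcal{O}(\eta_t^2)$, and would accumulate unboundedly under a naive sum. The paper (following the nonconvex--concave part of \citet{pmlr-v119-lin20a}, not the strongly-concave part) instead uses a \emph{blocking/epoch} argument: iterations are partitioned into blocks of length $B$, within each block one fixes a single reference $\lambda^*(t^{(s)})$ (with $s$ the block's starting index) and telescopes $\|\lambda^{(k)}-\lambda^*(t^{(s)})\|^2$, and the per-block drift is $\mathcal{O}(\eta_t C\sqrt{C^2+\sigma^2+\delta^2}\,B)$. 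Summing over the $(J{+}1)/B$ blocks and then optimizing $B \propto \mathrm{diam}(\Lambda)\,(\eta_t\eta_\lambda C\sqrt{C^2+\sigma^2+\delta^2})^{-1/2}$ yields the required control of $\frac{1}{J+1}\sum_k \Delta^{(k)}$; this is Lemmas~\ref{lem:d4} and~\ref{lem:d5}. Without the blocking step your argument does not close in the merely-concave regime, and the announced $\epsilon^{-6}$ rate would not follow.

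A minor correction: the factor $\max\{1,(\sigma^2+\delta^2)/\epsilon^2\}$ does not arise from ``inner noise-averaging steps'' (there are none, since $n=1$); it comes directly from the step-size choice $\eta_\lambda = \min\{1/(2\ell),\,\epsilon^2/(16\ell(\sigma^2+\delta^2))\}$, which then propagates into $\eta_t$ and hence into the iteration count.
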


Note that this result says we only need single trajectories ($n=1$). Thus our algorithm can be run \emph{online}.
The proof, which is detailed in Appendix \ref{appendix:main_thm}, extends the developments of \cite{pmlr-v119-lin20a}, by incorporating biased (sub)gradient samples, occurring due to the use of inexact oracles. If $\delta = \mathcal{O}(\epsilon^2)$, then we recover an $\epsilon$-stationary point and the complexity of \cite{pmlr-v119-lin20a}.

\section{Experiments}

We conduct experiments on locomotion tasks from the Safety-Gymnasium benchmark~\citep{safety_gym}. 

\subsection{Setup}
We consider two constraints in the environment: safe navigation and safe velocity. The first involves simple problems with low-dimensional state and action spaces, where the agent must navigate under rules (e.g., along a path or without hitting obstacles). The second is more challenging, involving standard high-dimensional control tasks based on MuJoCo-v4 agents~\citep{mujoco}. At each timestep, the agent receives a cost of +1 if it violates a constraint and 0 otherwise. These violations do not affect environment dynamics—they neither terminate episodes nor alter transitions. Importantly, our experiments do not access this cost directly during training; instead, the agent manages risk by being risk-aware in its velocities.

To simulate risk and uncertainty in an otherwise deterministic setting (where identical actions yield identical outcomes), we inject zero-mean Gaussian noise with a standard deviation of 0.05 (5\% of the action range) into all agent actions during both training and evaluation. This adds stochasticity to the agent’s actions and introduces risk.

\paragraph{Experimental Goals and Motivation}
Our objective is to evaluate whether the proposed risk-sensitive method ensures stable convergence of the dual variable ($\lambda$) and the auxiliary variable ($t$), while also inducing safe and meaningful behavioral changes to the agent. To emphasize convergence, we train agents for extended durations (5–18M steps) to provide a thorough \textit{proof-of-work}. Specifically, we aim to: (1) verify that $t$ converges to the empirical $\cvarb$ of the post-training constraint violation distribution (e.g., safe velocity violations), (2) confirm that $\lambda$ stabilizes to enforce the constraint, and (3) assess whether the policy maximizes reward while maintaining safe velocities.

\paragraph{Optimization with CVaR Constraints}
Let $\upsilon: \mathcal{S} \times \mathcal{A} \rightarrow \R$ denote the constraint-quantifying function (e.g., velocity). We formulate the following constrained optimization problem:
\[
    \sup_{\pi \in \mathcal{P(S)}} \E \left[ \sum_{\tau=0}^\infty \gamma^\tau r(s_\tau, \pi(s_\tau)) \right] \quad \textnormal{s.t.} \quad \cvarb_{\nu^\pi}(\upsilon(s,a)) \leq c,
\]
where \(c\) is a user-defined threshold and $\beta$ is the CVaR parameter. Our framework is general and allows risk-neutral objectives to be combined with risk-aware constraints. To solve this, we transform the reward at each timestep using the CVaR-based Lagrangian (see Appendix \ref{app:implementation} for the derivation):
\[
    r(s_\tau, \pi(s_\tau)) + \lambda_i \left( c + t - \frac{1}{\beta}\left(t + \upsilon(s_\tau, \pi(s_\tau))\right)_+ \right).
\]
In safe navigation, the constraint variable reduces to $\upsilon(s,a) \in \{0,1\}$, since navigation is not quantifiable in the same way as velocity. This makes the optimization problem more difficult, as the penalty term (the $\lambda$ multiplier) becomes sparse.

Proximal Policy Optimization (PPO)~\citep{ppo} is used as a black-box solver to optimize the policy \(\pi\), implementing line 4 of Algorithm~\ref{alg:main}. The dual variable \(\lambda\) and auxiliary \(t\) are updated via stochastic gradient steps following the rest of Algorithm~\ref{alg:main}. For further experimental details, see Appendix \ref{appendix:experimental_details}.

\begin{table}[t]
\centering
\caption{Cumulative episodic evaluation costs and rewards of the converged agents, computed as the mean of the last 10 episodes. The PPO baseline is unconstrained and risk-neutral. We use the simplest Safety-Gymnasium agent, Point at level 1 difficulty, to avoid additional challenges and isolate the effects of the constraints.}
\label{tab:nav_results}
\begin{tabular}{@{}lccccc@{}}
\toprule
 & & \multicolumn{2}{c}{\textbf{Cost $\downarrow$}} & \multicolumn{2}{c}{\textbf{Reward $\uparrow$}} \\
\cmidrule(lr){3-4} \cmidrule(lr){5-6}
Environment & \# Training Steps & PPO & Ours & PPO & Ours \\
\midrule
Button & 5M & 150.76 & 0.0 & 24.29 & 2.58 \\
Circle & 5M & 206.74 & 0.0 & 60.18 & 39.19 \\
Goal & 10M & 45.09 & 0.0 & 21.89 & 13.56 \\
Push & 5M & 38.48 & 0.0 & 0.93 & 2.42 \\
\bottomrule
\end{tabular}
\end{table}

\begin{figure*}[!t]
    \centering
    \small{
    \begin{align*}
        &\text{{\brown} Evaluation reward}  \qquad &&\text{{\blue} $t$} \qquad &&\text{{\purple} $\lambda$} \\
        &\text{{\black} Velocity threshold} \qquad &&\text{{\red} $\beta$-upper quantile} \qquad &&\text{{\green} Converged $t$-value}
    \end{align*}
    }
    \subfigure[Environment: HalfCheetah \quad $c = 1.450$ \quad $\beta$-upper quantile: $1.419$ \quad Converged $t$-value: $1.417$]{
                \includegraphics[width=0.238\linewidth]{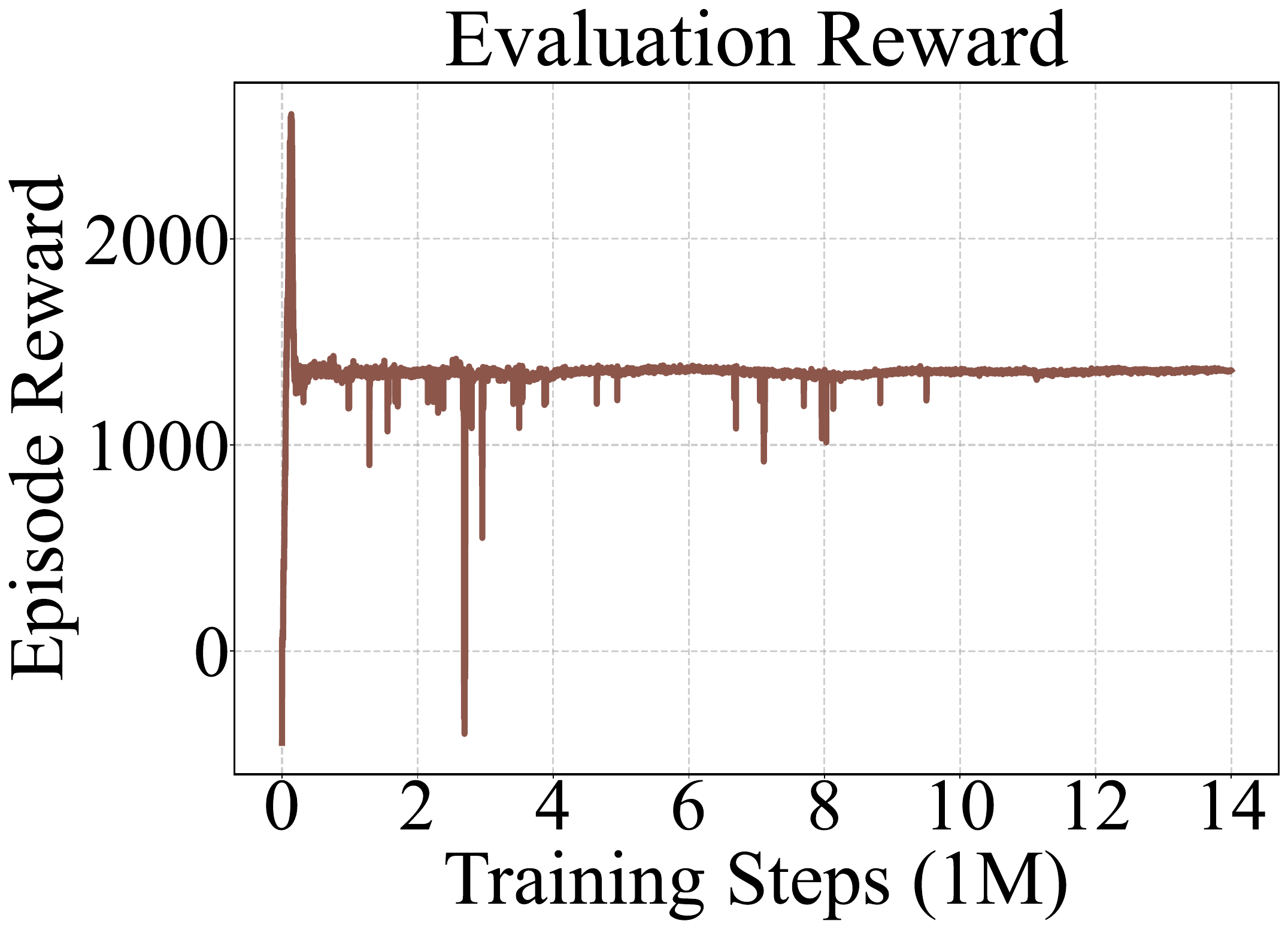}%
                \includegraphics[width=0.242\linewidth]{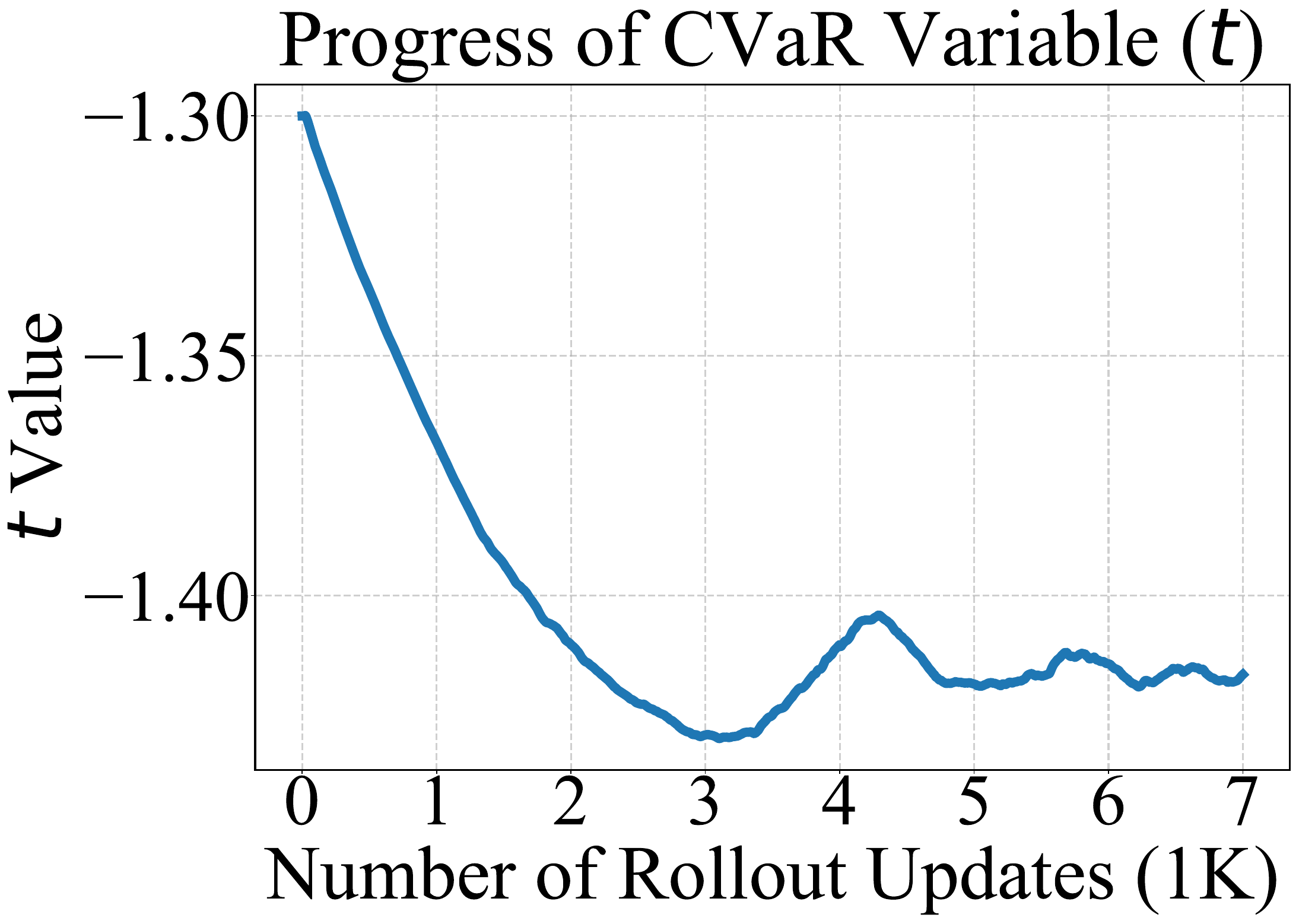}%
                \includegraphics[width=0.2328\linewidth]{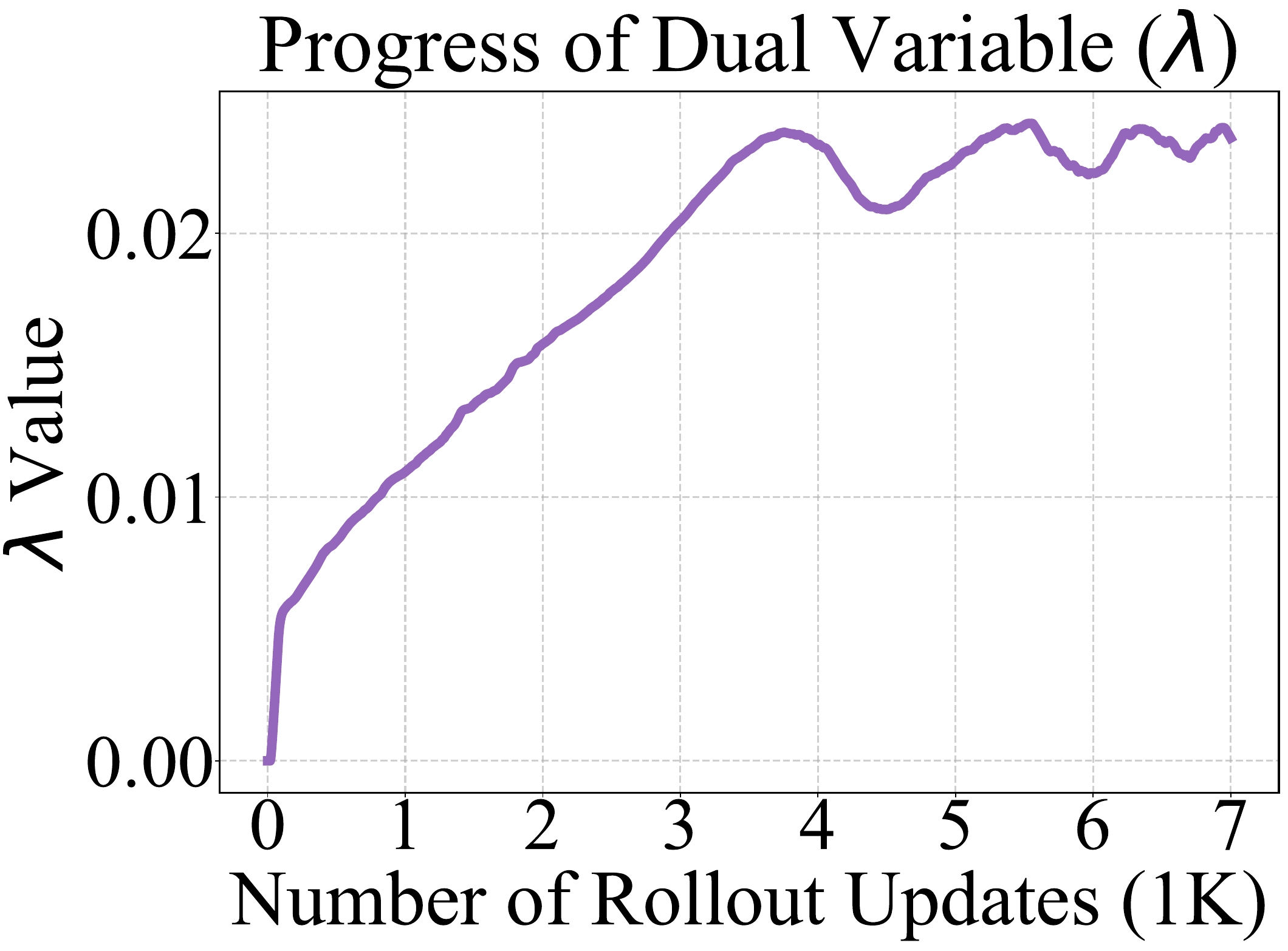}%
                \includegraphics[width=0.2095\linewidth]{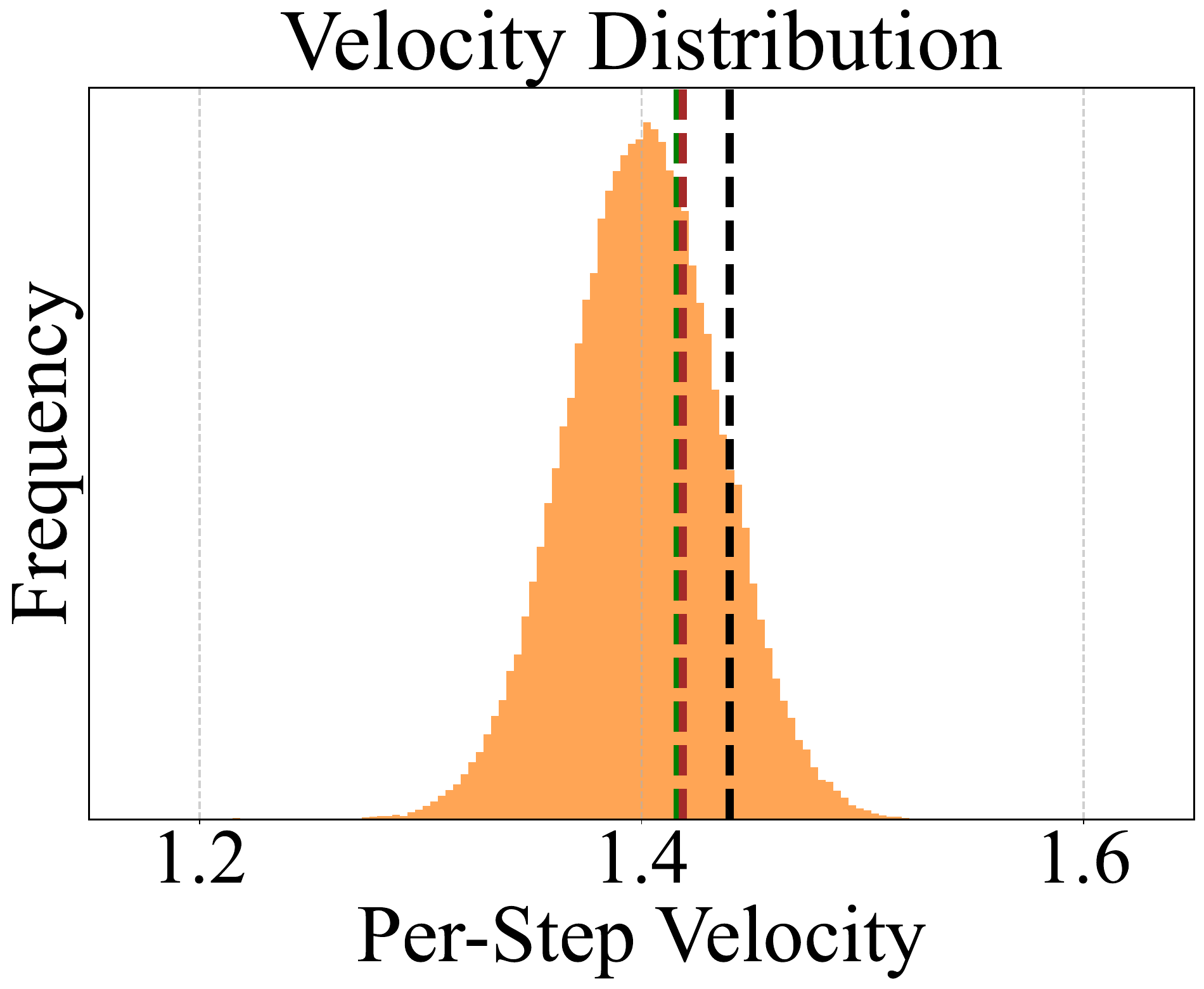}%
        }
	\subfigure[Environment: Hopper \quad $c = 0.373$ \quad $\beta$-upper quantile: $0.370$ \quad Converged $t$-value: $0.370$]{
                \includegraphics[width=0.238\linewidth]{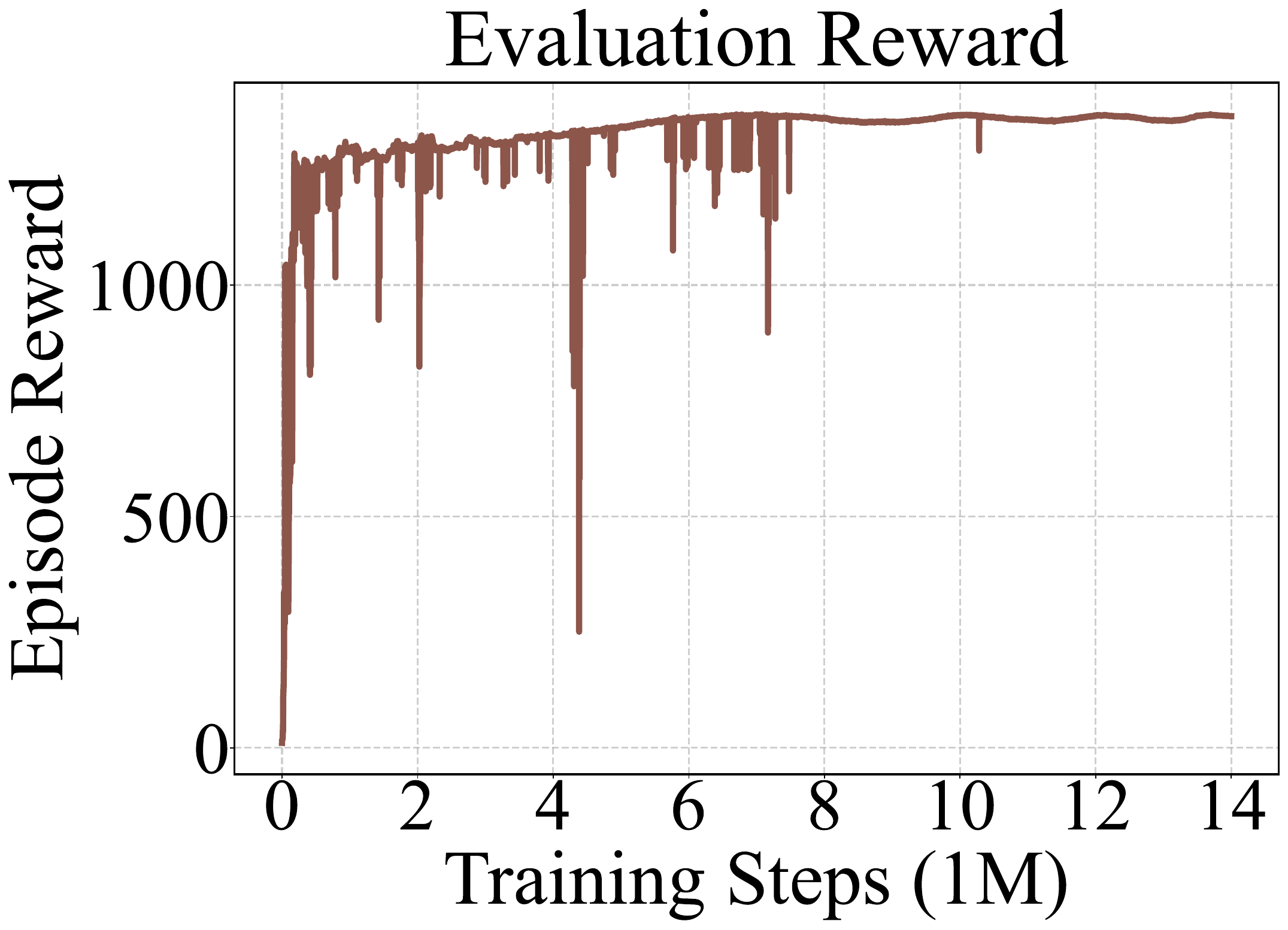}%
                \includegraphics[width=0.242\linewidth]{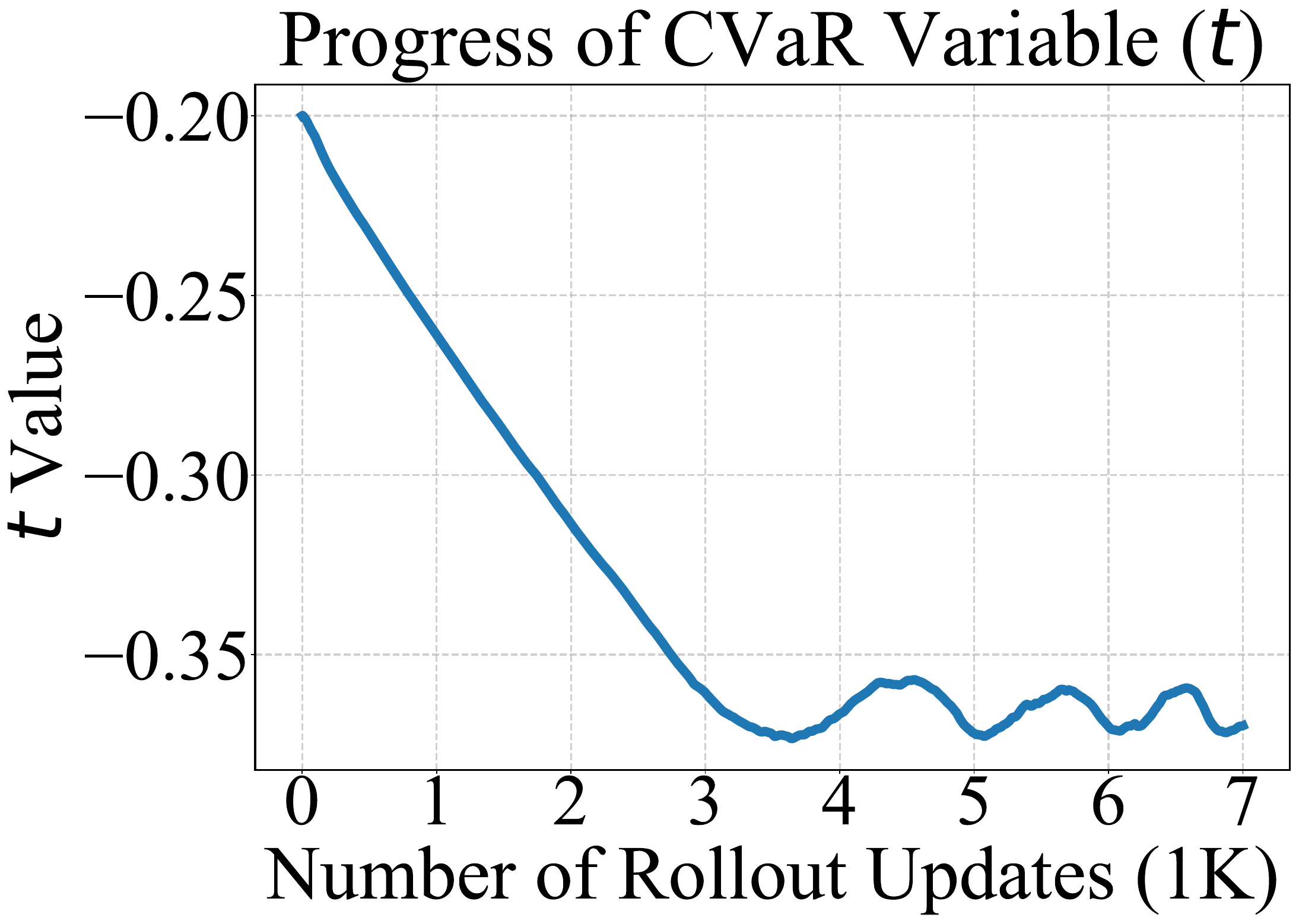}%
                \includegraphics[width=0.233\linewidth]{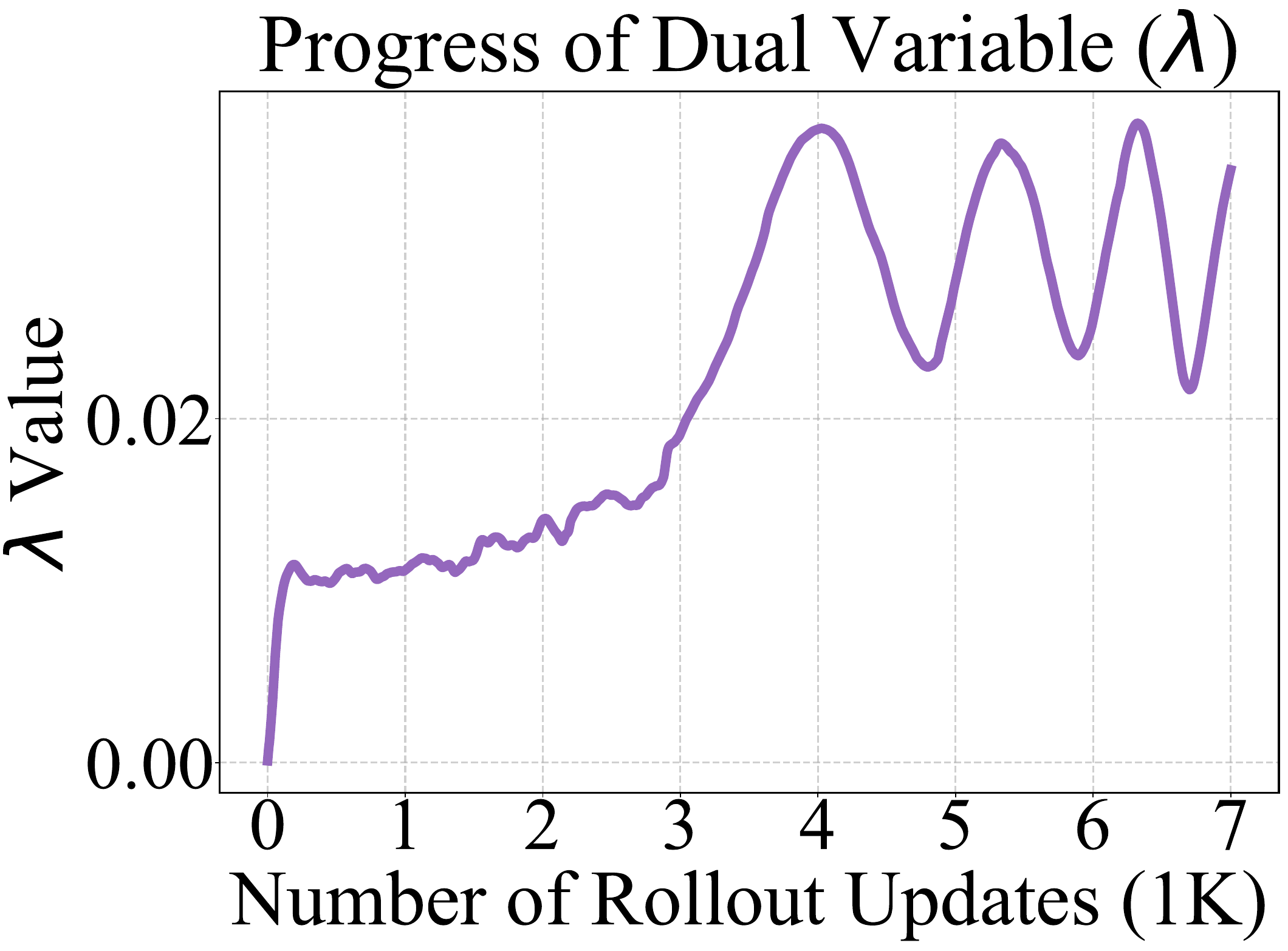}%
                \includegraphics[width=0.21\linewidth]{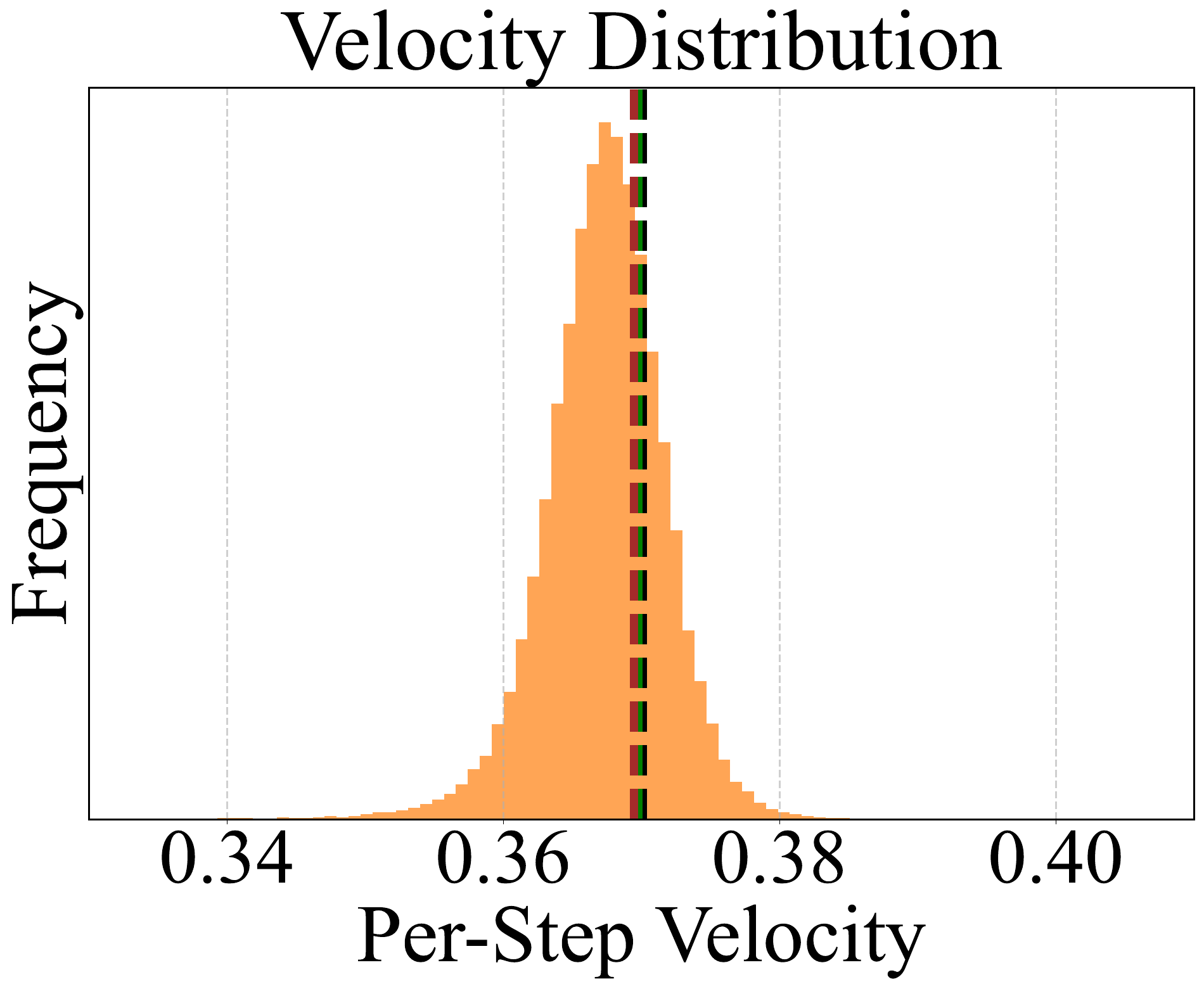}%
        }
	\subfigure[Environment: Swimmer \quad $c = 0.228$ \quad $\beta$-upper quantile: $0.248$ \quad Converged $t$-value: $0.207$]{
                \includegraphics[width=0.238\linewidth]{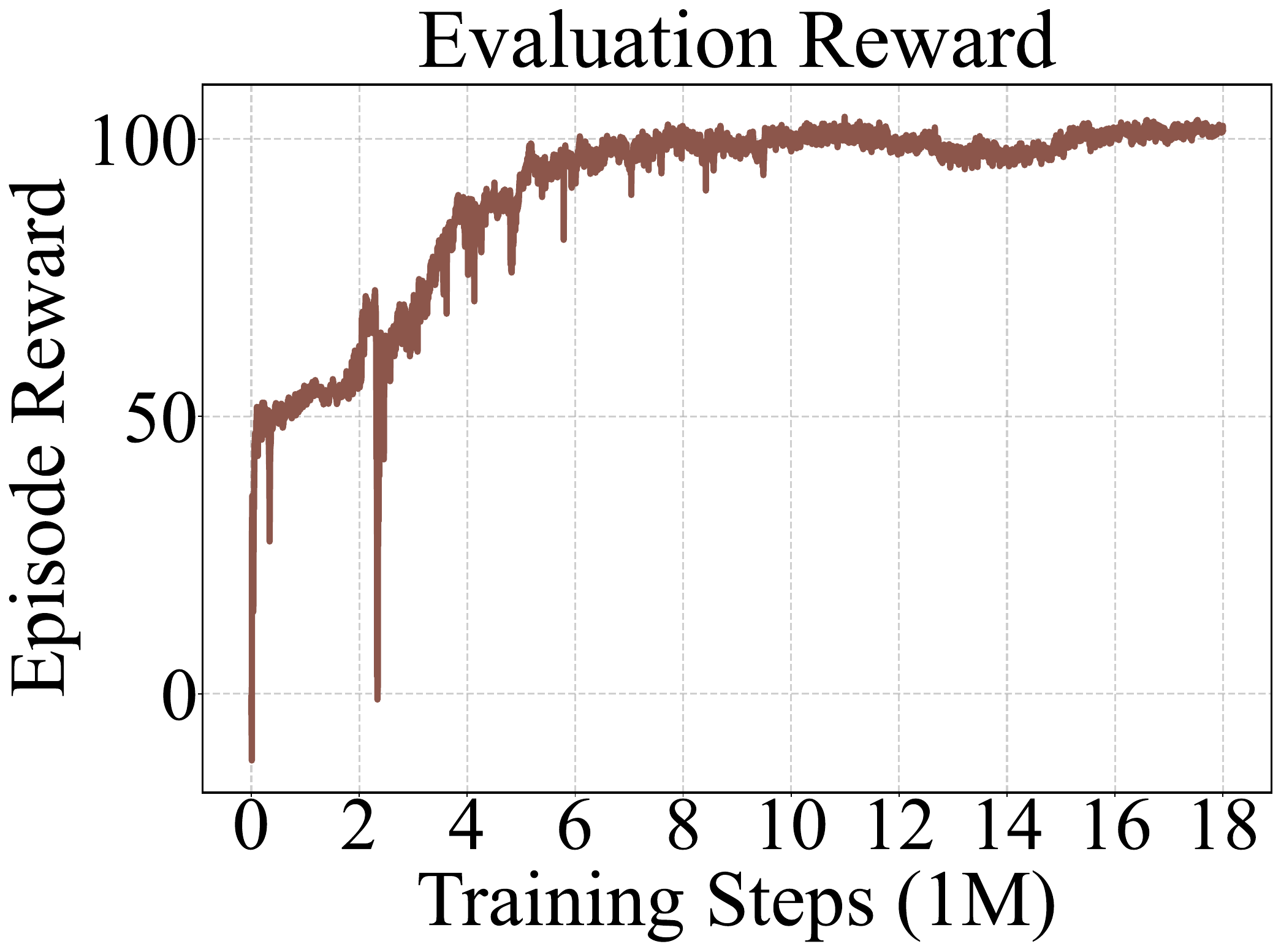}%
                \includegraphics[width=0.242\linewidth]{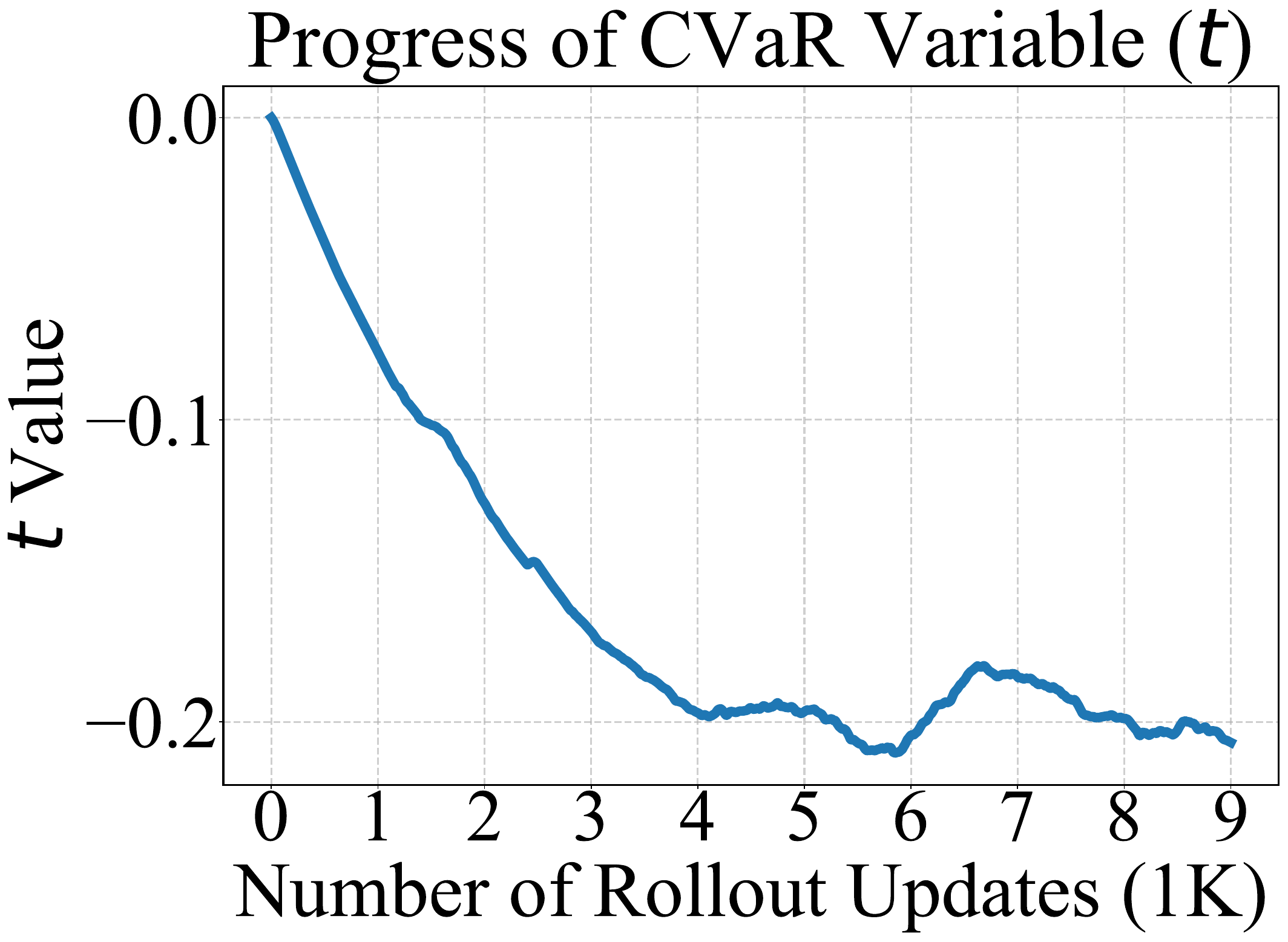}%
                \includegraphics[width=0.24\linewidth]{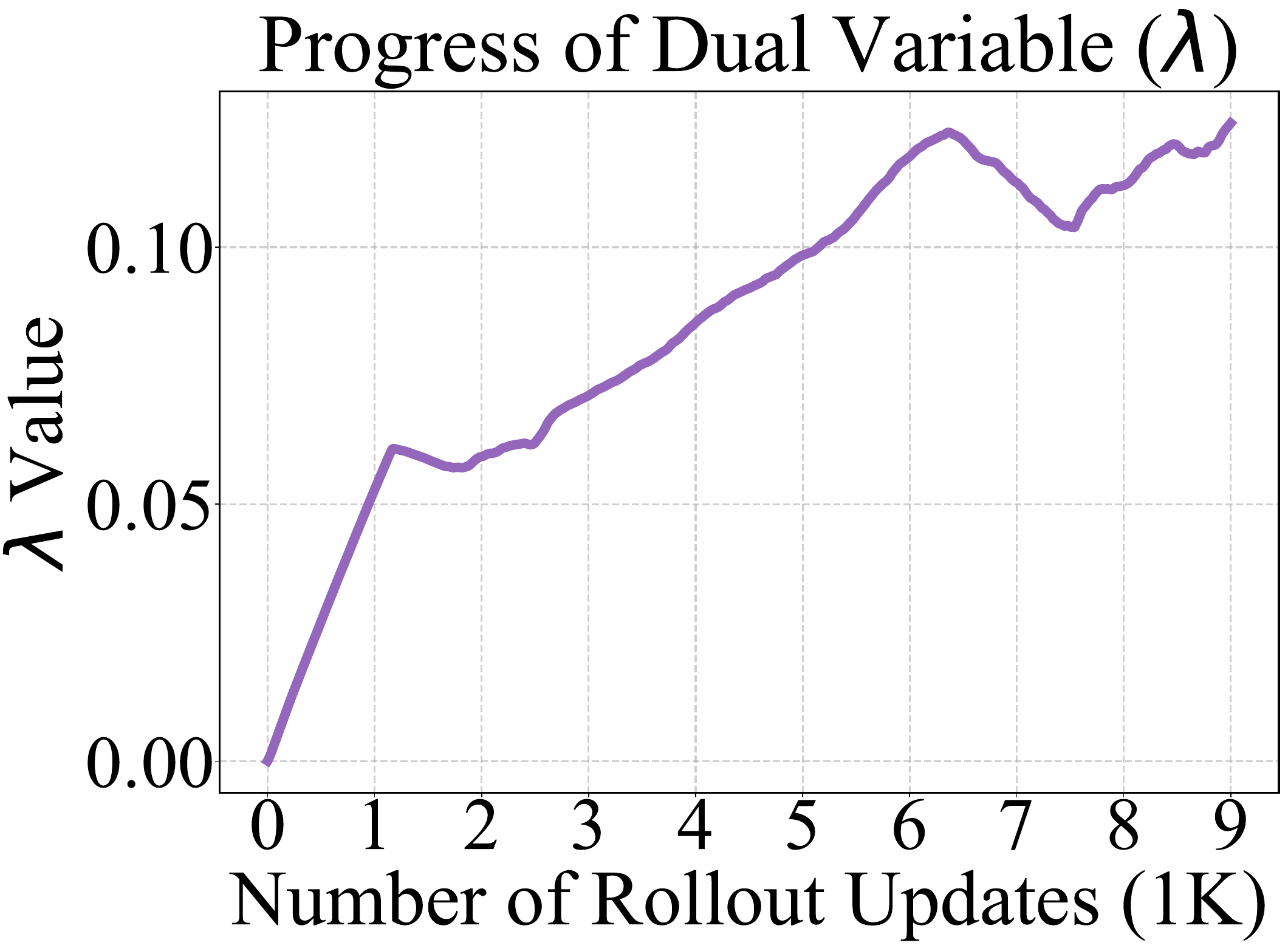}%
                \includegraphics[width=0.216\linewidth]{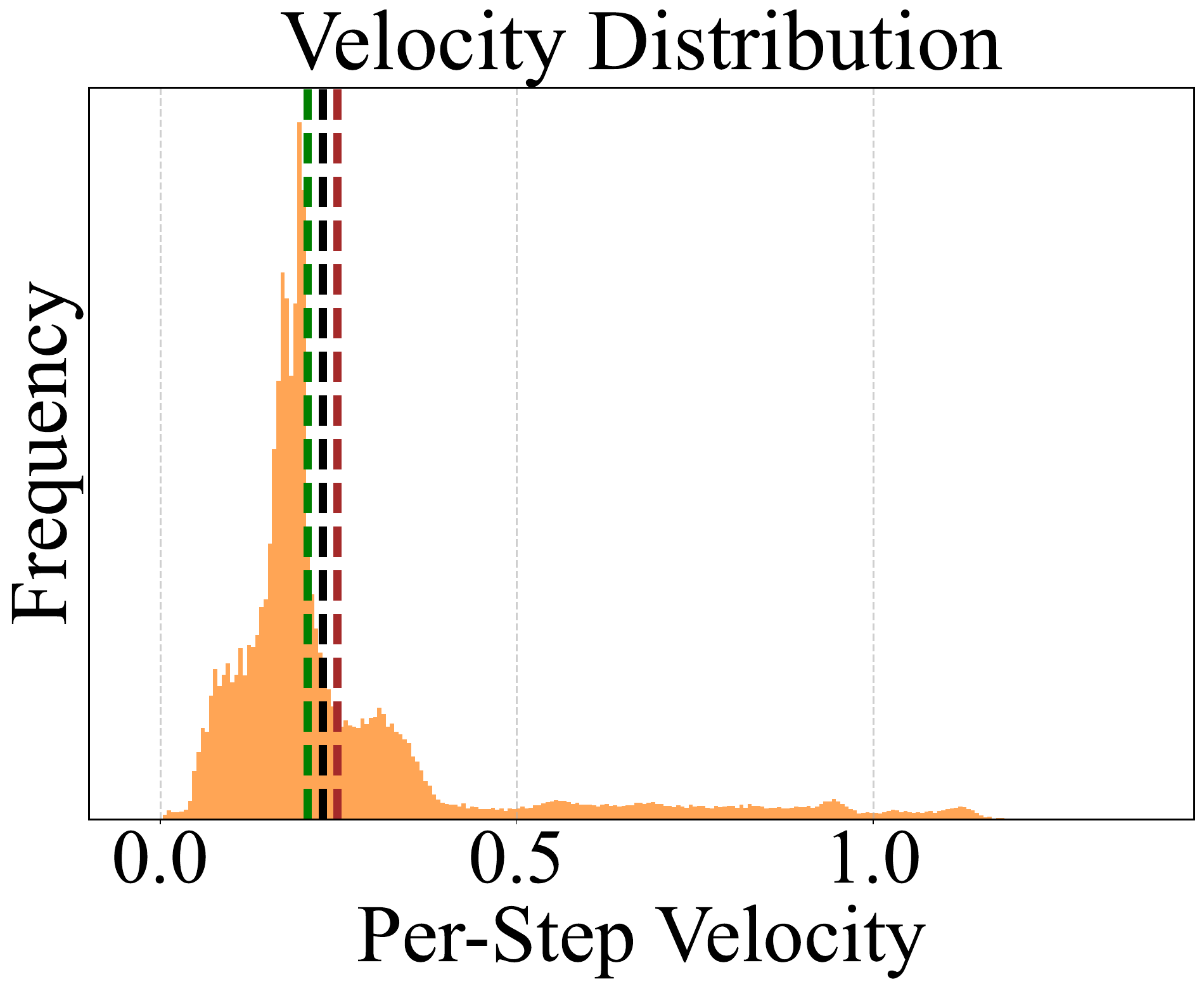}%
        }

      \subfigure[Environment: Walker2d \quad  $c = 1.171$ \quad $\beta$-upper quantile: $1.133$ \quad Converged $t$-value: $1.122$]{
                \includegraphics[width=0.238\linewidth]{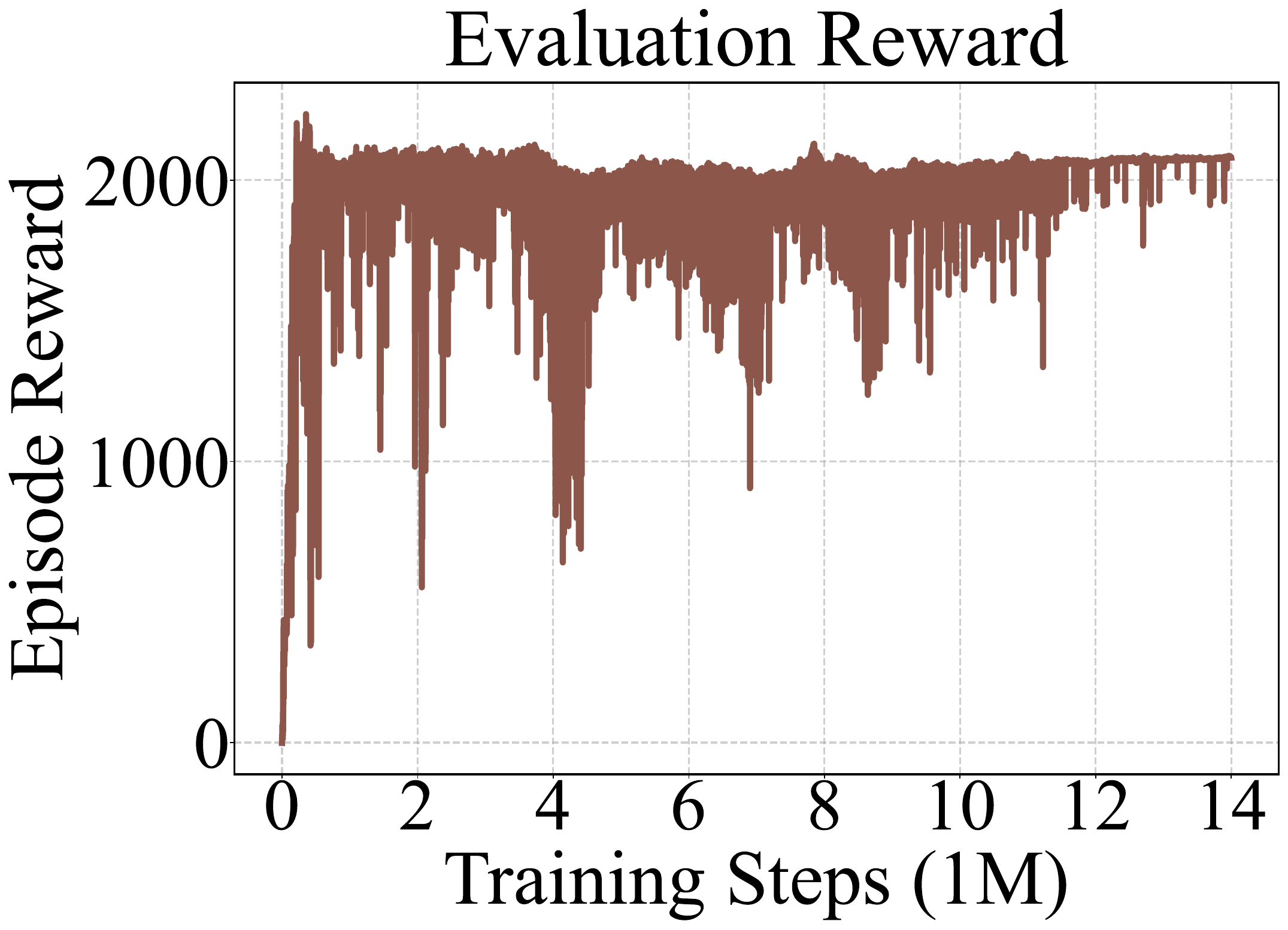}%
                \includegraphics[width=0.242\linewidth]{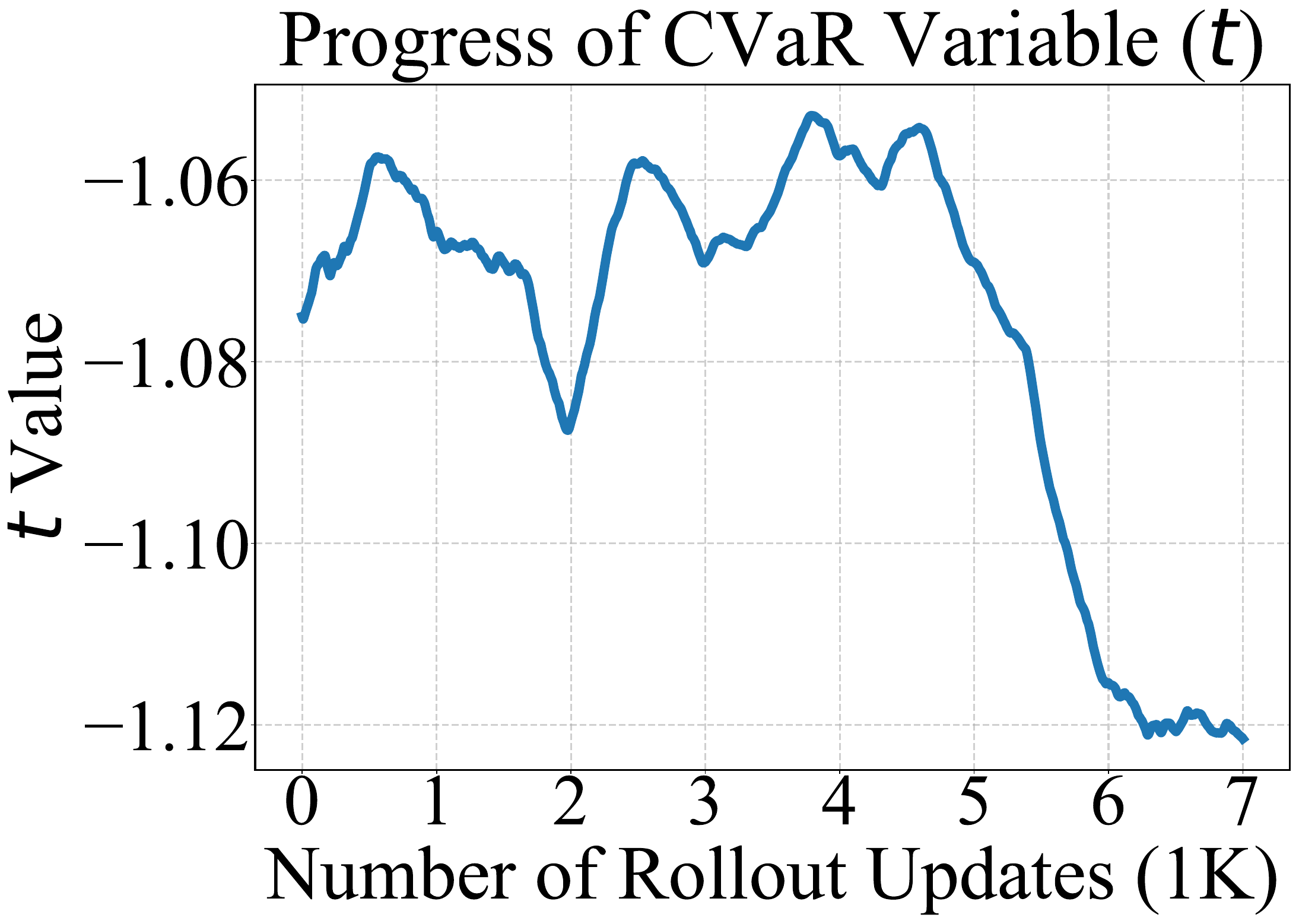}%
                \includegraphics[width=0.24\linewidth]{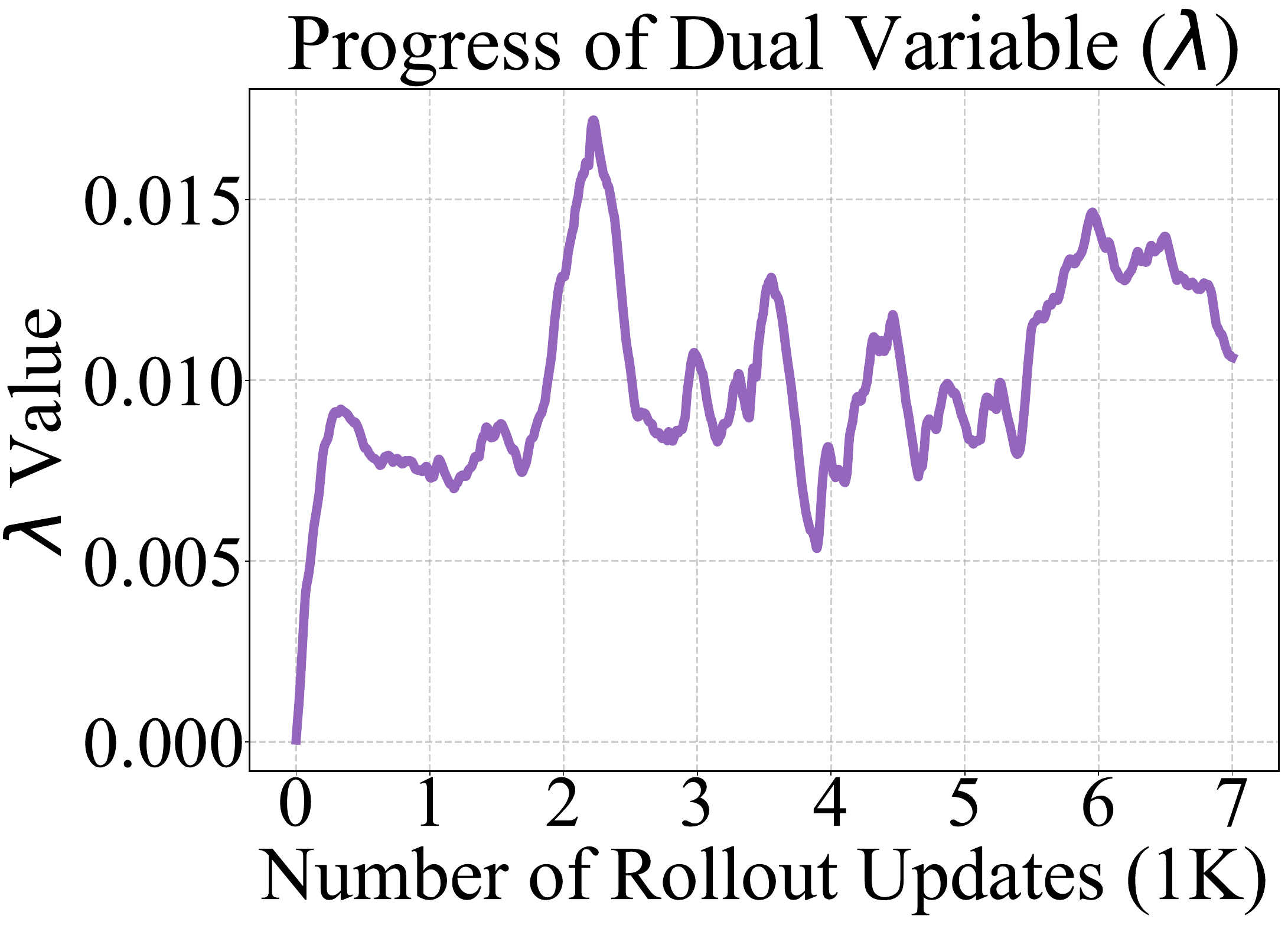}%
                \includegraphics[width=0.21\linewidth]{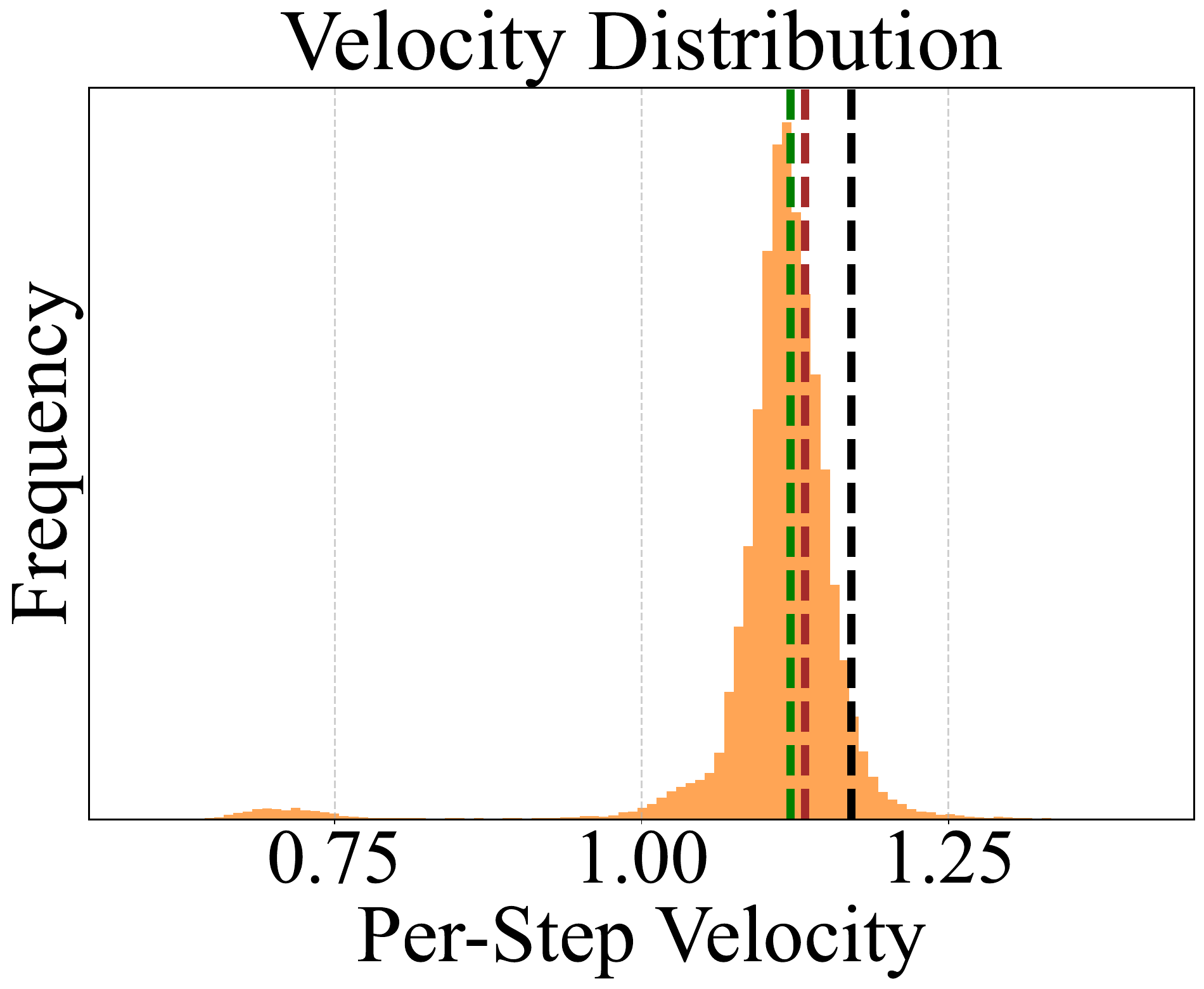}%
        }
    \caption{Learning curves of the agent trained to convergence. We report episodic evaluation rewards, the progression of optimization variables $t$ and $\lambda$, and the velocity distribution of the trained agent evaluated over 100 post-training episodes without further learning. Curves are \textit{not} smoothed.}
    \label{fig:vel_results}
\end{figure*}

\subsection{Results}
Results are shown in Table~\ref{tab:nav_results} for safe navigation and in Figure~\ref{fig:vel_results} for safe velocity.

\paragraph{Constraint Handling}
The performance in the navigation tasks shows us that our method can strictly prevent any constraint violations, whether quantified continuously or not. In fact, it is the only method achieving strictly zero violations compared to other PPO-based constraint learning algorithms (see Table 5 in \citep{safety_gym}). In some cases, constraint satisfaction also leads to higher rewards than the unconstrained vanilla agent (e.g., in the Push environment). This shows that constraints are not arbitrary but can guide the agent toward optimal task behavior.

\paragraph{Risk-Management}
On the other hand, the velocity tasks provide insights into the risk management capabilities. With properly tuned step sizes, the dual and CVaR variables stabilize and oscillate around consistent values. The post-training evaluations (the last column in Figure \ref{fig:vel_results}) extract the distribution of the trained agent's velocity over several evaluation episodes. We observe that the converged $t$-value matches the $\beta$-upper quantile of the velocity distribution. This is what the CVaR aims to do: it captures the expected cost in the worst $\beta$ fraction of outcomes.

Further, the evaluation reward becomes increasingly stable as training progresses despite the uncertainty in the evaluation environment. Considering the number of data points in the curves—1000 per 1M steps (e.g., about 14,000 points in Hopper)—the robustness of our method in managing risk is evident. This is also confirmed in post-training simulations: the agent moves forward cautiously compared to the unconstrained (risk-neutral) PPO agent, demonstrating that the learned policy adheres to the task while remaining interpretable from the perspective of the constraint.

We recognize the extended training duration and the need to tune the step sizes of $\lambda$ and $t$. Nonetheless, once these variables converge, the algorithm is effective not only in toy tasks but also in realistic control settings, making patience during training an important practical consideration.

\section{Conclusion}
\vspace{-3.75pt}

We introduce a training framework for solving constrained reinforcement learning (RL) problems with risk-awareness which has desirable properties. The problem exhibits a parameterized strong duality under constraint qualifications, which lends itself to an algorithm that is theoretically-supported and flexible to implement; the inner problem can be solved using any black-box RL algorithm, while the other variables are updated using SGDA. The framework handles general OCEs beyond the CVaR, and captures both risk-neutral (e.g., by setting $\beta = 1$ for the $\cvarb$) and risk-aware objectives and constraints, or a combination of the two, with empirical evidence that, in the context of risk-aware constrained RL, it effectively manages risk.

\vspace{-3.75pt}

\paragraph{Limitations} 
An open problem resulting from this work asks whether or not full strong duality holds unconditionally. 
Also, while our framework is inherently risk-aware, it is more computationally intensive compared to risk-neutral methods (as each update of $\lambda$ and $t$ variables requires solving for an approximately optimal policy), as is also true for other risk-aware methods, e.g., \cite{bonetti_risk_averse_rl_coherent_risk, risk_contr_rl_percentile_risk}. 


\bibliographystyle{plainnat}
\bibliography{references}

\clearpage  

\appendix

\section{Supporting Details for Theoretical Results}

\subsection{Robustness in Value and Time}\label{appendix:robustness}
The so-called \emph{reward-based} formulation in \eqref{eq:risk_aware_gen_rl}, as coined by \citet{ijcai2020p632,bonetti_risk_averse_rl_coherent_risk}, captures \textit{per-stage risk}, in contrast to the return-based approach \eqref{eq:primal_risk_averse}, which captures aggregate discounted risk. To see this, first note that the occupation measure can be reexpressed as
\begin{equation}\nonumber
    \mathrm{d}\nu^{\pi}	=\sum_{\tau=0}^{\infty}(1-\gamma)\gamma^{\tau}\mathrm{d}p_{\pi}^{\tau}
     =:\sum_{\tau=0}^{\infty}\mathrm{d}p_{\pi}^{T=\tau}p^{T}(\tau),
\end{equation}
where $T$ is a $\mathbb{N}\cup\{0\}$-valued geometric random variable such that $P(T=\tau)=:p^{T}(\tau):=(1-\gamma)\gamma^{\tau},\tau\in\mathbb{N}\cup\{0\}$, and $p_{\pi}^{T}$ is now the conditional state-action distribution relative to $T$. In other words, $\nu^{\pi}$ may be seen as the marginalization of the disintegration $\mathrm{d}p_{\pi}^{T=\tau}(s,a)p^{T}(\tau)$ relative to $T$, the latter interpreted as a random time. One can show that
\begin{equation}\nonumber
    -\mathrm{CVaR}_{\nu^{\pi}}^{\beta}(-r(s,a))=-\mathrm{CVaR}^{\beta}(-r(s_{T}^{\pi},a_{T}^{\pi})),
\end{equation}
where $(s_{T}^{\pi},a_{T}^{\pi})$ denotes the state-action vector under policy $\pi$ and evaluated at random time $T$. Specifically for the case of $\mathrm{CVaR}$ (and similarly for other OCEs), it can be readily seen that for $\beta=1$ we recover the objective of \eqref{eq:classic_rl} which can be equivalently written as $V(\pi)=(1-\gamma)^{-1}\mathbb{E}\{r(s_{T}^{\pi},a_{T}^{\pi})\}$, whereas if $\beta\downarrow0$, we obtain
\begin{equation}\nonumber
    \lim_{\beta\downarrow0}-\mathrm{CVaR}_{\nu^{\pi}}^{\beta}(-r(s,a))=\inf_{\tau}\mathrm{ess\,inf}_{(s,a)\sim p_{\pi}^{\tau}}\:r(s,a),
\end{equation}
which shows in particular that such reward-based RL formulations enforce \textit{joint aversion in reward value and the time} for which the reward takes each value. For instance, for sufficiently small $\beta$, the functional $-\mathrm{CVaR}^{\beta}(-r(s_{T}^{\pi},a_{T}^{\pi}))$ will be more sensitive to very small rewards happening far in the future (and with low probability $(1-\gamma)\gamma^\tau$), as compared with \eqref{eq:classic_rl} and \eqref{eq:primal_risk_averse}, which both consider reward time averaging, heavily discounting future reward contributions.

\subsection{Proof of Lemma \ref{lem:move_sup_obj}}\label{appendix:proof_move_sup_obj}
We will work with the convex analytic dual form for this proof, but replacing the occupancy measure with the expected discount sum will give the same result.
\paragraph{Statement}
The problem 
\begin{equation}
    \begin{aligned}
        P^* = & \sup_{\pi \in \mathcal{R}}\sup_{t_0 \in \R} \E_{\nu^\pi(s,a)} \left[  t_0 - \frac{1}{\beta} (t_0 - r_0(s,a))_+ \right]& \\
        &\textnormal{s.t.}  \sup_{t_i \in \R} \E_{\nu^\pi(s,a)} \left[  t_i - \frac{1}{\beta} (t_i - r_i(s,a))_+ \right] \geq c_i &\qquad \forall i = 1, 2, \dots, m
        \label{eq:constr_rl_cvarb_occ}
    \end{aligned}
\end{equation}
and that of
\begin{equation}
    \begin{aligned}
         \sup_{\pi \in \mathcal{R}, t_0 \in \R, t_i \in \R} &\E_{\nu^\pi(s,a)} \left[  t_0 - \frac{1}{\beta} (t_0 - r_0(s,a))_+ \right]& \\
        \textnormal{s.t.}\qquad &\E_{\nu^\pi(s,a)} \left[  t_i - \frac{1}{\beta} (t_i - r_i(s,a))_+ \right] \geq c_i &\qquad \forall i = 1, 2, \dots, m
        \label{eq:constr_rl_cvarb_occ_ver2}
    \end{aligned}
\end{equation}
are equivalent.

\begin{proof}
We need to prove that it is valid to move the supremum over each $t_i$ to the objective. The proof follows that of \citet{risk_contr_rl_percentile_risk}. 

\begin{itemize}
    \item (Case \eqref{eq:constr_rl_cvarb} $\leq$ \eqref{eq:constr_rl_cvarb_ver2}.) Let $\pi^1$ be a feasible policy for the problem \eqref{eq:constr_rl_cvarb}. Then define \[ t^1_i \coloneqq \arg\max_{t_i \in \R} \E_{\nu^{\pi^1}(s,a)}\left[ t_i - \frac{1}{\beta}(t_i - r_i(s,a))_+ \right] \geq c_i, \qquad \forall i=1, 2, \dots, m, \]
    so that $(\pi^1, t_i^1)$ is feasible for \eqref{eq:constr_rl_cvarb}. Then this tuple $(\pi^1, t_i^1$) is clearly also feasible for \eqref{eq:constr_rl_cvarb_ver2}.
    
    \item (Case \eqref{eq:constr_rl_cvarb_ver2} $\leq$ \eqref{eq:constr_rl_cvarb}.) On the other hand, for any $\pi$, we clearly have that
    \[ -\cvarb_{\nu^\pi}[-Z] = \sup_t \E_{\nu^\pi}\left[t - \frac{1}{\beta}(t - Z)_+ \right] \geq \E_{\nu^\pi}\left[\hat{t} - \frac{1}{\beta}(\hat{t} - Z)_+ \right], \]
    for any $\hat{t}$. So, if the quantity on the RHS of the inequality is lower bounded by $c_i$ (that is, feasible for \eqref{eq:constr_rl_cvarb_ver2}), then the LHS is clearly also lower bounded by $c_i$ (thus, feasible for \eqref{eq:constr_rl_cvarb}). 
\end{itemize}
\end{proof}

\subsection{Proof of Proposition \ref{prop:dual}} \label{appendix:proof_dual_prop}
\paragraph{Statement}  Let $r_i$ be bounded functions for all $i \in \{0\} \cup [m]$. Assume that Slater's condition holds for \eqref{eq:constr_rl_cvarb_fixed_t}. Then, \eqref{eq:constr_rl_cvarb_fixed_t} exhibits strong duality, and thus 
    \[ \sup_\pi \inf_{\lambda \geq 0} \L(\pi, t, \lambda) = \inf_{\lambda \geq 0} \sup_\pi \L(\pi, t, \lambda). \]
   \begin{proof}
    \citet[Theorem 1]{constrained_rl_zero_duality_gap} proved strong duality holds for problems of the form
\begin{equation}
    \begin{aligned}
         \sup_{\pi \in \mathcal{P}(\mathcal{S})} &\  \E \left[ \sum_{\tau=0}^\infty \gamma^\tau  r_0(s_{\tau}^{\pi},a_{\tau}^{\pi}) \right]
        ~\textnormal{s.t. }~   \E \left[ \sum_{\tau=0}^\infty \gamma^\tau r_i(s_{\tau}^{\pi},a_{\tau}^{\pi}) \right] \geq c_i, \,\,  \forall i \in [m]\,,
        \label{eq:constr_rl_zero_duality_gap}
    \end{aligned}
\end{equation}
which involve the expectation of the discounted sums of some bounded reward functions $r_i$, for $i \in  \{0\}\cup [m]$. Thus, the problem \eqref{eq:constr_rl_cvarb_fixed_t} is exactly in the form of \citet{constrained_rl_zero_duality_gap}, with the reward functions $r'_i(s,a,t_i) = t_i - \frac{1}{\beta}(t_i - r_i(s,a))_+$, for $i \in \{0\}\cup[m]$. Since the original $r_i$ are bounded and $t_i$ are fixed, we have that $r'_i$ are bounded and we can readily apply \citet[Theorem 1]{constrained_rl_zero_duality_gap}.
\end{proof}
\subsection{Proof of Theorem \ref{thm: almost-zero duality gap}} \label{appendix: almost zero duality gap}
Under Assumption \ref{assum:constraint qual}, we have shown that the proposed partial Lagrangian relaxation is exact, i.e., that
\begin{align*}
\sup_{t \in \mathcal{I}}\sup_\pi \inf_{\lambda \geq 0} \L(\pi, t, \lambda) = \sup_{t \in \mathcal{I}} \inf_{\lambda \geq 0} \sup_\pi \L(\pi, t, \lambda). 
\end{align*}
 We assume that $\pi_{\theta}$, for $\theta \in \Theta \subset \mathbb{R}^p$ (with $\Theta$ some compact set), is an $\epsilon$-universal parametrization of measures in $\mathcal{P}(\mathcal{S})$, according to \citet[Definition 1]{constrained_rl_zero_duality_gap} and that the parametrized-policy version of \eqref{eq:constr_rl_cvarb_ver2} is feasible. We proceed to show that there is almost no price to pay for the policy parametrization, in terms of duality gap.

Indeed, assuming that the parametrized optimization problem is feasible, we can utilize \citet[Theorem 2]{constrained_rl_zero_duality_gap} and Assumption \ref{assum:constraint qual} to deduce that, for all $t \in \mathcal{I}$, the following inequalities hold:
\begin{align*} 
P^*({t}) \coloneqq   \sup_{\pi}\inf_{\lambda \geq 0} \mathcal{L}(\pi,t,\lambda) 
 \geq  \inf_{\lambda \geq 0}\sup_{\theta} \mathcal{L}(\pi_{\theta},t,\lambda) \geq P^*(t) - G(t) \epsilon/(1-\gamma),
\end{align*}
\noindent {where $G(t) = \mathcal{O}(1)$ is a constant depending on $t$, and $\epsilon > 0$ is an arbitrarily small constant (depending on the employed parametrization). Let $t^* \in {\arg\max}_{t \in \mathcal{I}}\ P^*(t)$ and $t_{\theta}^* \in {\arg\max}_{t \in \mathcal{I}}\ \inf_{\lambda \geq 0}\sup_{\theta} \mathcal{L}(\pi_{\theta},t,\lambda)$. Let also $G = \max\{G(t^*),G(t_{\theta}^*)\}.$ Then, we obtain that}
\begin{align*} P^*(t^*) \geq &\ \inf_{\lambda \geq 0}\sup_{\pi} \mathcal{L}(\pi,t^*,\lambda) \geq  P^*(t^*) - G \epsilon/(1-\gamma),\\
P^*(t^*_{\theta}) \geq &\ \inf_{\lambda \geq 0}\sup_{\theta} \mathcal{L}(\pi_{\theta},t_{\theta}^*,\lambda) \geq  P^*(t_{\theta}^*) - G \epsilon/(1-\gamma).
\end{align*}
\noindent Since, by definition, $P^*(t^*) \geq P^*(t_{\theta}^*)$, while $\sup_{\theta}\inf_{\lambda \geq 0} \mathcal{L}(\pi_{\theta},t^*,\lambda) \leq \sup_{\theta}\inf_{\lambda \geq 0} \mathcal{L}(\pi_{\theta},t_{\theta}^*,\lambda)$, we obtain that
\begin{align*}
    P^*(t^*) \equiv  \sup_{t \in \mathcal{I}}  \sup_{\pi}\inf_{\lambda \geq 0} \mathcal{L}(\pi,t,\lambda)  \geq  \sup_{t \in \mathcal{I}} \inf_{\lambda \geq 0}\sup_{\theta} \mathcal{L}(\pi_{\theta},t,\lambda) \geq P^*(t^*) -\mathcal{O}\left(\frac{\epsilon}{1-\gamma} \right),
\end{align*}
\noindent which completes the proof.\hfill \qed

\subsection{Discussion on Assumption \ref{assum:local_opt}}
\label{appendix:disc_local_opt}

Below, we provide an insight as to why the inexact oracle utilized in Assumption \ref{assum:local_opt} is reasonable under minimal conditions (under which the sample gradients used in Algorithm \ref{alg:main} satisfy Assumption \ref{assum:local_opt}). Specifically, building upon the discussion given later in Appendix \ref{appendix:conditions_for_Lipschitz_smoothness}, we note that the most general condition guaranteeing Assumption \ref{assump:3} is given in the second scenario of Appendix \ref{appendix:conditions_for_Lipschitz_smoothness}, which requires that the function $\mathcal{L}(\pi_{\theta^*}(t,\lambda),t,\lambda)$ satisfies the Lojasiewicz inequality with uniform constants over $(t,\lambda)$, for any selection $\pi_{\theta^*}(t,\lambda) \in \arg\max_{\theta \in \Theta} \mathcal{L}(\pi_{\theta},t,\lambda)$ (again, see Appedinx \ref{appendix:conditions_for_Lipschitz_smoothness} for the definition of the Lojasiewicz inequality in this context). Since the (parametrized) policy optimization problem $\max_{\theta \in \Theta} \mathcal{L}(\pi_{\theta},t,\lambda)$ is nonconvex (and is approximately solved using a solver like PPO), we can only expect to obtain an approximate solution $\theta^{\dagger}(t,\lambda)$ such that
\[ \mathcal{L}(\pi_{\theta^*}(t,\lambda),t,\lambda) - \mathcal{L}(\pi_{\theta^{\dagger}(t,\lambda)},t,\lambda) \leq \tilde{\varepsilon}(\theta^{\dagger},\theta^*,t,\lambda) \leq \varepsilon,\]
\noindent for some $\varepsilon > 0$ (uniformly bounded in $(t,\lambda)$). Then, if $\mathcal{L}(\pi_{\theta^*}(t,\lambda),t,\lambda)$ satisfies the Lojasiewicz inequality for some uniform positive constants $C, \eta$, we obtain that
\[ \|\theta^{\dagger} - \theta^*\| \leq C \varepsilon^{\eta}.\]
\noindent Thus, under the mere assumption that the sample gradient function $\hat{\nabla}_{t,\lambda} \hat{\mathcal{L}}(\pi_{\theta},t,\lambda)$ is $\hat{L}$-Lipschitz continuous with respect to $\theta$ (which can be enforced, if necessary, by appropriate smoothing of the utility function $g$; a general smoothing strategy for generating smooth OCEs from non-smooth utilities can be found in \citet[Example 2]{epi_regularization_risk_meas_20}), we obtain that
\[ \left\| \hat{\nabla}_{t,\lambda} \hat{\mathcal{L}}(\pi_{\theta^{\dagger}(t,\lambda)},t,\lambda) - \hat{\nabla}_{t,\lambda} \hat{\mathcal{L}}(\pi_{\theta^{*}(t,\lambda)},t,\lambda)\right\| \leq \hat{L}\|\theta^{\dagger}-\theta^*\| \leq \hat{L} C \varepsilon^{\eta} \coloneqq \delta.\]
\noindent Upon noting that $\hat{\nabla}_{t,\lambda} \hat{\mathcal{L}}(\pi_{\theta^{*}(t,\lambda)},t,\lambda)$ is an unbiased sample of $-\nabla_{t,\lambda} f(t,\lambda)$ (cf. Appendix \ref{appendix:conditions_for_Lipschitz_smoothness}), we can easily deduce that under the said minor conditions, the sample gradients utilized in Algorithm \ref{alg:main} satisfy Assumption \ref{assum:local_opt}. Thus, we can see that our oracle condition in Assumption \ref{assum:local_opt} is justified and can be shown to hold under minimal conditions (without requiring convexity or uniqueness of the solution of $\max_{\theta \in \Theta} \mathcal{L}(\pi_{\theta},t,\lambda)$).

\subsection{Lipschitzness of the Lagrangian}\label{appendix:proof_lemma_sup_ineq}
First we show a technical lemma, and then we proceed with the proof of Lemma \ref{lemma:lipschitz}.
\begin{lemma}\label{lemma:sup_inequality}
For any pair $(\tilde{t},\bar{t})\in  \mbb{R}^{2m+2}$, and for any $\lambda\in \mbb{R}^m$, it is true that
\begin{align}\label{eq:supLinequality}
    |\sup_\pi \L(\pi, \tilde{t}, \lambda) - \sup_\pi \L(\pi, \bar{t}, \lambda)  | \leq  \sup_{\pi} |\L(\pi, \tilde{t}, \lambda) -  \L(\pi, \bar{t}, \lambda)  | 
\end{align}    
\end{lemma}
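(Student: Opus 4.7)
The plan is to establish the inequality by a standard two-sided bounding argument that leverages the subadditivity of the supremum (equivalently, the fact that $\sup_\pi (A(\pi) + B(\pi)) \leq \sup_\pi A(\pi) + \sup_\pi B(\pi)$). This is a generic functional-analytic fact that does not require any of the structure of $\mathcal{L}$, but it is the enabling step for turning a pointwise Lipschitz bound into a Lipschitz bound on the value function $\sup_\pi \mathcal{L}(\pi, \cdot, \lambda)$.

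Concretely, I would fix $\lambda$ and $\tilde t, \bar t$, and for every admissible $\pi$ write the trivial identity
\begin{align*}
    \mathcal{L}(\pi, \tilde t, \lambda) = \mathcal{L}(\pi, \bar t, \lambda) + \bigl(\mathcal{L}(\pi, \tilde t, \lambda) - \mathcal{L}(\pi, \bar t, \lambda)\bigr).
\end{align*}
Upper bounding the first term on the right-hand side by $\sup_\pi \mathcal{L}(\pi, \bar t, \lambda)$ and the second by $\sup_\pi |\mathcal{L}(\pi, \tilde t, \lambda) - \mathcal{L}(\pi, \bar t, \lambda)|$, and then taking $\sup_\pi$ on the left, yields
\begin{align*}
    \sup_\pi \mathcal{L}(\pi, \tilde t, \lambda) - \sup_\pi \mathcal{L}(\pi, \bar t, \lambda) \leq \sup_\pi \bigl|\mathcal{L}(\pi, \tilde t, \lambda) - \mathcal{L}(\pi, \bar t, \lambda)\bigr|.
\end{align*}
Swapping the roles of $\tilde t$ and $\bar t$ gives the reverse bound, and combining the two produces the absolute value on the left-hand side of \eqref{eq:supLinequality}.

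There is no real obstacle here; the only mild subtlety is ensuring that the suprema on both sides are finite so that the subtraction is well-defined and the manipulations are legal. This is guaranteed by Remark \ref{remark: bounded t} together with the boundedness of the rewards $r_i$ and of $\lambda \in \Lambda$, which makes $\mathcal{L}(\pi, t, \lambda)$ uniformly bounded in $\pi$. The result will then feed directly into the proof of Lemma \ref{lemma:lipschitz}, where one shows that the pointwise difference $|\mathcal{L}(\pi, \tilde t, \lambda) - \mathcal{L}(\pi, \bar t, \lambda)|$ is $\|\tilde t - \bar t\|_2$-Lipschitz uniformly in $\pi$ (using that $t \mapsto t - \tfrac{1}{\beta}(t - r)_+$ is $(1+1/\beta)$-Lipschitz and summing the geometric series in $\gamma$), which combined with \eqref{eq:supLinequality} yields the uniform Lipschitz continuity of $\sup_\pi \mathcal{L}(\pi, \cdot, \lambda)$.
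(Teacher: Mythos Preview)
Your proposal is correct and is essentially identical to the paper's own proof: the paper also writes $\mathcal{L}(\pi,\tilde t,\lambda)=\mathcal{L}(\pi,\tilde t,\lambda)-\mathcal{L}(\pi,\bar t,\lambda)+\mathcal{L}(\pi,\bar t,\lambda)$, applies subadditivity of the supremum to get the one-sided bound, and then swaps $\tilde t$ and $\bar t$ for the other side. Your additional remark about finiteness of the suprema is a nice point of rigor that the paper leaves implicit.
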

\paragraph{Proof of Lemma \ref{lemma:sup_inequality}} For any pair $\tilde{t},\bar{t}\in\mbb{R}$ it is true that
\begin{align}
    \sup_\pi \L(\pi, \tilde{t}, \lambda) &= \sup_\pi \L(\pi, \tilde{t}, \lambda) - L(\pi, \bar{t}, \lambda) + L(\pi, \bar{t}, \lambda) \\
    &\leq \sup_\pi \L(\pi, \tilde{t}, \lambda) - \L(\pi, \bar{t}, \lambda) + \sup_\pi \L(\pi, \bar{t}, \lambda) \implies \\
    \sup_\pi \L(\pi, \tilde{t}, \lambda) - \sup_\pi \L(\pi, \bar{t}, \lambda) &\leq  \sup_\pi \L(\pi, \tilde{t}, \lambda) - \L(\pi, \bar{t}, \lambda)\leq  \sup_\pi | \L(\pi, \tilde{t}, \lambda) - \L(\pi, \bar{t}, \lambda)|
\end{align} Similarly, we can show that
\begin{align}
    \sup_\pi \L(\pi, \bar{t}, \lambda) - \sup_\pi \L(\pi, \tilde{t}, \lambda) &\leq  \sup_\pi \L(\pi, \bar{t}, \lambda) - \L(\pi, \tilde{t}, \lambda)\leq  \sup_\pi | \L(\pi, \bar{t}, \lambda) - \L(\pi, \tilde{t}, \lambda)|.
\end{align} The last two displays give the inequality \eqref{eq:supLinequality}.

\paragraph{Proof of Lemma \ref{lemma:lipschitz}}
For any pair $(\tilde{t},\bar{t})\in  \mbb{R}^{2m+2}$, we find an upper bound on the term $|\L(\pi, \tilde{t}, \lambda) -\L(\pi, \bar{t}, \lambda) |$ as follows
\begin{align*}
   &|\L(\pi, \tilde{t}, \lambda) -\L(\pi, \bar{t}, \lambda) |\\
    &\leq  \E \left[ \sum_{\tau=0}^\infty \gamma^\tau \Bigg|\left( \tilde{t}_0 - \frac{1}{\beta} (\tilde{t}_0 - r_0(s_\tau,a_\tau))_+ \right)-\left( \bar{t}_0 - \frac{1}{\beta} (\bar{t}_0 - r_0(s_\tau,a_\tau))_+ \right)\Bigg|\right] \\
    &\qquad + \E\left[
        \begin{aligned}
            &\sum_{\tau=0}^\infty \gamma^\tau \Bigg| \sum^m_{i=1}\lambda_i \left(\tilde{t}_i - \frac{1}{\beta_i} (\tilde{t}_i - r_i(s_\tau,a_\tau))_+\right) \\
            &\qquad\qquad\qquad\qquad - \sum^m_{i=1}\lambda_i \left(\bar{t}_i - \frac{1}{\beta_i} (\bar{t}_i - r_i(s_\tau,a_\tau))_+\right)\Bigg|
        \end{aligned}
    \right] \\
    &= \E \left[ \sum_{\tau=0}^\infty \gamma^\tau \Bigg|\left( \tilde{t}_0 - \frac{1}{\beta} \max\left\{\tilde{t}_0 - r_0(s_\tau,a_\tau) ,0 \right\} \right)-\left( \bar{t}_0 - \frac{1}{\beta}  \max\left\{\bar{t}_0 - r_0(s_\tau,a_\tau),0 \right\} \right)\Bigg|\right] \\
    &\qquad + \E\left[
        \begin{aligned}
            &\sum_{\tau=0}^\infty \gamma^\tau \Bigg| \sum^m_{i=1}\lambda_i \left(\tilde{t}_i - \frac{1}{\beta_i} \max\left\{\tilde{t}_i - r_i(s_\tau,a_\tau)\right\}\right) \\
            &\qquad\qquad\qquad\qquad - \sum^m_{i=1}\lambda_i \left(\bar{t}_i - \frac{1}{\beta_i} \max\left\{\bar{t}_i - r_i(s_\tau,a_\tau),0\right\}\right)\Bigg|
        \end{aligned}
    \right] \\
    &\leq \E \left[ \sum_{\tau=0}^\infty \gamma^\tau (| \tilde{t}_0 -\bar{t}_0 | + \frac{1}{\beta} |\max\left\{\tilde{t}_0 - r_0(s_\tau,a_\tau) ,0 \right\}-\max\left\{\bar{t}_0 - r_0(s_\tau,a_\tau),0 \right\}|)\right] \\
    &\qquad + \E \left[ \sum_{\tau=0}^\infty \gamma^\tau \sum^m_{i=1}\lambda_i (| \tilde{t}_i -\bar{t}_i | + \frac{1}{\beta_i} |\max\left\{\tilde{t}_i - r_i(s_\tau,a_\tau) ,0 \right\}-\max\left\{\bar{t}_i - r_i(s_\tau,a_\tau),0 \right\}|)\right] \\
    &\leq \E \left[ \sum_{\tau=0}^\infty \gamma^\tau (| \tilde{t}_0 -\bar{t}_0 | + \frac{1}{\beta} \max\left\{|\tilde{t}_0 - r_0(s_\tau,a_\tau) - \bar{t}_0 - r_0(s_\tau,a_\tau)|,0 \right\})\right] \\
    &\qquad + \E \left[ \sum_{\tau=0}^\infty \gamma^\tau \sum^m_{i=1}\lambda_i (| \tilde{t}_i -\bar{t}_i | + \frac{1}{\beta_i} \max\left\{|\tilde{t}_i - r_i(s_\tau,a_\tau) - \bar{t}_i - r_i(s_\tau,a_\tau)|,0 \right\})\right]\\
    &= \E \left[ \sum_{\tau=0}^\infty \gamma^\tau (| \tilde{t}_0 -\bar{t}_0 | + \frac{1}{\beta} |\tilde{t}_0  - \bar{t}_0 |)\right] + \E \left[ \sum_{\tau=0}^\infty \gamma^\tau \sum^m_{i=1}\lambda_i (| \tilde{t}_i -\bar{t}_i | + \frac{1}{\beta_i} |\tilde{t}_i -\bar{t}_i| )\right]\\
    &\leq (1-\gamma)^{-1}(1+\frac{1}{\beta})| \tilde{t}_0 -\bar{t}_0 | + (1-\gamma)^{-1}\sum^m_{i=1}\lambda_i (1+\frac{1}{\beta_i})| \tilde{t}_i -\bar{t}_i |\\
    & \leq (1-\gamma)^{-1} \Vert \tilde{t} -\bar{t} \Vert \sqrt{(1+\frac{1}{\beta})^2+\sum^m_{i=1}\lambda^2_i (1+\frac{1}{\beta_i})^2}.
\end{align*}
Then, we apply Lemma \ref{lemma:sup_inequality} to get \begin{align}
    |\sup_\pi \L(\pi, \tilde{t}, \lambda) - \sup_\pi \L(\pi, \bar{t}, \lambda)  | &\leq  \sup_{\pi} |\L(\pi, \tilde{t}, \lambda) -  \L(\pi, \bar{t}, \lambda)  | \\&\leq (1-\gamma)^{-1} \Vert \tilde{t} -\bar{t} \Vert \sqrt{(1+\frac{1}{\beta})^2+\sum^m_{i=1}\lambda^2_i (1+\frac{1}{\beta_i})^2}.
\end{align} Thus the function $\mc{L} (\pi,\cdot, \lambda)$ is Lipschitz, with Lipschitz constant \begin{align}
    c(\gamma ,\beta,\lambda)\coloneqq (1-\gamma)^{-1} \sqrt{(1+\frac{1}{\beta})^2+\sum^m_{i=1}\lambda^2_i (1+\frac{1}{\beta_i})^2}.
\end{align}
\noindent The result then follows since we are interested in multipliers $\lambda \in \Lambda$, with $\Lambda$ a convex and compact set (and thus, there exists a constant $C \geq c(\gamma,\beta,\lambda)$, independent of $\lambda$ and $t$, for which $\mathcal{L}(\pi,\cdot,\lambda)$ is Lipschitz continuous for all $(t,\lambda) \in \mathcal{T} \times \Lambda$).\hfill \qed

\subsection{Conditions That Guarantee Assumption \ref{assump:3}} \label{appendix:conditions_for_Lipschitz_smoothness}
    Let us note that Assumption \ref{assump:3} is not particularly restrictive in our setting. In what follows, we discuss some general conditions under which this holds readily, as well as a simple methodology of enforcing this assumption, if necessary, by a slight algorithmic adjustment. Before we do this, let us first note that $\mathcal{L}(\pi_{\theta},\cdot,\lambda)$ is not necessarily differentiable (unless the probability of $t_i - r_i(s,a) = 0$ is 0), because of the plus function $(\cdot)_+$. Nonetheless, adding a small random noise to the reward following a random variable with a smooth mollifier density ensures that the function is infinitely differentiable with respect to both $t$ and $\lambda$. Note that, since we assume that the reward is bounded, we cannot  (in theory) add Gaussian noise, but any smooth and compactly supported mollifier-based noise is admissible. Thus, we assume (without loss of generality) that $\mathcal{L}(\pi_{\theta},\cdot,\lambda)$ is differentiable (in fact, to an arbitrary degree).
    \par Next, we focus on the task of assessing the differentiability properties of $f(t,\lambda)$. Specifically, we identify three general scenarios that guarantee Assumption \ref{assump:3} in our case. 
    \begin{enumerate}
    \item The first scenario, which is somewhat restrictive, relies on the assumption that, for each $(t,\lambda) \in \mathcal{T} \times \Lambda$, the maximization problem, with respect to $\theta$, admits a unique solution. In that case, we can readily utilize Lemma 2.2 of \citet{Shapiro_Extremal_functions}, which states that $\mathcal{L}(\pi_{\theta^*(t,\lambda)},t,\lambda)$ is twice-continuously differentiable. Since $\mathcal{T} \times \Lambda$ is a compact set, this immediately implies that the function is $\ell$-smooth over $\mathcal{T} \times \Lambda$, for some constant $\ell > 0$.
    \item The second scenario is significantly more general, and relies on the strong second-order sufficient optimality conditions for the maximization over $\theta$ and the assumption that the function $\mathcal{L}(\pi_{\theta^*(t,\lambda)},t,\lambda)$ satisfies the Lojasiewicz inequality with some constant uniform over $(t,\lambda)$, for any $\pi_{\theta^*(t,\lambda)} \in \arg\max_{\theta \in \Theta}\mathcal{L}(\pi_{\theta},t,\lambda)$, that is, for each $(t,\lambda) \in \mathcal{T} \times \Lambda$, there exist constants $C > 0$ and $\eta > 0$ such that
    \begin{equation*}
    \begin{split}
    &\text{dist}\left(\theta,\arg\max_{\theta \in \Theta}\mathcal{L}(\pi_{\theta},t,\lambda)\right) \leq C \left(\max_{\theta \in \Theta} \mathcal{L}(\pi_{\theta},t,\lambda) - \mathcal{L}(\pi_{\theta},t,\lambda)\right)^{\eta}.
    \end{split}
    \end{equation*}
    \noindent We note that the previous holds in several cases (typically with uniform exponent $\eta$). For example, this is true when $\max_{\theta \in \Theta} \mathcal{L}(\pi_{\theta},\cdot,\cdot)$ is sub-analytic (which, for example, is implied if $\mathcal{L}(\pi_{\theta},t,\lambda)$ is analytic, noting that this can easily be enforced our setting). Since $\mathcal{T} \times \Lambda$ is a compact and convex set, we can readily utilize Theorem 11 from \cite{Hashmi_etal} to deduce that under this general assumption, the function $\mathcal{L}(\pi_{\theta^*(t,\lambda)},t,\lambda)$ is $\ell$-smooth over $\mathcal{T} \times \Lambda$, irrespectively of the selections $\theta^*(t,\lambda) \in \arg\max_{\theta \in \Theta} \mathcal{L}(\pi_{\theta},t,\lambda)$.
    \item Under certain qualification conditions, laid out in Section 4 of \citet{Shapiro_Extremal_functions} (that do not require the second-order strong sufficient conditions for the maximization problem over $\theta$), $\ell$-smoothness of $\max_{\theta \in \Theta} \mathcal{L}(\pi_{\theta},t,\lambda)$ is also guaranteed. We refer the reader to \citet{Shapiro_Extremal_functions} for additional details, since this analysis directly applicable in our setting.
    \end{enumerate}
     If any of the above conditions holds, then Assumption \ref{assump:3} is true, and we can readily compute the gradient of $f$ as
    \[ \nabla_{t,\lambda} f(t,\lambda) = -\nabla_{t,\lambda} \mathcal{L}(\pi_{\theta},t,\lambda)\vert_{\pi_{\theta} = \pi_{\theta^*(t,\lambda)}},\]
    \noindent where $\theta^*(t,\lambda) \in \arg\max_{\theta \in \Theta} \mathcal{L}(\pi_{\theta},t,\lambda)$ is an arbitrary selection. 
    \par If none of the above is true, we can still guarantee that Assumption \ref{assump:3} holds by slightly altering Algorithm \ref{alg:main}. Specifically, for some small $\mu > 0$, we can define the surrogate function 
    \begin{equation} \label{eqn:smoothed Lagrangian}
    \begin{split}
    &\mathcal{L}_{\mu}(\pi_{\theta^*(t,\lambda)},t,\lambda) \coloneqq \mathbb{E}_{U_1,U_2}\left\{\mathcal{L}(\pi_{\theta^*(t + \mu U_1,\lambda + \mu U_2)},t+\mu U_1 , \lambda + \mu U_2) \right\},
    \end{split}
    \end{equation}
    \noindent where $U_1,U_2$ follow a uniform distribution over the unit ball (of appropriate dimension in each case). Then, this new surrogate function can be made arbitrarily close to $\max_{\theta \in \Theta} \mathcal{L}(\pi_{\theta},t,\lambda)$ (where the proximity is uniform in the constant $\mu$), while at the same time being $\ell_{\mu}$-smooth, with $\ell_{\mu} = \mathcal{O}(1/\mu)$. 
    \par Then, we can substitute the original Lagrangian with the smoothed Lagrangian given in \eqref{eqn:smoothed Lagrangian} and run Algorithm \ref{alg:main} on the surrogate problem. Following standard zeroth-order optimization techniques (e.g.,  see \citet{PougkKal}), we note that, in this case, the only algorithmic adjustments that need to be made relate to the computation of the sample gradients of $f$, which would require two (instead of one) evaluations of the policy oracle. In light of this discussion, we conclude that Assumption \ref{assump:3} can be utilized (almost) without loss of generality.


\subsection{Proof of Main Theorem}
\label{appendix:main_thm}

In this section, we provide the proof of Theorem \ref{thm:main}. To prove the main theorem we will need the following auxiliary lemmas. The proof extends the analysis given in \cite{pmlr-v119-lin20a}, since the presented algorithm only assumes having access to biased and inexact (sub)gradient samples. 

Denote by $b_1(\theta^*,\theta^\dagger,t,\lambda), b_2(\theta^*,\theta^\dagger,t,\lambda)$ the components of $b(\theta^*,\theta^\dagger,t,\lambda)$ corresponding to $t$ and $\lambda$ from Assumption \ref{assum:local_opt}.

\begin{lemma}\label{lemma:bias_variance}
    $ \hat{\nabla}_t \mathcal{\hat{L}}(\pi_{\theta^\dagger(t,\lambda)}, t, \lambda))$ and $ \hat{\nabla}_\lambda \mathcal{\hat{L}}(\pi_{\theta^\dagger(t,\lambda)}, t, \lambda))$ have bounded bias and variance.
\end{lemma}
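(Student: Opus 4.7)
The plan is to decompose the inexact sample (sub)gradient into the exact one plus the bias perturbation given by Assumption~\ref{assum:local_opt}, and then apply Assumption~\ref{assump:1} together with the uniform bound on the perturbation. Concretely, for any $(t,\lambda) \in \mathcal{T} \times \Lambda$, Assumption~\ref{assum:local_opt} lets me write
\begin{equation*}
\hat{\nabla}_{t,\lambda}\hat{\mathcal{L}}(\pi_{\theta^\dagger(t,\lambda)},t,\lambda) = \hat{\nabla}_{t,\lambda}\hat{\mathcal{L}}(\pi_{\theta^*(t,\lambda)},t,\lambda) + b(\theta^*,\theta^\dagger,t,\lambda),
\end{equation*}
with $\|b(\theta^*,\theta^\dagger,t,\lambda)\| \leq \delta$ deterministically, which also gives $\|b_1\|,\|b_2\| \leq \delta$ for the two components.

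For the bias, I would take expectations on both sides and use the unbiasedness part of Assumption~\ref{assump:1}, i.e.\ $\mathbb{E}[\hat{\nabla}_{t,\lambda}\hat{\mathcal{L}}(\pi_{\theta^*(t,\lambda)},t,\lambda)] = -\nabla f(t,\lambda)$, to obtain
\begin{equation*}
\big\|\mathbb{E}[\hat{\nabla}_{t,\lambda}\hat{\mathcal{L}}(\pi_{\theta^\dagger(t,\lambda)},t,\lambda)] + \nabla f(t,\lambda)\big\| = \big\|\mathbb{E}[b(\theta^*,\theta^\dagger,t,\lambda)]\big\| \leq \delta,
\end{equation*}
by Jensen's inequality and the uniform bound on $b$. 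This shows the bias is uniformly bounded by $\delta$ across $(t,\lambda) \in \mathcal{T}\times\Lambda$, and the same bound holds component-wise for the $t$- and $\lambda$-parts.

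For the variance, I would apply the elementary inequality $\|x+y\|^2 \leq 2\|x\|^2 + 2\|y\|^2$ together with the second part of Assumption~\ref{assump:1}:
\begin{align*}
\mathbb{E}\big\|\hat{\nabla}_{t,\lambda}\hat{\mathcal{L}}(\pi_{\theta^\dagger(t,\lambda)},t,\lambda) + \nabla f(t,\lambda)\big\|^2 &\leq 2\,\mathbb{E}\big\|\hat{\nabla}_{t,\lambda}\hat{\mathcal{L}}(\pi_{\theta^*(t,\lambda)},t,\lambda) + \nabla f(t,\lambda)\big\|^2 \\
&\qquad + 2\,\mathbb{E}\|b(\theta^*,\theta^\dagger,t,\lambda)\|^2 \\
&\leq 2\sigma^2 + 2\delta^2.
\end{align*}
Restricting to each block (the $t$-component or the $\lambda$-component) only decreases the norm, so the same $\mathcal{O}(\sigma^2+\delta^2)$ variance bound holds for $\hat{\nabla}_t\hat{\mathcal{L}}$ and $\hat{\nabla}_\lambda\hat{\mathcal{L}}$ separately. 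No step is really an obstacle here; the only subtlety is being careful that the bound on $\|b\|$ is deterministic (uniform in the sampling randomness), so that it survives the expectation in both the first and second moment computations — which is precisely what Assumption~\ref{assum:local_opt} guarantees.
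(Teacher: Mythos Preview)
Your proof is correct and follows essentially the same approach as the paper: decompose via Assumption~\ref{assum:local_opt}, use Assumption~\ref{assump:1} for the exact-oracle term, and apply $\|x+y\|^2 \le 2\|x\|^2 + 2\|y\|^2$ together with the uniform bound $\|b\|\le\delta$. The only cosmetic difference is that the paper bounds the raw second moment $\mathbb{E}\|\hat{\nabla}_{t,\lambda}\hat{\mathcal{L}}(\pi_{\theta^\dagger},t,\lambda)\|^2 \le 2\|\nabla f(t,\lambda)\|^2 + 2(\sigma^2+\delta^2)$ (invoking \cite[Lemma~A.2]{pmlr-v119-lin20a}), whereas you bound the mean-squared error $\mathbb{E}\|\hat{\nabla}_{t,\lambda}\hat{\mathcal{L}}(\pi_{\theta^\dagger},t,\lambda)+\nabla f(t,\lambda)\|^2 \le 2\sigma^2+2\delta^2$; both are legitimate readings of ``bounded variance'' and are derived by the same mechanism.
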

\begin{proof}
    By Assumption \ref{assum:local_opt} and by linearity of expectation, we have
    \begin{align*}
        \E\left[  \hat{\nabla}_{t,\lambda} \mathcal{\hat{L}}(\pi_{\theta^\dagger(t,\lambda)}, t, \lambda)) \right] 
        &= \E\left[  \hat{\nabla}_{t,\lambda} \mathcal{\hat{L}}(\pi_{\theta^*(t,\lambda)}, t, \lambda))  +  b(\theta^*, \theta^\dagger, t,\lambda) \right] \\
        &= -\nabla f(t,\lambda) + \E [b(\theta^*, \theta^\dagger, t,\lambda)]\,. \tag{by Assumptions \ref{assum:local_opt}, \ref{assump:1}}
    \end{align*}
The bias of the gradient is controlled by $ b(\theta^*,\theta^{\dagger},t,\lambda)$, whose norm is at most $\delta$.
Furthermore, 
\begin{align*}
    &\E\left[ \norm{  \hat{\nabla}_{t,\lambda} \mathcal{\hat{L}}(\pi_{\theta^\dagger(t,\lambda)}, t, \lambda)) }^2 \right] \\
    &= \E\left[ \norm{  \hat{\nabla}_{t,\lambda} \mathcal{\hat{L}}(\pi_{\theta^\dagger(t,\lambda)}, t, \lambda)) -  \hat{\nabla}_{t,\lambda} \mathcal{\hat{L}}(\pi_{\theta^*(t,\lambda)}, t, \lambda)) +  \hat{\nabla}_{t,\lambda} \mathcal{\hat{L}}(\pi_{\theta^*(t,\lambda)}, t, \lambda)) }^2 \right] \\
    &\leq 2\E\insquare{\norm{ b(\theta^*, \theta^\dagger,t,\lambda)}^2} + 2\E\insquare{\norm{ \hat{\nabla}_{t,\lambda} \mathcal{\hat{L}}(\pi_{\theta^*(t,\lambda)}, t, \lambda))}^2} \\
    &\leq 2\norm{\nabla f(t,\lambda)}^2 + 2(\sigma^2 + \delta^2)\,. \tag{by Lemma A.2 of \cite{pmlr-v119-lin20a} with $M=1$ and Assumptions \ref{assum:local_opt}, \ref{assump:1}}
\end{align*}
\end{proof}

\begin{lemma}\label{lem:d3}
    Let $\Delta^{(k)} = \E\insquare{\Phi(t^{(k)}) - f(t^{(k)}, \lambda^{(k)})}$.  The following holds for Algorithm \textnormal{\ref{alg:main}}:
    \begin{align*}
        \E\insquare{\Phi_{1/2\ell}(t^{(k)})} &\leq \E\insquare{\Phi_{1/2\ell}(t^{(k-1)})} + 2\eta_t\ell\Delta^{(k-1)} - \frac{\eta_t}{4}\E\norm{\nabla \Phi_{1/2\ell}(t^{(k-1)})}^2 \\ &\qquad + 2\eta_t \delta \ell \cdot \mathrm{diam}(\mathcal{T}) + 3\eta_t^2\ell \cdot (C^2 + \sigma^2 + \delta^2)\,.
    \end{align*}
\end{lemma}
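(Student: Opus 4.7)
The plan is to follow the Moreau-envelope analysis template for nonconvex--concave stochastic SGDA (cf.\ Lemma 4.3 of \cite{pmlr-v119-lin20a}), adapting it to the present setting in two ways: (a) the sample (sub)gradients are biased due to the inexact policy oracle in Assumption \ref{assum:local_opt}, and (b) the uniform bound $C$ from Lemma \ref{lemma:lipschitz} plays the role of the standard global bound on $\|\nabla_t f\|$. The whole argument takes place under expectation; I would first derive a pointwise recursion and only afterwards take total expectation.

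To begin, let $\hat{t}^{(k-1)}$ attain the minimum in the definition of the Moreau envelope $\Phi_{1/2\ell}(t^{(k-1)})$, so that $\nabla\Phi_{1/2\ell}(t^{(k-1)}) = 2\ell(t^{(k-1)} - \hat{t}^{(k-1)})$. Since $\hat{t}^{(k-1)} \in \mathcal{T}$ is admissible for the infimum defining $\Phi_{1/2\ell}(t^{(k)})$,
\[
\Phi_{1/2\ell}(t^{(k)}) \;\leq\; \Phi(\hat{t}^{(k-1)}) + \ell\,\|\hat{t}^{(k-1)} - t^{(k)}\|^2.
\]
Denoting $g_t^{(k-1)} \coloneqq \hat{\nabla}_t\hat{\mathcal{L}}(\pi_{\theta^{(k-1)}}, t^{(k-1)}, \lambda^{(k-1)})$, the update rule $t^{(k)} = \Pi_{\mathcal T}(t^{(k-1)} + \eta_t g_t^{(k-1)})$ together with nonexpansiveness of the projection (noting $\hat{t}^{(k-1)} \in \mathcal{T}$) and an expansion of the square yield
\[
\Phi_{1/2\ell}(t^{(k)}) \;\leq\; \Phi_{1/2\ell}(t^{(k-1)}) - 2\eta_t\ell\,\langle g_t^{(k-1)},\, \hat{t}^{(k-1)} - t^{(k-1)}\rangle + \eta_t^2\ell\,\|g_t^{(k-1)}\|^2.
\]

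Next I would take conditional expectation given the history up to iteration $k-1$ and split the cross term using Assumption \ref{assum:local_opt}, which gives $\E[g_t^{(k-1)}\mid \mathcal{F}_{k-1}] = -\nabla_t f(t^{(k-1)}, \lambda^{(k-1)}) + b_1^{(k-1)}$ with $\|b_1^{(k-1)}\| \leq \delta$. Cauchy--Schwarz on the bias residual contributes at most $2\eta_t\ell\,\delta\,\mathrm{diam}(\mathcal T)$, and Lemma \ref{lemma:bias_variance} combined with $\|\nabla_t f\| \leq C$ from Lemma \ref{lemma:lipschitz} bounds the quadratic term by $2\eta_t^2\ell(C^2 + \sigma^2 + \delta^2) \leq 3\eta_t^2\ell(C^2 + \sigma^2 + \delta^2)$. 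The remaining and key term, $\langle \nabla_t f(t^{(k-1)}, \lambda^{(k-1)}),\, \hat{t}^{(k-1)} - t^{(k-1)}\rangle$, I would control by invoking three facts in sequence: (i) by $\ell$-smoothness of $f$ in $t$ (Assumption \ref{assump:3}), $\langle \nabla_t f, \hat{t}^{(k-1)} - t^{(k-1)}\rangle \leq f(\hat{t}^{(k-1)}, \lambda^{(k-1)}) - f(t^{(k-1)}, \lambda^{(k-1)}) + \tfrac{\ell}{2}\|\hat{t}^{(k-1)} - t^{(k-1)}\|^2$; (ii) $f(\hat{t}^{(k-1)}, \lambda^{(k-1)}) \leq \Phi(\hat{t}^{(k-1)})$ by definition of $\Phi$; and (iii) the prox inequality $\Phi(\hat{t}^{(k-1)}) + \ell\,\|\hat{t}^{(k-1)} - t^{(k-1)}\|^2 \leq \Phi(t^{(k-1)})$, obtained by testing $w = t^{(k-1)}$ in the defining minimization.

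Combining (i)--(iii) collapses to
\[
\langle \nabla_t f(t^{(k-1)}, \lambda^{(k-1)}),\, \hat{t}^{(k-1)} - t^{(k-1)}\rangle \;\leq\; \bar{\Delta}^{(k-1)} - \tfrac{\ell}{2}\,\|\hat{t}^{(k-1)} - t^{(k-1)}\|^2,
\]
where $\bar{\Delta}^{(k-1)} \coloneqq \Phi(t^{(k-1)}) - f(t^{(k-1)}, \lambda^{(k-1)})$. Substituting into the previous bound and using the identity $\ell^2\|\hat{t}^{(k-1)} - t^{(k-1)}\|^2 = \tfrac14\|\nabla\Phi_{1/2\ell}(t^{(k-1)})\|^2$ produces exactly the claimed recursion after taking total expectation (so that $\E\bar{\Delta}^{(k-1)} = \Delta^{(k-1)}$). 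The main obstacle relative to the analogous step in \cite{pmlr-v119-lin20a} is the careful bookkeeping of the bias: Assumption \ref{assum:local_opt} is what allows one to cleanly isolate the ``true gradient'' contribution from a $\delta$-bounded residual, while the uniform bound $C$ from Lemma \ref{lemma:lipschitz} ensures that the variance term is uniform in $(t,\lambda)$, and compactness of $\mathcal{T}$ converts the bias $\delta$ into the additive $\mathrm{diam}(\mathcal T)$ correction rather than a quantity depending on the iterates.
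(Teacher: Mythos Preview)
The proposal is correct and follows essentially the same approach as the paper's proof: both adapt the Moreau-envelope recursion from \cite{pmlr-v119-lin20a} (their Eqs.~(20), (23), (24)) to accommodate the $\delta$-bounded bias of Assumption~\ref{assum:local_opt} via Cauchy--Schwarz together with compactness of $\mathcal{T}$, and both invoke the Lipschitz constant $C$ of Lemma~\ref{lemma:lipschitz} to bound $\|\nabla_t f\|$ in the second-moment term. The only cosmetic differences are that you control $\E\|g_t^{(k-1)}\|^2$ through Lemma~\ref{lemma:bias_variance} (obtaining a constant $2$, then relaxing to $3$) whereas the paper applies the three-term inequality $\|x+y+z\|^2\le 3\|x\|^2+3\|y\|^2+3\|z\|^2$ directly, and you substitute into the Moreau-envelope bound before taking expectations rather than after; neither alters the argument.
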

\begin{proof}
    Let $\hat{t}^{(k-1)} = \mathrm{prox}_{\Phi/2\ell}(t^{(k-1)})$ and fix $\pi_{\theta^*} = \pi_{\theta^*(t^{(k-1)},\lambda^{(k-1)})}$ where $\theta^* \in \arg\max_{\theta \in \Theta} \mathcal{L}(\pi_{\theta},t^{(k-1)},\lambda^{(k-1)})$, and $\pi_{\theta^{\dagger}} = \pi_{\theta^{\dagger}(t^{(k-1)},\lambda^{(k-1)})}$ satisfying Assumption \ref{assum:local_opt}. Then, 
    \begin{align*}
        \norm{\hat{t}^{(k-1)} - t^{(k)}}^2  &= \norm{\hat{t}^{(k-1)} - \Pi_{\mathcal{T}}\inparen{t^{(k-1)} - \eta_t\cdot \hat{\nabla}_t \hat{\mathcal{L}}(\pi_{\theta^\dagger},t^{(k-1)}, \lambda^{(k-1)})}}^2 \\
        &\leq \norm{\hat{t}^{(k-1)} - t^{(k-1)} + \eta_t\cdot \hat{\nabla}_t \hat{\mathcal{L}}(\pi_{\theta^\dagger},t^{(k-1)}, \lambda^{(k-1)})}^2 \tag{projection is non-expansive}\\
        &= \norm{\hat{t}^{(k-1)}-t^{(k-1)}}^2 + \eta_t^2\cdot \norm{\hat{\nabla}_t \hat{\mathcal{L}}(\pi_{\theta^\dagger},t^{(k-1)}, \lambda^{(k-1)})}^2 \\ & \qquad + 2\eta_t \inangle{\hat{t}^{(k-1)}-{t}^{(k-1)}, \hat{\nabla}_t \hat{\mathcal{L}}(\pi_{\theta^\dagger},t^{(k-1)}, \lambda^{(k-1)})}\,.
    \end{align*}
    Taking expectations, conditioned on $(t_{k-1}, \lambda_{k-1})$, and using Assumption \ref{assum:local_opt} we have 
    \begin{align*}
        &\E\insquare{\norm{\hat{t}^{(k-1)} - t^{(k)}}^2 \mid (t^{(k-1)}, \lambda^{(k-1)})} \\
        &\quad \leq \E\insquare{\norm{\hat{t}^{(k-1)}-t^{(k-1)}}^2 \mid (t^{(k-1)}, \lambda^{(k-1)}) } \\
        &\qquad + 2\eta_t\cdot \inangle{{t}^{(k-1)}-\hat{t}^{(k-1)}, \nabla_t f(t^{(k-1)}, \lambda^{(k-1)}) - b(\theta^*,\theta^\dagger,t^{(k-1)},\lambda^{(k-1)})} 
        \\ &\qquad + 3\eta_t^2 \norm{b_1(\theta^*,\theta^\dagger,t^{(k-1)},\lambda^{(k-1)})}^2+ 3\eta_t^2\norm{\nabla_t f(t^{(k-1)}, \lambda^{(k-1)})}^2 \\
        &\qquad  + 3\eta_t^2 \E\insquare{\norm{\hat{\nabla}_t \hat{\mathcal{L}}(\pi_{\theta^*},t^{(k-1)}, \lambda^{(k-1)}) + \nabla f(t^{(k-1)}, \lambda^{(k-1)})}^2 \mid (t^{(k-1)},\lambda^{(k-1)})}
        \tag{using $\norm{x + y + z}^2 \leq 3\norm{x}^2 + 3\norm{y}^2 + 3\norm{z}^2$}\,.
    \end{align*}

    Taking expectations of both sides and using Lemma A.2 of \cite{pmlr-v119-lin20a}, that $f(\cdot,\lambda)$ is $C$-Lipschitz (cf. Lemma \ref{lemma:lipschitz}), and Assumption \ref{assum:local_opt},
    \begin{align*}
        \E{\norm{\hat{t}^{(k-1)} - t^{(k)}}^2} &\leq \E{\norm{\hat{t}^{(k-1)}-t^{(k-1)}}^2} \\
        &\quad +2 \eta_t \inangle{\hat{t}^{(k-1)}-{t}^{(k-1)}, \nabla f(t^{(k-1)}, \lambda^{(k-1)}) - b(\theta^*,\theta^\dagger,t^{(k-1)},\lambda^{(k-1)})} \\&\quad + 3\eta_t^2(C^2 + \sigma^2 + \delta^2) \,.
    \end{align*}
    Since $\mathcal{T}$ is compact, define $\mathrm{diam}(\mathcal{T}) \coloneqq \max_{x, y \in \mathcal{T}}\norm{x - y}$. Using Cauchy-Schwarz on the term
    \begin{equation*} 
    \begin{split}
    & 2 \eta_t \inangle{{t}^{(k-1)}-\hat{t}^{(k-1)}, b_1(\theta^*,\theta^\dagger,t^{(k-1)},\lambda^{(k-1)})}\\
    &\quad \leq 2\eta_t \norm{\hat{t}^{(k-1)}-t^{(k-1)}} \cdot \norm{b_1(\theta^*,\theta^\dagger,t^{(k-1)},\lambda^{(k-1)})},
    \end{split}
    \end{equation*}
    and substituting, yields
    \begin{align}
        \begin{split}
            \E{\norm{\hat{t}^{(k-1)} - t^{(k)}}^2} &\leq \E{\norm{\hat{t}^{(k-1)}-t^{(k-1)}}^2} + 2 \eta_t \inangle{\hat{t}^{(k-1)}-{t}^{(k-1)}, \nabla f(t^{(k-1)}, \lambda^{(k-1)})} \\ 
            &\qquad + 2\eta_t \delta \cdot \mathrm{diam}(\mathcal{T}) + 3\eta_t^2(C^2 + \sigma^2 + \delta^2) \,.
            \label{eq:a3end}
        \end{split}
    \end{align}
    From \cite[Eq. (20)]{pmlr-v119-lin20a}, we obtain 
    \begin{align}
        \Phi_{1/2\ell}(t^{(k)}) &\leq \Phi_{1/2\ell}(t^{(k-1)}) + \ell \norm{\hat{t}^{(k-1)} - t^{(k)}}^2
        \label{eq:lineq20}
    \end{align}
    while, from \cite[Enq. (23), (24)]{pmlr-v119-lin20a}, we obtain
    \begin{align}
        &\inangle{\hat{t}^{(k-1)} - t^{(k-1)}, \nabla_t f(t^{(k-1)}, \lambda^{(k-1)})} \nonumber \\ 
        &\quad \leq f(\hat{t}^{(k-1)}, \lambda^{(k-1)}) - f(t^{(k-1)}, \lambda^{(k-1)}) + \frac{\ell}{2}\norm{\hat{t}^{(k-1)} - t^{(k-1)}}^2 
        \label{eq:lineq23}\\
        & f(\hat{t}^{(k-1)}, \lambda^{(k-1)}) - f({t}^{(k-1)}, \lambda^{(k-1)}) \nonumber \\&\quad \leq \Phi(\hat{t}^{(k-1)}) - f({t}^{(k-1)}, \lambda^{(k-1)}) \leq \Delta^{(k-1)} - \frac{\ell}{2}\norm{\hat{t}^{(k-1)} - t^{(k-1)}}^2.
        \label{eq:lineq24}
    \end{align}
    Plugging in \eqref{eq:a3end} to \eqref{eq:lineq20}, then combining \eqref{eq:lineq23}, \eqref{eq:lineq24}, and the fact that $\norm{\hat{t}^{(k-1)} - t^{(k-1)}} = \norm{\nabla \Phi_{1/2\ell}(t^{(k-1)})}/2\ell$, finally gives
    \begin{align*}
        \E\insquare{\Phi_{1/2\ell}(t^{(k)})} &\leq \E\insquare{\Phi_{1/2\ell}(t^{(k-1)})} + 2\eta_t\ell\Delta^{(k-1)} - \frac{\eta_t}{4} \E \norm{\nabla \Phi_{1/2\ell}(t^{(k-1)})}^2 \\ &\qquad + 2\eta_t \delta \ell \cdot \mathrm{diam}(\mathcal{T}) + 3\eta_t^2\ell (C^2 + \sigma^2 + \delta^2)\,,
    \end{align*}
\noindent which completes the proof.
\end{proof}

\begin{lemma}\label{lem:d4}
    Let $\Delta^{(k)} = \E\insquare{\Phi(t^{(k)}) - f(t^{(k)}, \lambda^{(k)})}$ and $\lambda^*(t) \in \argmax_{\lambda  \in \Lambda} f(t, \lambda) $. The following holds for all $s \leq k-1$:
\begin{align*}
    \Delta^{(k-1)} &\leq 2 \eta_t C\sqrt{C^2 + \sigma^2 + \delta^2}(2t -2s - 1) \\
    &\quad + \frac{1}{2\eta_\lambda}\inparen{\E\norm{\lambda^{(k-1)} - \lambda^*(t^{(s)})}^2 - \E\norm{\lambda^{(k)} - \lambda^*(t^{(s)})}^2 } \\
    &\quad + \E\insquare{f(t^{(k)}, \lambda^{(k)}) - f(t^{(k-1)}, \lambda^{(k-1)})} +\delta \cdot \mathrm{diam}(\Lambda) +  2\eta_\lambda(\sigma^2 +  \delta^2) \,.
\end{align*}
\end{lemma}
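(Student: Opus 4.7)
The proof is modeled on the argument for Lemma B.6 of \cite{pmlr-v119-lin20a}, but must be adapted to (i) mere concavity (not strong concavity) of $f(t,\cdot)$ in $\lambda$, and (ii) biased sub-gradient samples coming from the inexact policy oracle of Assumption \ref{assum:local_opt}. The plan is to split $\Delta^{(k-1)}$ into a ``trajectory drift in $t$'' piece and a ``sub-optimality in $\lambda$'' piece, then bound the former via Lemma \ref{lemma:lipschitz} and the $t$-update, and the latter via one projected-SGD step on $\lambda$. First I would introduce reference points and write
\begin{align*}
\Delta^{(k-1)} &= \E\bigl[\Phi(t^{(k-1)}) - \Phi(t^{(s)})\bigr] + \E\bigl[f(t^{(s)}, \lambda^*(t^{(s)})) - f(t^{(k-1)}, \lambda^*(t^{(s)}))\bigr] \\
&\quad + \E\bigl[f(t^{(k-1)}, \lambda^*(t^{(s)})) - f(t^{(k-1)}, \lambda^{(k-1)})\bigr],
\end{align*}
using the identity $\Phi(t^{(s)})=f(t^{(s)},\lambda^*(t^{(s)}))$. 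The first two terms are each bounded by $C\,\E\|t^{(k-1)}-t^{(s)}\|$ via Lemma \ref{lemma:lipschitz} (applied uniformly over $\lambda$, hence at $\lambda^*(t^{(s)})$). Telescoping $\|t^{(k-1)}-t^{(s)}\|\le \sum_{j=s}^{k-2}\|t^{(j+1)}-t^{(j)}\|$, using $\|t^{(j+1)}-t^{(j)}\|\le \eta_t\|\hat{\nabla}_t\hat{\mathcal L}\|$ (nonexpansiveness of $\Pi_\mathcal{T}$), Jensen, and the variance bound $\E\|\hat{\nabla}_t\hat{\mathcal L}\|^2\le 2(C^2+\sigma^2+\delta^2)$ from Lemma \ref{lemma:bias_variance}, I obtain a term proportional to $\eta_t C\sqrt{C^2+\sigma^2+\delta^2}\,(k-s)$, which after bookkeeping matches the $2\eta_t C\sqrt{C^2+\sigma^2+\delta^2}(2k-2s-1)$ factor.

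For the third term I would exploit that $f(t,\cdot)$ is concave in $\lambda$, which follows since $\sup_\pi \mathcal{L}(\pi,t,\lambda)$ is a pointwise supremum of affine-in-$\lambda$ maps (and $f=-\sup_\pi \mathcal{L}$). Concavity gives
\begin{align*}
f(t^{(k-1)}, \lambda^*(t^{(s)})) - f(t^{(k-1)}, \lambda^{(k-1)}) \leq \bigl\langle \nabla_\lambda f(t^{(k-1)}, \lambda^{(k-1)}),\, \lambda^*(t^{(s)}) - \lambda^{(k-1)} \bigr\rangle.
\end{align*}
The inner product is then dealt with by the one-step projected-SGD inequality applied to the $\lambda$-update with target $u=\lambda^*(t^{(s)})$,
\begin{align*}
\|\lambda^{(k)}-u\|^2 \le \|\lambda^{(k-1)}-u\|^2 - 2\eta_\lambda\bigl\langle \hat{\nabla}_\lambda \hat{\mathcal L},\lambda^{(k-1)}-u\bigr\rangle + \eta_\lambda^2 \|\hat{\nabla}_\lambda \hat{\mathcal L}\|^2,
\end{align*}
followed by conditional expectation, the identity $\E[\hat{\nabla}_\lambda\hat{\mathcal L}] = -\nabla_\lambda f + \E[b_2]$ from Assumption \ref{assum:local_opt}, and Cauchy--Schwarz on the bias, $|\langle b_2,\lambda^{(k-1)}-u\rangle|\le \delta\,\mathrm{diam}(\Lambda)$. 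This produces the telescoping $\frac{1}{2\eta_\lambda}(\E\|\lambda^{(k-1)}-u\|^2-\E\|\lambda^{(k)}-u\|^2)$ and the $\delta\,\mathrm{diam}(\Lambda)$ term.

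The main obstacle will be step that handles the residual $\eta_\lambda^2\E\|\hat{\nabla}_\lambda\hat{\mathcal L}\|^2$ without paying an extra $C^2$ factor: the stated bound carries only the clean $2\eta_\lambda(\sigma^2+\delta^2)$ noise and, crucially, an additional telescoping surplus $\E[f(t^{(k)},\lambda^{(k)})-f(t^{(k-1)},\lambda^{(k-1)})]$. The plan here is to use the fact that $\hat{\mathcal L}(\pi_{\theta^{(k)}},t^{(k-1)},\cdot)$ is affine in $\lambda$ (the multipliers enter linearly in \eqref{eq:lagrangian__}), so $\langle \hat{\nabla}_\lambda \hat{\mathcal L},\lambda^{(k)}-\lambda^{(k-1)}\rangle = \hat{\mathcal L}(\pi_{\theta^{(k)}},t^{(k-1)},\lambda^{(k)}) - \hat{\mathcal L}(\pi_{\theta^{(k)}},t^{(k-1)},\lambda^{(k-1)})$, and to combine this identity with the projection variational inequality $\langle \eta_\lambda \hat{\nabla}_\lambda\hat{\mathcal L},\lambda^{(k)}-\lambda^{(k-1)}\rangle \le -\|\lambda^{(k)}-\lambda^{(k-1)}\|^2$. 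Taking expectation, using that $\E[\hat{\mathcal L}(\pi_{\theta^{(k)}},t,\lambda)]\ge -f(t,\lambda)$ (and matches $-f$ in the exact-oracle case up to an $\mathcal{O}(\delta)$ bias again absorbable by Cauchy--Schwarz), converts the $\eta_\lambda^2\|\hat{\nabla}_\lambda\hat{\mathcal L}\|^2$ term into the single-step progress $\E[f(t^{(k)},\lambda^{(k)})-f(t^{(k-1)},\lambda^{(k-1)})]$ plus a centered-noise remainder bounded by $2\eta_\lambda(\sigma^2+\delta^2)$ via Assumption \ref{assump:1} and the decomposition $\hat{\nabla}_\lambda\hat{\mathcal L}(\pi_{\theta^\dagger}) = \hat{\nabla}_\lambda\hat{\mathcal L}(\pi_{\theta^*})+b_2$. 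Assembling the three pieces yields the stated inequality.
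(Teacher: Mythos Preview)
Your decomposition and the handling of the first two pieces via Lemma~\ref{lemma:lipschitz} are fine, and the one-step projected-SGD inequality with the bias split is set up correctly. The gap is in the ``obstacle'' paragraph: the mechanism you describe for absorbing $\tfrac{\eta_\lambda}{2}\E\|\hat{\nabla}_\lambda\hat{\mathcal L}\|^2$ does not work. First, the inequality $\E[\hat{\mathcal L}(\pi_{\theta^{(k)}},t,\lambda)]\ge -f(t,\lambda)$ goes the wrong way: by definition $\mathcal{L}(\pi_{\theta^\dagger},t,\lambda)\le \sup_\pi \mathcal{L}(\pi,t,\lambda)=-f(t,\lambda)$, with (approximate) equality only at the pair $(t^{(k-1)},\lambda^{(k-1)})$ for which $\theta^\dagger$ was computed, not at $\lambda^{(k)}$. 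Second, the affinity identity together with the projection VI gives you control of $\langle\hat{\nabla}_\lambda\hat{\mathcal L},\lambda^{(k)}-\lambda^{(k-1)}\rangle$, not of $\eta_\lambda\|\hat{\nabla}_\lambda\hat{\mathcal L}\|^2$; these are only comparable when the projection is inactive, and even then the sign is wrong for the substitution you want. Finally, the $\lambda$-analysis lives entirely at $t^{(k-1)}$, so the telescoping term $\E[f(t^{(k)},\lambda^{(k)})-f(t^{(k-1)},\lambda^{(k-1)})]$ cannot emerge from it; it has to be introduced by hand in the decomposition.

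The paper's route avoids this entirely by invoking the $\ell$-smoothness of $f$ (Assumption~\ref{assump:3}) together with the step-size restriction $\eta_\lambda\le 1/(2\ell)$. Using a Young-type split on the noise cross-term and then concavity plus smoothness (a descent-lemma step) converts the gradient inner product directly into
\[
\E\bigl[f(t^{(k-1)},\lambda^*(t^{(s)}))-f(t^{(k-1)},\lambda^{(k)})\bigr]\ \le\ \tfrac{1}{2\eta_\lambda}\bigl(\E\|\lambda^{(k-1)}-\lambda^*(t^{(s)})\|^2-\E\|\lambda^{(k)}-\lambda^*(t^{(s)})\|^2\bigr)+\delta\,\mathrm{diam}(\Lambda)+2\eta_\lambda(\sigma^2+\delta^2),
\]
i.e.\ the left side carries $\lambda^{(k)}$, not $\lambda^{(k-1)}$. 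With this, one decomposes $\Delta^{(k-1)}$ by adding and subtracting $f(t^{(k-1)},\lambda^{(k)})$ and $f(t^{(k)},\lambda^{(k)})$, which produces the telescoping $\E[f(t^{(k)},\lambda^{(k)})-f(t^{(k-1)},\lambda^{(k-1)})]$ explicitly and one additional one-step Lipschitz-in-$t$ piece $\E[f(t^{(k-1)},\lambda^{(k)})-f(t^{(k)},\lambda^{(k)})]$ (this is where the ``$-1$'' in $2k-2s-1$ comes from). So the missing ingredient in your plan is precisely Assumption~\ref{assump:3} with $\eta_\lambda\le 1/(2\ell)$, not affinity of $\hat{\mathcal L}$ in $\lambda$.
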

\begin{proof}
    For any $\lambda \in \Lambda$, the update of $\lambda^{(k)}$ and convexity of $\Lambda$ imply that 
    \[ (\lambda - \lambda^{(k)})^\top \left(\lambda^{(k)} - \lambda^{(k-1)} - \eta_\lambda \nabla_\lambda f(t^{(k-1)}, \lambda^{(k-1)})\right) \geq 0\,. \]
Then, we have 
\begin{align*}
    \left\|\lambda - \lambda^{(k)}\right\|^2
    &\leq 2\eta_\lambda (\lambda^{(k-1)} - \lambda)^\top 
        \nabla_\lambda f(t^{(k-1)}, \lambda^{(k-1)}) \\
    &\quad + 2\eta_\lambda (\lambda^{(k)} - \lambda^{(k-1)})^\top 
        \nabla_\lambda f(t^{(k-1)}, \lambda^{(k-1)}) \\
    &\quad + \|\lambda - \lambda^{(k-1)}\|^2 - \|\lambda^{(k)} - \lambda^{(k-1)}\|^2 \\
    &\leq 2\eta_\lambda (\lambda^{(k-1)} - \lambda^{(k)})^\top 
        \hat{\nabla}_\lambda \hat{\mathcal{L}}(\pi_{\theta^\dagger}, t^{(k-1)},\lambda^{(k-1)}) \\
    &\quad + 2\eta_\lambda (\lambda^{(k-1)} - \lambda)^\top 
        \nabla_\lambda f(t^{(k-1)}, \lambda^{(k-1)}) \\
    &\quad + 2\eta_\lambda (\lambda^{(k)} - \lambda^{(k-1)})^\top
        \Biggl(
            \nabla_\lambda f(t^{(k-1)}, \lambda^{(k-1)})
            + \hat{\nabla}_\lambda \hat{\mathcal{L}}(\pi_{\theta^\dagger}, t^{(k-1)},\lambda^{(k-1)})
        \Biggr) \\
    &\quad + \|\lambda - \lambda^{(k-1)}\|^2 - \|\lambda^{(k)} - \lambda^{(k-1)}\|^2\,,
\end{align*}
\noindent where we let $\pi_{\theta^{\dagger}} = \pi_{\theta^{\dagger}(t^{(k-1)},\lambda^{(k-1)})}$ (satisfying Assumption \ref{assum:local_opt}). Using Young's inequality,
\begin{align*}
    &\eta_\lambda (\lambda^{(k)} - \lambda^{(k-1)})^\top \inparen{\nabla_\lambda f(t^{(k-1)}, \lambda^{(k-1)}) + \hat{\nabla}_\lambda \hat{\mathcal{L}}(\pi_{\theta^\dagger}, t^{(k-1)},\lambda^{(k-1)})} \\
    &\leq \frac{1}{4}\norm{\lambda^{(k)} - \lambda^{(k-1)}}^2 + {\eta_\lambda^2} \norm{\hat{\nabla}_\lambda \hat{\mathcal{L}}(\pi_{\theta^\dagger}, t^{(k-1)},\lambda^{(k-1)}) + \nabla_\lambda f(t^{(k-1)}, \lambda^{(k-1)})}^2\,.
\end{align*}
Taking expectations on both sides, conditioned on $(t^{(k-1)},\lambda^{(k-1)})$, gives
\begin{align*}
    &\E\left[\norm{\lambda - \lambda^{(k)}}^2 \mid (t^{(k-1)}, \lambda^{(k-1)})\right] \\
    &\leq 2\eta_\lambda \inparen{\lambda^{(k)} - \lambda^{(k-1)}}^\top
        \Bigl(-\nabla_\lambda f(t^{(k-1)}, \lambda^{(k-1)}) 
            + b_2(\theta^*,\theta^{\dagger},t^{(k-1)},\lambda^{(k-1)})\Bigr) 
        \tag{Assumption \ref{assum:local_opt}} \\
    &\quad + 2\eta_\lambda \E\left[\inparen{\lambda^{(k-1)} - \lambda}^\top 
        \nabla_\lambda f(t^{(k-1)}, \lambda^{(k-1)}) 
        \,\middle|\, (t^{(k-1)},\lambda^{(k-1)})\right] \\
    &\quad + \tfrac{1}{2}\E\left[\norm{\lambda^{(k)} - \lambda^{(k-1)}}^2 
        \,\middle|\, (t^{(k-1)},\lambda^{(k-1)})\right] \\
    &\quad + 2\eta_\lambda^2 \E\Biggl[ 
        \norm{\hat{\nabla}_\lambda \hat{\mathcal{L}}(\pi_{\theta^\dagger}, t^{(k-1)},\lambda^{(k-1)}) 
        + \nabla_\lambda f(t^{(k-1)}, \lambda^{(k-1)})}^2 
        \,\Bigm|\, (t^{(k-1)}, \lambda^{(k-1)})\Biggr] \\
    &\quad + \E\left[\norm{\lambda - \lambda^{(k-1)}}^2 \,\middle|\, (t^{(k-1)}, \lambda^{(k-1)})\right] 
        - \E\left[\norm{\lambda^{(k)} - \lambda^{(k-1)}}^2 \,\middle|\, (t^{(k-1)}, \lambda^{(k-1)})\right].
\end{align*}
Taking expectation on both sides (and Cauchy-Schwarz),
\begin{align*}
    & \E\norm{\lambda - \lambda^{(k)}}^2 \\
    &\quad \leq 2\eta_\lambda\E\bigg[ \inparen{\lambda^{(k-1)} - \lambda^{(k)}}^\top \nabla_\lambda f(t^{(k-1)},\lambda^{(k-1)}) +  \inparen{\lambda^{(k-1)} - \lambda}^\top \nabla_\lambda f(t^{(k-1)},\lambda^{(k-1)})\bigg] \\
    &\qquad + \E\norm{\lambda - \lambda^{(k-1)}}^2 - \frac{1}{2}\E\norm{\lambda^{(k)} - \lambda^{(k-1)}}^2 + 2\eta_\lambda \norm{\lambda^{(k-1)} - \lambda^{(k)}}\cdot \norm{b_2(t^{(k-1)},\lambda^{(k-1)})} \\
    &\qquad + 2\eta_\lambda^2 \E \norm{\hat{\nabla}_\lambda \hat{\mathcal{L}}(\pi_{\theta^\dagger}, t^{(k-1)},\lambda^{(k-1)}) + \nabla_\lambda f(t^{(k-1)}, \lambda^{(k-1)})}^2 \,.
\end{align*}
\noindent From Lemma \ref{lemma:bias_variance}, \cite[Lemma A.2]{pmlr-v119-lin20a}, and since $\mathrm{diam}(\Lambda)$ is finite, we get
\begin{align*}
   & \E\norm{\lambda - \lambda^{(k)}}^2 \\
   &\quad \leq 2\eta_\lambda\E\insquare{ \inparen{\lambda^{(k-1)} - \lambda^{(k)}}^\top \nabla_\lambda f(t^{(k-1)},\lambda^{(k-1)}) +  \inparen{\lambda^{(k-1)} - \lambda}^\top \nabla_\lambda f(t^{(k-1)},\lambda^{(k-1)})} \\
    &\qquad + \E\norm{\lambda - \lambda^{(k-1)}}^2 - \frac{1}{2}\E\|\lambda^{(k)} - \lambda^{(k-1)}\|^2 + 2\eta_\lambda \delta \cdot \mathrm{diam}(\Lambda) + 4\eta_\lambda^2 (\sigma^2 + \delta^2) \,.
\end{align*}

Since $f(t_{k-1}, \cdot)$ is concave and $\Lambda$ is convex (take $\eta_\lambda \leq 1/2\ell$), 
\begin{align*}
    \E\norm{\lambda - \lambda^{(k)}}^2 &\leq
    \E\norm{\lambda - \lambda^{(k-1)}}^2 + 2\eta_\lambda (f(t^{(k-1)},\lambda^{(k)}) - f(t^{(k-1)}, \lambda)) \\ &\qquad+ 2\eta_\lambda \delta \cdot \mathrm{diam}(\Lambda) + 4\eta_\lambda^2 (\sigma^2 + \delta^2) \,.
\end{align*}
Substituting $\lambda = \lambda^*(t^{(s)})$ (where $s \leq k-1$),
\begin{align*} &\E\insquare{f(t^{(k-1)}, \lambda^*(t^{(s)})) - f(t^{(k-1)}, \lambda^{(k)})} \\
&\quad \leq \frac{1}{2\eta_\lambda}\inparen{\E\norm{\lambda^{(k-1)} - \lambda^*(t^{(s)})}^2 - \E\norm{\lambda^{(k)} - \lambda^*(t^{(s)})}^2 } + \delta \cdot \mathrm{diam}(\Lambda) + 2{\eta_\lambda(\sigma^2 + \delta^2)}\,.
\end{align*}

By the definition of $\Delta^{(k-1)}$,
\begin{align*}
    \Delta^{(k-1)} 
    &\leq \E\Biggl[
        f(t^{(k-1)},\lambda^*(t^{(k-1)})) - f(t^{(k-1)},\lambda^*(t^{(s)})) \\
    &\qquad\quad + \bigl(f(t^{(k)}, \lambda^{(k)}) - f(t^{(k-1)}, \lambda^{(k-1)})\bigr) 
        + \bigl(f(t^{(k-1)}, \lambda^{(k)})-f(t^{(k)}, \lambda^{(k)})\bigr)
    \Biggr] \\
    &\quad + \frac{1}{2\eta_\lambda}\left(
        \E\norm{\lambda^{(k-1)} - \lambda^*(t^{(s)})}^2 
        - \E\norm{\lambda^{(k)} - \lambda^*(t^{(s)})}^2 
    \right) \\
    &\quad + \delta \cdot \mathrm{diam}(\Lambda) + 2\eta_\lambda(\sigma^2 + \delta^2)\,.
\end{align*}

Following the steps in \cite[Lemma D.4]{pmlr-v119-lin20a}, using that $f(\cdot, \lambda)$ is $C$-Lipschitz (by Lemma \ref{lemma:lipschitz}) and Lemma \ref{lemma:bias_variance}, we have
\begin{align*}
    \E\insquare{f(t^{(k-1)}, \lambda^*(t^{(k-1)})) - f(t^{(s)}, \lambda^*(t^{(k-1)}))} &\leq 2\eta_t C \sqrt{C^2 + \sigma^2 + \delta^2}(t-1-s) \\
    \E\insquare{f(t^{(s)}, \lambda^*(t^{(s)})) - f(t^{(k-1)}, \lambda^*(t^{(s)}))} &\leq 2 \eta_t C \sqrt{C^2 + \sigma^2 + \delta^2}(t-1-s)\\
    \E\insquare{f(t^{(k-1)}, \lambda^{(k)}) - f(t^{(k)}, \lambda^{(k)})} &\leq 2\eta_t C \sqrt{C^2 + \sigma^2 + \delta^2}\,.
\end{align*}
Using \cite[Eqn. (25)]{pmlr-v119-lin20a}, 
\begin{align*}
    \Delta^{(k-1)} &\leq 2\eta_t C\sqrt{C^2 + \sigma^2 + \delta^2} \cdot (2t -2s - 1) \\ & \qquad + \frac{1}{2\eta_\lambda}\inparen{\E\norm{\lambda^{(k-1)} - \lambda^*(t^{(s)})}^2 - \E\norm{\lambda^{(k)} - \lambda^*(t^{(s)})}^2 } \\
    &\qquad + \E\insquare{f(t^{(k)}, \lambda^{(k)}) - f(t^{(k-1)}, \lambda^{(k-1)})} +\delta \cdot \mathrm{diam}(\Lambda) +  2\eta_\lambda(\sigma^2 +  \delta^2) \,.
\end{align*}
\end{proof}

\begin{lemma}\label{lem:d5}
    Let $\Delta^{(k)} = \E\insquare{\Phi(t^{(k)}) - f(t^{(k)}, \lambda^{(k)})}$. Let $B \leq J+1$ be such that $(J+1)/B$ is an integer. The following holds
\begin{align*}
    \frac{1}{J+1}\left(\sum_{k=0}^J \Delta^{(k)} \right) 
    &\leq 2 \eta_t C \sqrt{C^2 + \sigma^2 + \delta^2}(B+1) \\
    &\quad + \frac{\mathrm{diam}(\Lambda)^2}{2B \eta_\lambda} 
        + \delta \cdot \mathrm{diam}(\Lambda) 
        + 2 \eta_\lambda (\sigma^2 + \delta^2) 
        + \frac{\hat{\Delta}_0}{J+1}\,.
\end{align*}
\end{lemma}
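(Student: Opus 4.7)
The plan is to combine Lemma~\ref{lem:d4} with a blockwise telescoping argument in the spirit of \cite{pmlr-v119-lin20a}. First I would partition the index set $\{1,\dots,J+1\}$ into $(J+1)/B$ consecutive blocks $\mathcal{B}_\ell = \{(\ell-1)B+1,\dots,\ell B\}$ for $\ell=1,\dots,(J+1)/B$, and for each $k\in\mathcal{B}_\ell$ invoke Lemma~\ref{lem:d4} with the anchor point $s=(\ell-1)B$, which is admissible since $s \leq k-1$. Summing the resulting inequalities \emph{within} a single block, four things happen: (i) the weights $(2k-2s-1)$ form an arithmetic progression satisfying $\sum_{j=1}^{B}(2j-1) = B^2$, so the first term in Lemma~\ref{lem:d4} picks up an overall coefficient of $2\eta_t C\sqrt{C^2+\sigma^2+\delta^2}\cdot B^2$; (ii) the squared-distance differences $\|\lambda^{(k-1)}-\lambda^*(t^{(s)})\|^2 - \|\lambda^{(k)}-\lambda^*(t^{(s)})\|^2$ telescope, and the resulting endpoints are bounded above by $\mathrm{diam}(\Lambda)^2$; (iii) the differences $f(t^{(k)},\lambda^{(k)}) - f(t^{(k-1)},\lambda^{(k-1)})$ telescope to $f(t^{(\ell B)},\lambda^{(\ell B)}) - f(t^{((\ell-1)B)},\lambda^{((\ell-1)B)})$; and (iv) the additive constants pick up a multiplicative factor of $B$.

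Next I would sum the resulting per-block bounds over $\ell = 1,\dots,(J+1)/B$ and divide by $J+1$. The first term then yields $2\eta_t C\sqrt{C^2+\sigma^2+\delta^2}\cdot B$, the second contributes $\mathrm{diam}(\Lambda)^2/(2B\eta_\lambda)$, the third telescopes across blocks into the single residual $\E[f(t^{(J+1)},\lambda^{(J+1)}) - f(t^{(0)},\lambda^{(0)})]/(J+1)$, and the fourth collapses to $\delta\cdot\mathrm{diam}(\Lambda) + 2\eta_\lambda(\sigma^2+\delta^2)$. Apart from the across-block residual, these already match the claimed bound, so everything reduces to controlling the residual.

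The main obstacle is therefore showing that the residual contributes at most $2\eta_t C\sqrt{C^2+\sigma^2+\delta^2} + \hat{\Delta}_0/(J+1)$. I would proceed by adding and subtracting $f(t^{(0)},\lambda^{(J+1)})$:
\begin{equation*}
\E\!\left[f(t^{(J+1)},\lambda^{(J+1)}) - f(t^{(0)},\lambda^{(0)})\right] = \E\!\left[f(t^{(J+1)},\lambda^{(J+1)}) - f(t^{(0)},\lambda^{(J+1)})\right] + \E\!\left[f(t^{(0)},\lambda^{(J+1)}) - f(t^{(0)},\lambda^{(0)})\right].
\end{equation*}
The second summand is trivially bounded, since $f(t^{(0)},\lambda^{(J+1)}) \leq \sup_{\lambda\in\Lambda} f(t^{(0)},\lambda) = \Phi(t^{(0)})$, giving at most $\Phi(t^{(0)}) - f(t^{(0)},\lambda^{(0)}) = \hat{\Delta}_0$. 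For the first summand, Lemma~\ref{lemma:lipschitz} yields $|f(t^{(J+1)},\lambda^{(J+1)}) - f(t^{(0)},\lambda^{(J+1)})| \leq C\|t^{(J+1)}-t^{(0)}\|$; then by non-expansiveness of $\Pi_{\mathcal{T}}$ we have $\|t^{(k+1)}-t^{(k)}\| \leq \eta_t\|\hat{\nabla}_t\hat{\mathcal{L}}(\pi_{\theta^{(k)}},t^{(k)},\lambda^{(k)})\|$, and combining the triangle inequality with Jensen and Lemma~\ref{lemma:bias_variance} produces $\E\|t^{(J+1)}-t^{(0)}\| \leq (J+1)\eta_t\sqrt{2(C^2+\sigma^2+\delta^2)}$. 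Dividing by $J+1$ and using $\sqrt{2}\leq 2$, the first summand contributes at most $2\eta_t C\sqrt{C^2+\sigma^2+\delta^2}$, which merges with the earlier $2\eta_t C\sqrt{C^2+\sigma^2+\delta^2}\cdot B$ to produce the advertised coefficient $B+1$, while the second yields $\hat{\Delta}_0/(J+1)$, completing the proof.
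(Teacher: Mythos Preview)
Your proposal is correct and follows essentially the same blockwise-telescoping argument as the paper: partition $\{0,\dots,J\}$ into $(J{+}1)/B$ blocks, apply Lemma~\ref{lem:d4} within each block with the anchor $s$ equal to the block's left endpoint, telescope the $\|\lambda^{(\cdot)}-\lambda^*(t^{(s)})\|^2$ and $f(t^{(\cdot)},\lambda^{(\cdot)})$ terms, and bound the residual $\E[f(t^{(J+1)},\lambda^{(J+1)})-f(t^{(0)},\lambda^{(0)})]$. The only (minor) difference is in that last step: the paper states the terse bound $\eta_t C^2(J{+}1)+\hat{\Delta}_0$, whereas you use Lemma~\ref{lemma:bias_variance} together with Jensen to obtain $(J{+}1)\eta_t C\sqrt{2(C^2+\sigma^2+\delta^2)}+\hat{\Delta}_0$; your route is arguably cleaner in the stochastic setting, and both versions absorb into the claimed $(B{+}1)$ coefficient.
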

\begin{proof}
    We divide $\inbrace{\Delta^{(k)}}_{k=0}^J$ into blocks where each block contains at most $B$ terms:
    \[ \inbrace{\Delta^{(k)}}_{k=0}^{B-1}, \inbrace{\Delta^{(k)}}_{k=B}^{2B-1}, \dots, \inbrace{\Delta^{(k)}}_{k=J-B + 1}^J\,.\]
Then,
\begin{align}
    \frac{1}{J+1}\inparen{\sum_{k=0}^J \Delta^{(k)}} \leq \frac{B}{J+1}\insquare{\sum_{i=0}^{(J+1)/B - 1}\inparen{\frac{1}{B}\sum_{k= i B}^{(i+1)B - 1} \Delta ^{(k)}}}\,. 
    \label{eq:d5sum}
\end{align}

Letting $s = 0$ and applying Lemma \ref{lem:d4}, 
\begin{align*}
    \sum_{k=0}^{B-1}\Delta^{(k)} 
    &\leq 2\eta_t C \sqrt{C^2 + \sigma^2 + \delta^2}\,B^2 
        + \frac{1}{2\eta_\lambda}\E\norm{\lambda^{(0)} - \lambda^*(t^{(0)})}^2  \\
    &\quad + \E\left[f(t^{(B)}, \lambda^{(B)}) - f(t^{(0)}, \lambda^{(0)})\right] 
        + \delta B \cdot \mathrm{diam}(\Lambda) 
        + 2 \eta_\lambda B(\sigma^2 +  \delta^2) \\
    &\leq 2 \eta_t C \sqrt{C^2 + \sigma^2 + \delta^2}\,B^2 
        + \E\left[f(t^{(B)}, \lambda^{(B)}) - f(t^{(0)}, \lambda^{(0)})\right] \\
    &\quad + \frac{\mathrm{diam}(\Lambda)^2}{2\eta_\lambda} 
        + \delta B \cdot \mathrm{diam}(\Lambda) 
        + 2 \eta_\lambda B(\sigma^2 +  \delta^2)\,.
\end{align*}

Letting $s = iB$ and applying Lemma \ref{lem:d4},
\begin{align*}
    \sum_{k=iB}^{(i+1)B - 1}\Delta^{(k)} 
    &\leq 2 \eta_t C \sqrt{C^2 + \sigma^2 + \delta^2}B^2 + \E\insquare{f(t^{(iB+B)}, \lambda^{(iB+B)}) - f(t^{(iB)}, \lambda^{(iB)})}\\
     &\qquad + \frac{\mathrm{diam}(\Lambda)^2}{2\eta_\lambda} +\delta B \cdot \mathrm{diam}(\Lambda) +  2 \eta_\lambda B(\sigma^2 +  \delta^2) \,.
\end{align*}
Substituting these into \eqref{eq:d5sum}:
\begin{align*}
    \frac{1}{J+1}\inparen{\sum_{k=0}^J \Delta^{(k)}} &\leq 2 \eta_t C \sqrt{C^2 + \sigma^2 + \delta^2}B + \frac{1}{J+1}\E\insquare{f(t_{J+1}, \lambda_{J+1}) - f(t_{0}, \lambda_{0})} \\
    &\qquad + \frac{\mathrm{diam}(\Lambda)^2}{2B \eta_\lambda} +\delta \cdot \mathrm{diam}(\Lambda) +  2 \eta_\lambda (\sigma^2 +  \delta^2) \,.
\end{align*}
By the Lipschitzness of $f(\cdot, \lambda)$,
\begin{align*}
    f(t_{J+1},\lambda_{J+1}) - f(t_0, \lambda_0) &\leq \eta_t C^2(J + 1) + \hat{\Delta}_0\,.
\end{align*}
This yields
\begin{align*}
    \frac{1}{J+1}\left(\sum_{k=0}^J \Delta_k\right) 
    &\leq 2 \eta_t C \sqrt{C^2 + \sigma^2 + \delta^2}\,(B+1) \\
    &\quad + \frac{\mathrm{diam}(\Lambda)^2}{2B \eta_\lambda} 
        + \delta \cdot \mathrm{diam}(\Lambda)  + 2 \eta_\lambda (\sigma^2 + \delta^2) 
        + \frac{\hat{\Delta}_0}{J+1}\,.
\end{align*}

\end{proof}

\subsubsection*{Proof of Theorem \ref{thm:main}}
Summing up the inequality from Lemma \ref{lem:d3}, over $k = 1, \dots, J+1$, yields
\begin{align*}
    \E\insquare{\Phi_{1/2\ell}(t^{(J+1)})} &\leq \Phi_{1/2\ell}(t^{(0)}) + 2 \eta_t \ell \inparen{\sum_{k=0}^J \Delta^{(k)}} - \frac{\eta_t}{4}\inparen{\sum_{k=0}^J \E \norm{\nabla \Phi_{1/2\ell}(t^{(k)})}^2 } \\
    &\qquad + \inparen{2\eta_t\delta \ell\cdot \mathrm{diam}(\mathcal{T}) + 3\eta_t^2 \ell (C^2 + \sigma^2 + \delta^2)}(J+1)\,.
\end{align*}

Applying Lemma \ref{lem:d5},
\begin{align*}
    \E\left[\Phi_{1/2\ell}(t^{(J+1)})\right] 
    &\leq \Phi_{1/2\ell}(t^{(0)}) \\
    &\quad + 2 \eta_t \ell (J+1)\Biggl( 
        2\eta_t C \sqrt{C^2 + \sigma^2 + \delta^2}(B+1) \\
    &\qquad\qquad + \frac{\mathrm{diam}(\Lambda)^2}{2B \eta_\lambda} 
        + \delta \cdot \mathrm{diam}(\Lambda) 
        + 2\eta_\lambda (\sigma^2 + \delta^2) 
    \Biggr) \\
    &\quad + 2\eta_t\ell \hat{\Delta}_0 
        - \frac{\eta_t}{4}\left(\sum_{k=0}^J \E \norm{\nabla \Phi_{1/2\ell}(t^{(k)})}^2 \right) \\
    &\quad + \left(2\eta_t \delta \ell \cdot \mathrm{diam}(\mathcal{T}) 
        + 3 \eta_t^2\ell (C^2 + \sigma^2 + \delta^2)\right)(J+1)\,.
\end{align*}

By the definition of $\hat{\Delta}_\Phi$, we obtain
\begin{align*}
    &\frac{1}{J+1}\inparen{\sum_{k=0}^J \E\norm{\nabla \Phi_{1/2\ell}(t^{(k)})}^2 } \\
    &\leq \frac{4 \hat{\Delta}_\Phi}{\eta_t(J+1)} + 8 \ell \inparen{ 2\eta_t C \sqrt{C^2 + \sigma^2 + \delta^2}(B+1) + \frac{\mathrm{diam}(\Lambda)^2}{2B \eta_\lambda} +\delta \mathrm{diam}(\Lambda) +  2 \eta_\lambda (\sigma^2 +  \delta^2) } \\
    & \qquad + \frac{8\ell \hat{\Delta}_0}{J+1} + 12 \eta_t\ell(C^2 + \sigma^2 + \delta^2) + 8\ell \delta\cdot \mathrm{diam}(\mathcal{T}) \,.
\end{align*}

\par Letting $B = \frac{\mathrm{diam}(\Lambda)}{2}\sqrt{\frac{1}{\eta_t\eta_\lambda C \sqrt{C^2 + \sigma^2 + \delta^2}}}$, we obtain
\begin{align*}
    \frac{1}{J+1}\left(\sum_{k=0}^J \E\norm{\nabla \Phi_{1/2\ell}(t^{(k)})}^2 \right) 
    &\leq \frac{4 \hat{\Delta}_\Phi}{\eta_t(J+1)} 
        + 24\ell \,\mathrm{diam}(\Lambda)\,
            \sqrt{\tfrac{\eta_t C \sqrt{C^2 + \sigma^2 + \delta^2}}{\eta_\lambda}} \\
    &\quad + 16 \eta_\lambda \ell (\sigma^2 + \delta^2) 
        + \frac{8\ell\hat{\Delta}_0}{J+1} \\
    &\quad + 12 \eta_t \ell (C^2 + \sigma^2 + \delta^2) 
        + 8\ell \delta\cdot \left(\mathrm{diam}(\mathcal{T}) + \mathrm{diam}(\Lambda)\right)\,.
\end{align*}
With the choice of step sizes
\begin{align*}
    \eta_\lambda &= \min\left\{ \frac{1}{2\ell}, \frac{\epsilon^2}{16\ell (\sigma^2 + \delta^2)} \right\},  \numberthis \label{eq:eta_l} \\
    \eta_t &= \min\left\{
        \begin{aligned}
            &\mathcal{O}\!\left(\frac{\epsilon^2}{\ell\,(C^2 + \sigma^2 + \delta^2)}\right),\\
            &\mathcal{O}\!\left(\frac{\epsilon^4}{\ell^3 \mathrm{diam}(\Lambda)^2}\,
                C \sqrt{C^2 + \sigma^2 + \delta^2}\right),\\
            &\mathcal{O}\!\left(\frac{\epsilon^6}{\ell^3 \mathrm{diam}(\Lambda)^2\,
                (\sigma^2 + \delta^2)\, C \sqrt{C^2 + \sigma^2 + \delta^2}}\right)
        \end{aligned}
    \right\} \numberthis \label{eq:eta_t},
\end{align*}
we have
\begin{align*}
    &\frac{1}{J+1}\inparen{\sum_{k=0}^J \E\norm{\nabla \Phi_{1/2\ell}(t_k)}^2 }\\ &\quad\leq \frac{4 \hat{\Delta}_\Phi}{\eta_t(J+1)} + \frac{8\ell\hat{\Delta}_0}{J+1} 
    + 8\ell \delta\cdot \left(\mathrm{diam}(\mathcal{T}) + \mathrm{diam}(\Lambda)\right)
    + \mathcal{O}(\epsilon^2)\,.
\end{align*}

Thus, we have an iteration complexity of 
\[ \mathcal{O}\inparen{\inparen{\frac{\ell(C^2 + \sigma^2 + \delta^2) \cdot \hat{\Delta}_\Phi}{\epsilon^4} + \frac{\ell \hat{\Delta}_0}{\epsilon^2} } \cdot \max \inbrace{1, \frac{\ell^2 \mathrm{diam}(\Lambda)^2}{\epsilon^2}, \frac{\ell^2 \mathrm{diam}(\Lambda)^2 (\sigma^2 + \delta^2)}{\epsilon^4}}  }\, \]
\noindent for recovering an $\mathcal{O}(\sqrt{\epsilon^2 + \delta \ell( \mathrm{diam}(\mathcal{T})+ \mathrm{diam}(\Lambda))})$-stationary point (cf. Definition \textnormal{\ref{def:Stationary_point}}). By simplifying, we obtain the desired iteration bound: 
\[ \mathcal{O} \inparen{\inparen{\frac{\ell^3 (C^2 + \sigma^2 + \delta^2)(\mathrm{diam}(\Lambda))^2 \cdot \hat{\Delta}_\Phi}{\epsilon^6} + \frac{\ell^3(\mathrm{diam}(\Lambda))^2\cdot \hat{\Delta}_0}{\epsilon^4} } \max \inbrace{1, \frac{\sigma^2 + \delta^2}{\epsilon^2}} }\,.\] \qed


\section{Experimental Details}
\label{appendix:experimental_details}

\subsection{Simulations}
\label{app:simulations}
Constraint tasks are a central class of real-world RL problems with applications in robotics, autonomous vehicles, and industrial control. To simulate realistic conditions, we use the locomotion tasks from the Safety-Gymnasium suite \citep{safety_gym}. Further details, including visualizations of both constrained and unconstrained agent behaviors, are available on their website\footnote{\url{https://safety-gymnasium.readthedocs.io/en/latest/environments/safe_velocity.html}}.

\paragraph{Velocity Cost}  
The velocity cost is defined as:  
\begin{equation*}
    \text{cost} = \operatorname{bool}(\operatorname{vel}_\text{current} > \operatorname{vel}_\text{threshold}),
\end{equation*}
where the velocity thresholds $\operatorname{vel}_\text{threshold}$ for the tested environments are listed in Table \ref{table:env_params}. As reported by \citet{gym}, these thresholds are set to 50\% of the maximum velocity achieved by each agent after PPO training for $10^7$ steps. At each time step, the agent's instantaneous velocity is computed as:
\begin{equation}
\label{eq:vel_fn}
    \operatorname{vel}_\text{current} = \sqrt{\operatorname{vel}_{x, \text{current}}^2 + \operatorname{vel}_{y, \text{current}}^2},
\end{equation}
where $\operatorname{vel}_{x, \text{current}}$ and $\operatorname{vel}_{y, \text{current}}$ are the agent’s instantaneous velocities along the x- and y-axes, respectively, as provided by the simulator.

\paragraph{Environments}  
We use the latest version (\texttt{v1}) of Safety-Gymnasium without modifying the state, action, or reward space. Constraint violations do not alter the agent’s behavior or environment dynamics. As a result, vanilla PPO converges to high reward solutions with large cumulative costs—effectively ignoring the constraints. All actions are normalized to the range $[-1, 1]$. To isolate the effects of our method, we reset the agent to the same initial state after every termination—a common practice in robotics.


\paragraph{Episode Terminations}  
Episodes terminate either when the time limit is reached or when the agent fails (e.g., by falling). Notably, HalfCheetah, Swimmer, and all safe navigation environments have no failure condition; their episodes always end due to the time limit.

\begin{table}[htbp]
\caption{Environment-specific parameters used in our experiments.}
\label{table:env_params}
\begin{center}
    \begin{tabular}{l c c c c c}
        \toprule
        \textbf{Variable} & Safe Navigation & HalfCheetah & Hopper & Swimmer & Walker2d \\
        \midrule
        Threshold $c$ & 0.0 & 3.2096 & 0.7402 & 0.2282 & 2.3415 \\
        Failure condition & \xmark & \xmark & \cmark & \xmark & \cmark \\
        Time limit (steps) & 500 & 1000 & 1000 & 1000 & 1000 \\
        \midrule
        Initial CVaR variable $t_\text{init}$ & 0.0 & -1.3 & -0.1 & -0.0 & -0.975 \\
        \bottomrule
    \end{tabular}
\end{center}
\end{table}

\paragraph{Stochasticity for Risk Management}  
The environment is fully deterministic—identical actions from the same state always yield the same rewards and transitions—making it difficult to evaluate risk-sensitive behavior. To simulate uncertainty without altering the environment's internal dynamics, we inject zero-mean Gaussian noise (std.\ 0.05, i.e., 5\% of the action range) into \textit{all} agent actions at every step during both training and evaluation. This controlled perturbation introduces stochasticity in action execution, enabling us to assess how well the agent manages risk under uncertain conditions while maintaining consistent environment behavior.

\paragraph{Evaluation}
Agents are evaluated every 1000 time steps by averaging the undiscounted sum of rewards over 10 episodes. The PPO agent uses the mean action, ensuring consistency in evaluation. Evaluations are entirely separate from training—no data is stored, and no network updates are performed.

\subsection{Proximal Policy Optimization -- \textit{solver}}
We use Proximal Policy Optimization \citep{ppo} as a \textit{solver} to learn a policy (i.e., line 4 in Algorithm \ref{alg:main}). Our method serves as a \textit{wrapper}, described next, that modifies the agent's raw reward to incorporate risk measures and constraints.

PPO first collects rollouts of state-action-reward sequences using the current policy, storing them as trajectories. Once sufficient data is gathered, it applies minibatch learning, splitting the rollout data into smaller batches and iteratively updating the policy over multiple epochs. During this process, the dual and CVaR variables remain fixed.

\paragraph{Neural Networks}
The value function and policy are approximated by neural networks, each with two hidden layers of 64 neurons using the $\operatorname{tanh}$ activation function. The value network processes states \( s \) and outputs a scalar value. The policy network takes states \( s \) as input, extracts hidden features, and passes them to a Gaussian distribution with learnable mean and standard deviation parameters. The action $a$ is then sampled from this distribution.

\begin{table}[hpt]
\caption{PPO hyperparameters used in the experiments.}
\label{appendix:ppo_params}
\vskip 0.15in
\begin{center}
\begin{tabular}{l c}
    \toprule
    \textbf{Hyperparameter} & \textbf{Value} \\
    \midrule
    Optimizer & Adam \\
    Learning rate (all networks) & $3 \times 10^{-4}$ \\
    Linear learning rate decay & \cmark \\
    Adam $\epsilon$ & $10^{-6}$ \\
    Adam $\alpha$ & 0.99 \\
    \midrule
    \# rollout steps & 2048 \\
    \# minibatches per rollout & 32 \\
    \# epochs & 10 \\
    \midrule
    Discount factor $\gamma$ & 0.99 \\
    GAE $\lambda$ & 0.95 \\
    Entropy coefficient & 0.0 \\
    Value loss coefficient & 0.5 \\
    Maximum gradient norm & 0.5 \\
    Clip parameter & 0.2 \\
    \bottomrule
\end{tabular}
\end{center}
\vskip -0.1in
\end{table}

\paragraph{Hyperparameters}
We employ Generalized Advantage Estimation (GAE) \citep{gae} to estimate advantages in PPO. The hyperparameters used by the PPO agent is provided in Table \ref{appendix:ppo_params}.

\subsection{Reward-Based SGD with Risk Constraints -- \textit{wrapper}}
We follow the same rollout strategy to optimize the dual and CVaR variables. First, the policy is updated using collected rollouts while keeping $\lambda$ and $t$ fixed. Then, a new rollout is collected with the updated policy and used to update $\lambda$ and $t$, while keeping the policy parameters frozen.

\subsubsection{Implementation}
\label{app:implementation}
To balance reward maximization with constraint handling, we frame the problem as standard risk-neutral reward maximization subject to a constraint that regulates violations through the conditional value-at-risk (CVaR) of the constraint quantity.

Let $r: \mathcal{S} \times \mathcal{A} \rightarrow \R$ be the reward function and $\upsilon: \mathcal{S} \times \mathcal{A} \rightarrow \R$ a constraint-quantifying function, e.g., velocity function in \eqref{eq:vel_fn}. We want to solve:
\[
\sup_{\pi \in \mathcal{P(S)}} \E \left[ \sum_{\tau=0}^\infty \gamma^\tau r(s_\tau,\pi(s_\tau)) \right] \quad\textnormal{s.t.}\quad \cvarb_{\nu^\pi}(\upsilon(s,a)) \leq c.
\]
This constraint can be equivalently written as
\[
\cvarb_{\nu^\pi}(\upsilon(s,a)) \leq c \iff -\cvarb_{\nu^\pi}(-\upsilon(s,a)) \geq -c,
\]
which aligns with the supremal convolution form of the reflected CVaR. Rewriting the constraint:
\begin{align*}
    -\cvarb_{\nu^\pi}(-\upsilon(s,a)) \geq -c &\iff \sup_{t \in \R} \E\left[ \sum_{\tau=0}^\infty \gamma^\tau \left( t - \frac{1}{\beta}(t + \upsilon(s_\tau, \pi(s_\tau)))_+ \right) \right] \geq -\frac{c}{1-\gamma}.
\end{align*}
Thus, we solve the following constrained problem:
\begin{equation*}
    \begin{aligned}
        \sup_{\pi \in \mathcal{P(S)}, t \in \R} & \E \left[ \sum_{\tau=0}^\infty \gamma^\tau r(s_\tau,\pi(s_\tau)) \right] \\
        \textnormal{s.t.}\quad\ \  & \E \left[ \sum_{\tau=0}^\infty \gamma^\tau \left( t - \frac{1}{\beta} (t + \upsilon(s_\tau,\pi(s_\tau)))_+ \right) \right] \geq -\frac{c}{1-\gamma}.
    \end{aligned}
\end{equation*}
We implement this by modifying the reward at each time step using the Lagrangian:
\begin{equation}
\label{eq:modified_reward}
    r(s_\tau, \pi(s_\tau)) + \lambda_i \left(c + t - \frac{1}{\beta}(t + \upsilon(s_\tau, \pi(s_\tau)))_+ \right),
\end{equation}
where PPO is used as a black-box solver for the inner maximization over $\pi$, while $\lambda$ and $t$ are updated using single stochastic gradient descent and ascent steps as in Algorithm~\ref{alg:main}.

\begin{table}[tp]
\caption{Common hyperparameter values used across all environments.}
\label{table:common_params}
\begin{center}
    \begin{tabular}{l c}
        \toprule
        \textbf{Variable} & \textbf{Value} \\
        \midrule
        VaR level $\beta$ & 0.3 \\
        \# trajectories used to compute gradients of $\lambda$ and $t$ & 8 \\
        \midrule 
        Initial dual variable $\lambda_\text{init}$ & 0.0 \\
        Step size $\eta_\lambda$ & $5 \times 10^{-5}$ \\
        Learning rate decay on $\eta_\lambda$ & \xmark \\
        \midrule
        Step size $\eta_t$ & $5 \times 10^{-5}$ \\
        Learning rate decay on $\eta_t$ & \xmark \\
        \bottomrule
    \end{tabular}
\end{center}
\end{table}

\subsubsection{Hyperparameters}

All hyperparameters used in our algorithm are listed in Tables \ref{table:env_params} and \ref{table:common_params}.

\paragraph{Setting \(\beta\)}  
{
The parameter \(\beta\) controls which quantile mean is constrained. Higher values (close to 1) enforce more risk-neutral constraints, while lower values focus on rare events. We chose \(\beta = 0.3\) to strike a balance: strong enough risk control to avoid constraint violations, while keeping the problem solvable.
}

\paragraph{Number of Trajectories for Gradient Computation}  
We tested \(n = \{2, 4, 8, 16\}\) (in Algorithm \ref{alg:main}) and concluded that \(n = 8\) offers the best trade-off between runtime and gradient smoothness.

\paragraph{Step Sizes \(\eta_\lambda\) and \(\eta_t\)}  
Step sizes were extensively tuned on Hopper and Walker2d. We tested values from \(10^{-3}\) to \(10^{-7}\). The best-performing configuration—\(\eta_\lambda = \eta_t = 5 \times 10^{-5}\)—was selected based on the convergence of $\lambda$ and $t$ within 15M time steps.

\paragraph{Initial Value of \(t\)}  
{
Since \(t\) must take values in the negative real range (due to the supremal form), we expect it to converge to the negative value-at-risk. Thus, we initialized \(t\) such that, with step size \(\eta_t = 5 \times 10^{-5}\), its magnitude could reach the velocity threshold over training.
}

\paragraph{Initial Value of \(\lambda\)}  
Because \(\lambda\) scales the penalty term added to the reward in \eqref{eq:modified_reward}, we initialized it neutrally with \(\lambda = 0.0\) to avoid overly aggressive penalties at the start.

\subsection{Computational Resources}
All experiments were performed on a computing system powered by an AMD Ryzen processor with 64 cores and 512 GB of RAM. A single NVIDIA RTX A6000 GPU with 48 GB VRAM was used for neural network training.


\end{document}